\let\OriginalAddContentsLine\addcontentsline
\theoremstyle{plain}
\newtheorem{theorem}{Theorem}[section]
\newtheorem{lemma}[theorem]{Lemma}
\newtheorem{corollary}[theorem]{Corollary}
\newtheorem{definition}[theorem]{Definition}
\newtheorem{assumption}[theorem]{Assumption}
\theoremstyle{remark}
\icmltitlerunning{Understanding Multimodal LLMs Under Distribution Shifts: An Information-Theoretic Approach}
\begin{document}

\twocolumn[
\icmltitle{Understanding Multimodal LLMs Under Distribution Shifts: \\ An Information-Theoretic Approach}

\icmlsetsymbol{equal}{*}

\begin{icmlauthorlist}
\icmlauthor{Changdae Oh}{a1}
\icmlauthor{Zhen Fang}{a2}
\icmlauthor{Shawn Im}{a1}
\icmlauthor{Xuefeng Du}{a1}
\icmlauthor{Yixuan Li}{a1}
\end{icmlauthorlist}

\icmlaffiliation{a1}{Department of Computer Sciences, University of Wisconsin--Madison, WI, USA}
\icmlaffiliation{a2}{Faculty of Engineering \& Information Technology, University of Technology Sydney, Sydney, Australia}

\icmlcontact{Changdae Oh}{changdae@cs.wisc.edu}
\icmlcorrespondingauthor{Yixuan Li}{sharonli@cs.wisc.edu}

\vskip 0.3in]

\printAffiliationsAndNotice{}

\begin{abstract}
Multimodal large language models (MLLMs) have shown promising capabilities but struggle under distribution shifts, where evaluation data differ from instruction tuning distributions. 
Although previous works have provided empirical evaluations, we argue that establishing a formal framework that can characterize and quantify the risk of MLLMs is necessary to ensure the safe and reliable application of MLLMs in the real world. 
By taking an information-theoretic perspective, we propose the first theoretical framework that enables the characterization of the maximum risk of MLLMs under distribution shifts. Central to our framework is the introduction of Effective Mutual Information (EMI), a principled metric that quantifies the relevance between input queries and model responses. We then derive an upper bound for the EMI difference between in-distribution (ID) and out-of-distribution (OOD) data, connecting it to visual and textual distributional discrepancies.
Extensive experiments on real benchmark datasets, spanning 61 shift scenarios, empirically validate our theoretical insights. \hfill \texttt{Code}: \href{https://github.com/deeplearning-wisc/mllmshift-emi}{\faGithub}
\end{abstract}
\section{Introduction} \label{sec:1_introduction}
Multimodal large language models (MLLMs) have demonstrated remarkable capabilities in handling complex tasks that require reasoning over both visual and textual modalities. By leveraging visual instruction tuning~\cite{liu2023visual,dai2023instructblip,zhu2024minigpt}, 
MLLMs have shown promise in answering open-ended questions and generating contextually relevant captions.
As a critical aspect for real-world deployment, MLLMs are expected to operate robustly in the wild, where \textit{distribution shifts} occur---that is, when the evaluation data deviates from the instruction tuning data, whether due to changes in visual inputs (e.g, domain-specific images), textual inputs (e.g., linguistic variations), or the combination thereof. However, negative reports on MLLM failures in edge cases have steadily emerged, raising concerns about their reliability.

For example, MLLMs struggle with queries in specialized domains such as medicine and chemistry~\cite{zhang2024out,han2024well,zhou2024adapting}, perform poorly on simple image classification tasks compared to open-ended question answering~\cite{zhang2024vlmclf, zhai2024investigating}, and exhibit hallucinations to biased queries~\cite{li2023evaluating, ye2025beaf}. Given the increasing impact of MLLMs, it is crucial to understand their failure modes under distribution shifts. 
Despite the significance of the problem, existing works often lack a fine-grained diagnosis for various factors of shifts. More importantly, the absence of a formal framework to explain the underlying principle further hinders the systematic understanding of an MLLM's behavior. This motivates us to raise the research question:
\begin{center}
    \textit{\textbf{Could we derive a theoretical framework to characterize MLLM's behavior under distribution shifts?}}
\end{center}

To address this, we propose an information-theoretic framework that characterizes MLLM performance under distribution shifts with theoretical rigor and practical interpretability. Our framework is well-suited for analyzing instruction-tuned models, wherein the purpose of learning is naturally connected to the mutual information between the input query and response. In this framework, 
we introduce \emph{effective mutual information} (EMI) as a principled measure to access the relevance between an input query and model response. Intuitively, EMI is expected to be higher when a test input query originates from the in-distribution (ID) data, similar to those used during instruction tuning, compared to when the input comes from out-of-distribution (OOD). To quantify this performance gap, we compute the difference in EMI between ID and OOD and derive an upper bound for this difference, expressed in terms of distributional discrepancies in input and output spaces (see Theorem~\ref{thm:emid_bound_simple} and~\ref{thm:emid_bound}). By grounding the measure in information theory, we provide the first theoretical framework to analyze and understand the impact of distribution shifts on MLLM performance.

Beyond the theoretical rigor, we further show that our framework holds practical value. In particular, we demonstrate that EMI is closely related to a widely used empirical metric for MLLMs, relative preference score under the LLM-as-a-judge paradigm~\cite{zheng2023judging}. The relative preference score commonly relies on an external judge model, e.g., GPT-4o~\cite{hurst2024gpt}, thus it lacks mathematical guarantees due to the black-box nature of these models. In contrast, EMI provides an \emph{theory-grounded measurement} of the relevance between input queries and output responses of the MLLM being evaluated, offering a fundamental basis for assessing the MLLM's performance gap under shifts.

Finally, we conduct comprehensive validation for the theoretical framework and show that our theorems empirically hold in real-world benchmarks. Our experiments comprehensively examine 34 synthetic and 27 natural distribution shift scenarios, resulting in a total number of 61 ID-OOD evaluations for each MLLM. Results confirm strong correlations between empirical estimates of EMI and relative preference, as well as correlations between the EMI difference and its upper bounds, demonstrating the effectiveness of our framework in capturing performance gaps under diverse shifts. Our contributions can be summarized as follows:
\begin{itemize}
    \item We propose a new framework, effective mutual information (EMI), to analyze MLLM under distribution shift, and justify the use of EMI by showing the theoretical connection between EMI and the LLM-judge-driven relative preference score.
    \vspace{-0.25em}
    \item We derive theoretical upper bounds of MLLM's performance gap, which can be characterized by shifts over multimodal input queries and output discrepancies.
    \vspace{-0.25em}
    \item We empirically verify our theoretical statements on 61 real-world distribution shift scenarios of open-ended question-answering benchmarks with six MLLMs.
\end{itemize}
\section{Preliminary} \label{sec:preliminary}
\paragraph{Random variable and distribution.} Let $\mathcal{X} = \mathcal{X}_v \times \mathcal{X}_t$ denote the input space, where $\mathcal{X}_v$ and $\mathcal{X}_t$ correspond to the visual and textual feature spaces, respectively. Similarly, let $\mathcal{Y}$ denote the response space. We define the random variables $\mathbf{X} = (X_v, X_t) \in \mathcal{X}$ and $Y \in \mathcal{Y}$, where $\mathbf{X}$ is the sequence of tokens that combine visual and text input queries, and $Y$ represents the associated response tokens. The joint distribution is denoted by $P_{\mathbf{X}Y}$, with marginals $P_{\mathbf{X}}$, $P_{Y}$, and the conditional distribution $P_{Y|\mathbf{X}}$. For subsequent sections, $P_{\mathbf{X}Y}$ refers to the instruction tuning data distribution, which we consider as in-distribution (ID). 

\paragraph{MLLM and visual instruction tuning.} MLLM usually consists of three components: (1) a visual encoder, (2) a vision-to-language projector, and (3) an LLM backbone that processes a multimodal input sequence to generate a valid textual output $y$ in response to an input query $\mathbf{x}$. An MLLM can be regarded as modeling a conditional distribution $P_{\theta}(y|\mathbf{x})$, where $\theta$ is the model parameters. To attain the multimodal conversation capability, MLLMs commonly undergo a phase, so-called \textit{visual instruction tuning} \cite{liu2023visual}, with a conditional language modeling loss:
{
\begin{align} \label{eq::1}
     \arg\min_{\theta\in\Theta} \mathbb{E}_{\mathbf{x},y\sim P_{\mathbf{X}Y}} [\sum_{l=1}^{L}-\log P_{\theta}(y_{l}|\mathbf{x},y_{<l})],
\end{align}}
where $L$ is a sequence length and $y=(y_{1},...,y_{L})$. After being trained by Eq. \eqref{eq::1}, MLLM produces a response given a query of any possible tasks represented by text.

\paragraph{Evaluation of open-ended generations.} 

(M)LLM-as-a-judge method \cite{ouyang2022training,zheng2023judging, kim2023prometheus} is commonly adopted to evaluate open-ended generation. In this paradigm, a judge model produces preference scores or rankings for the responses given a query, model responses, and a scoring rubric. Among the evaluation metrics, the \emph{relative preference score} (RP score; Eq.~\eqref{eq:rp}) is one of the most representative ones.

\begin{definition}[\textbf{Relative Preference Score}] Given a reward function $r:\mathcal{X}\times \mathcal{Y}\rightarrow \mathbb{R}$, the 
relative preference (RP) score of model $P_{\theta}$ w.r.t. $P_{\mathbf{X}Y}$ are defined as follows: 
\begin{equation} \label{eq:rp}
\begin{split}
    &\text{RP}(P_{\mathbf{X}Y};P_\theta):=\mathbb{E}_{\begin{subarray}{l} \mathbf{x},y \sim P_{\mathbf{X}Y} \\ \hat{y} \sim P_{\theta}(\cdot|\mathbf{x}) \end{subarray}}[r(\mathbf{x},\hat{y}) / r(\mathbf{x},y)].
\end{split}
\end{equation}
\end{definition}
Here, the reward function $r$ can be any possible MLLMs, such as GPT-4o~\cite{hurst2024gpt} or discriminative language models~\cite{lambert2024rewardbench}, and we often take an output from another MLLM (usually more powerful than $P_{\theta}$) as a reference answer $y$.

\section{Motivation} \label{sec:motivation}

\paragraph{A systematic understanding of MLLM under distributional shifts.} While instruction-following MLLMs are designed to handle a diverse range of tasks, they often struggle with specialized domains \cite{zhang2024out, zhou2024adapting}, perform poorly on simple image classification tasks \cite{zhai2024investigating, zhang2024vlmclf}, and hallucinate to biased queries \cite{li2023evaluating, ye2025beaf}. We argue that the fundamental cause of these failure modes in MLLMs can be traced back to distribution shifts. Specifically, poor performance on classification tasks and specialized distributions can be attributed to shifts between instruction tuning distribution $P_{\mathbf{X}Y}$ and evaluation distribution $Q_{\mathbf{X}Y}$.
This work comprehensively analyzes three types of distributional shifts that can arise in MLLM as follow:
\begin{figure*}[!t]
    \vspace{-0.05em}
    \centering
    \includegraphics[width=0.8\linewidth]{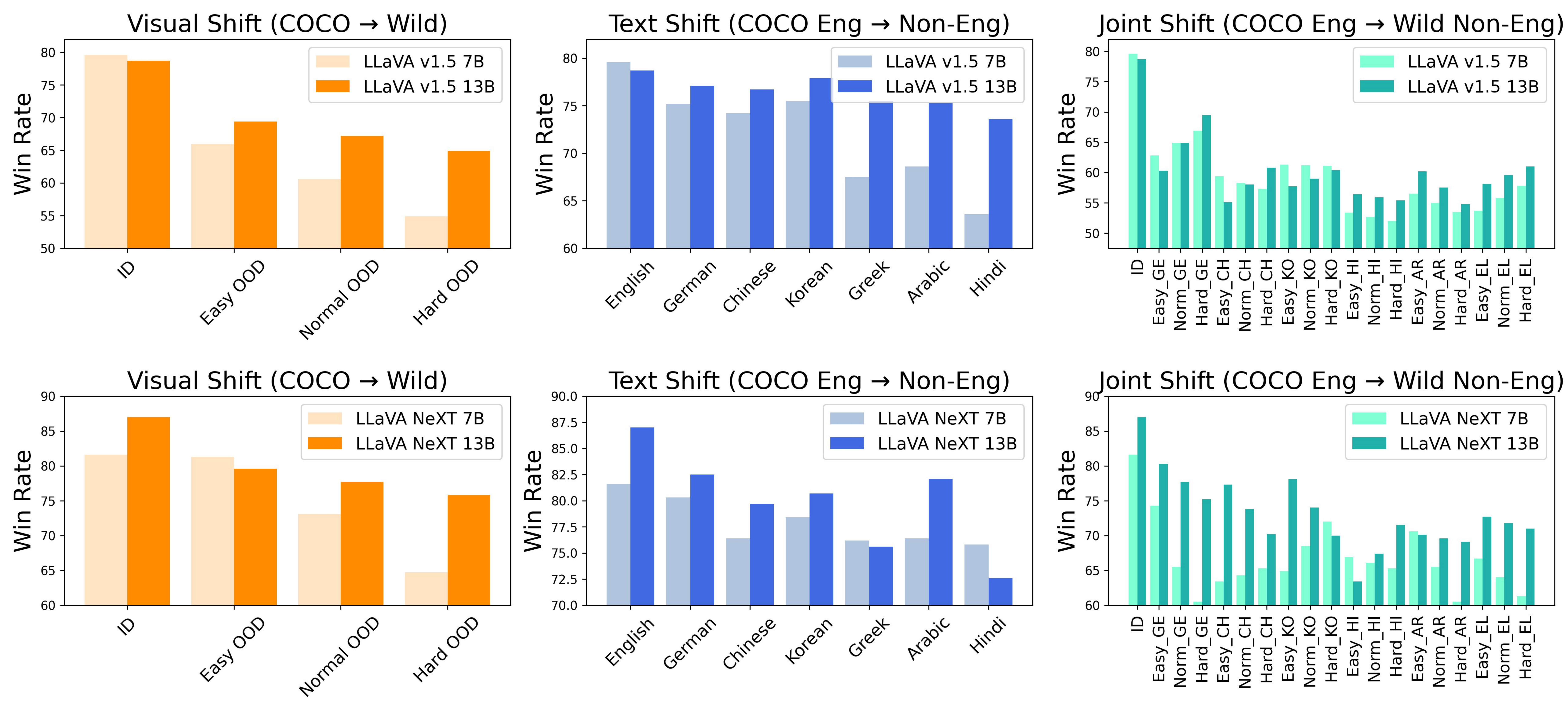}
    \vspace{-1.6em}
    \caption{\textbf{Performance variation against varying distribution shifts.} We evaluated LLaVA v1.5 (top) and LLaVA NeXT (bottom) models on 27 out-of-distribution (OOD) variants of the LLaVA-Bench COCO (ID). Here, the $x$-axis is sorted by the severity of shifts between ID and OOD. There is a consistent trend, increased degrees of distribution shifts result in performance degradations of MLLM.}
    \label{fig:pilot_study}
    \vspace{-1.1em}
\end{figure*}
\begin{figure}[h]
    \centering
    \includegraphics[width=\linewidth]{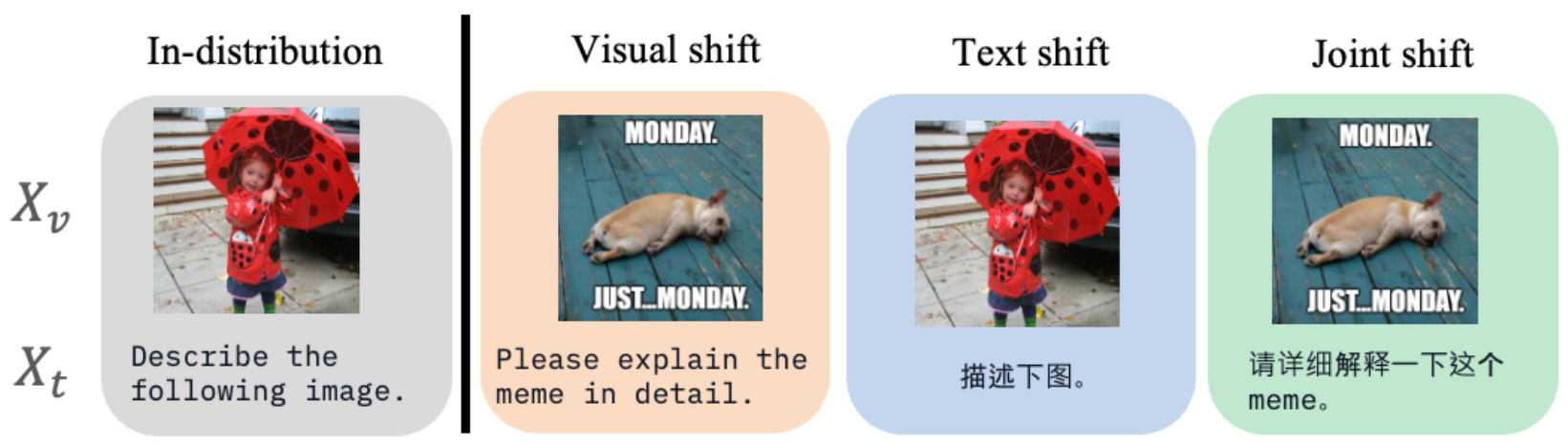}
    \vspace{-1.5em}
    \caption{\textbf{Types of distribution shifts between train and evaluation of MLLMs.} We simulate visual, text, and joint shifts by controlling the shift of each input modality.}
    \label{fig:shifts}
    \vspace{-0.3em}
\end{figure}
\begin{enumerate}
    \item \textit{\textbf{Visual shift}}: the marginal distribution of visual query undergoes shift $D(P_{X_{v}}\|Q_{X_{v}}) \gg 0$, while that of text query remains largely unchanged $D(P_{X_{t}}\|Q_{X_{t}}) \approx 0$.
    \item \textit{\textbf{Text shift}}: the marginal distribution of text query undergoes shift $D(P_{X_{t}}\|Q_{X_{t}}) \gg 0$, while that of visual query remains largely unchanged $D(P_{X_{v}}\|Q_{X_{v}})\approx 0$.
    \item \textit{\textbf{Joint shift}}: both visual and text queries suffer shifts simultaneously, and the relationship between visual and text queries may also shift $D(P_{\mathbf{X}}\|Q_{\mathbf{X}}) \gg 0$,
\end{enumerate}
where $D$ denotes a divergence that measures the discrepancy between distributions $P$ and $Q$. For $M=(P+Q)/2$, one can measure the Kullback-Leibler (KL) divergence and Jensen-Shannon (JS) divergence as below:
\begin{equation*}
    \begin{split}
       & D_{\rm KL}(P\|Q)= \mathbb{E}_{\mathbf{z}\sim P} [\log {P(\mathbf{z})}/{Q(\mathbf{z})}],\\ 
       &D_{\rm JS}(P\|Q)= [{D_{\rm KL}(P\|M)+D_{\rm KL}(Q\|M)}]/2.
    \end{split}
\end{equation*}

\paragraph{Pilot study.} 
We hypothesize that: (1) performance degradation in MLLMs becomes more severe as $Q_{\mathbf{X}Y}$ deviates further from the $P_{\mathbf{X}Y}$; (2) the amount of total performance degradation can be factored into visual query shift and text query shift. To test these hypotheses, we design three types of shifts---visual shift, text shift, and joint shift---illustrated in Figure \ref{fig:shifts}, and evaluate MLLMs under these shifts. 

Specifically, we adopt LLaVA-1.5 \cite{liu2023visual} and LLaVA-NeXT \cite{liu2024improved} in 7B and 13B sizes as our target MLLM, with LLaVA-Bench COCO \cite{liu2023visual} serving as the ID dataset, which is distributionally similar to the instruction tuning data. 
We adopt LLaVA-Bench Wild \cite{liu2023visual} to vary visual input semantics, and we apply language translation with GPT-4, e.g., from English to $\{$Chinese, German, Chinese, Korean, Hindi, Arabic, and Greek$\}$, to realize shifts in text query. We vary the severity of shifts by controlling the magnitude of perturbations in synthetic shift setup and partitioning a dataset based on the mean embedding distance from ID samples in natural shift setup. Following~\citet{liu2023visual}, we evaluate the performance using RP score (Eq. \eqref{eq:rp}) with GPT-4 judge.

Figure \ref{fig:pilot_study} shows the performance variations of MLLMs under different types and magnitudes of distribution shifts, where the $x$-axis is sorted by the severity of shifts (more results from different types of shifts can be founded in Appendix \ref{appendix:experiment}).
Across all models, a consistent trend emerges: as the severity of the shift increases, the performance degradation becomes more significant. This trend robustly holds for both visual and text shifts. Joint shifts result in greater performance degradation, suggesting a complementary effect of shifts across modalities.  {\textit{These consistent observations suggest that there might exist an underlying principle explaining the relationship between performance variation and distributional discrepancy, which motivates us to investigate the theoretical model behind these empirical results.}}

\vspace{-0.2cm}
\paragraph{Our position.} Although there have been similar observations on the performance degradation of MLLM under distribution shifts \cite{achiam2023gpt, zhang2024out, zhou2024adapting, zhang2024vlmclf}, all of them present only the coarse empirical evaluation results without finer analysis on the underlying factor of those performance degradations. To the best of our knowledge, there is no formal framework to explain the performance variations of MLLMs in terms of distribution shifts--- despite its crucial importance for ensuring reliable applications of MLLMs. To bridge this gap, we propose the \textbf{\textit{{first theoretical framework that characterizes MLLM performance variations under distribution shifts from an information-theoretic perspective}}}.
\section{Information-theoretic Analysis on MLLM Performance Gap} \label{sec:theoretical_analysis}
In this section, we start with introducing the mutual information (MI) and its limitation as a metric in Sec. \ref{sec:3_0_mi_for_eval}, and present a new metric on MLLM evaluation (Sec. \ref{sec:3_2_emi}). Then, we derive theorems based on it to characterize the MLLM performance gap under distribution shifts (Sec. \ref{sec:3_3_emid}).

\subsection{Mutual Information for MLLM } \label{sec:3_0_mi_for_eval}

A fundamental capability of MLLMs is their instruction-following property \cite{ouyang2022training}---a direct outcome of instruction-tuning, where the model is trained to generate responses that are aligned with the intent of a given input query or instruction. To evaluate instruction-following capability, we first consider the \textit{mutual information}~\cite{shannon1948mathematical} to measure the shared information between the query and the corresponding model response. 
\begin{definition}[\textbf{Mutual Information (MI)}] For a joint distribution $P_{\mathbf{X}Y}$ over $\mathcal{X}\times \mathcal{Y}$, the mutual information with respect to $P_{\mathbf{X}Y}$ is defined as,
\begin{equation} \label{eq:mi}
    I(P_{\mathbf{X}Y}) := \mathbb{E}_{\mathbf{x},y\sim P_{\mathbf{X}Y}}[\log \frac{P_{\mathbf{X}Y}(\mathbf{x},y)}{P_{\mathbf{X}}(\mathbf{x})P_Y(y)}].
\end{equation}\label{def:mi}
\end{definition}
\vspace{-0.5em}
MI is deeply related to the entropy, which is defined as $H(P_{\mathbf{X}}) :=  - \mathbb{E}_{\mathbf{x}\sim P_\mathbf{X}} [\log P_{\mathbf{X}}(\mathbf{x})]$.
It is easy to check that $I(P_{\mathbf{X}Y})=H(P_{Y} )-\mathbb{E}_{\mathbf{x}\sim P_{\mathbf{X}}} [H(P_{Y|\mathbf{X}=\mathbf{x}})]$. Intuitively, MI captures how much the response tells us about the query and vice versa. One reason for considering MI for model evaluation is that the conditional language modeling objective in Eq. \eqref{eq::1} is closely connected to MI as shown in Eq.~\eqref{eq::2}.

\paragraph{Information-theoretic interpretation of instruction tuning.} 
In Eq.~\eqref{eq::2}, we show that the objective for instruction tuning (Eq.~\ref{eq::1}) forms a lower bound of the MI between $\mathbf{X}$ and $Y$ subtracted by entropy over $Y$, given a sufficiently large representation capacity of the model, i.e., small $\delta$. As we are interested in measuring the input-output dependency, we focus on the $I(P_{\mathbf{X}Y})$ term as a metric to effectively gauge the upper bound of visual instruction tuning objective, Eq.~\eqref{eq::1}. 
\begin{align}\label{eq::2}
    I(P_{\mathbf{X}Y}) - H(P_Y)= & \mathbb{E}_{\mathbf{x},y \sim P_{\mathbf{X}Y}} [\log P_{{\theta}}(y|\mathbf{x})] + \delta  \nonumber \\
    \geq & \mathbb{E}_{\mathbf{x},y \sim P_{\mathbf{X}Y}} [\log P_{{\theta}}(y|\mathbf{x})],
\end{align}
where 
$
\delta= \mathbb{E}_{\mathbf{x}\sim P_{\mathbf{X}}} [D_{\rm KL}(P_{Y|\mathbf{X}=\mathbf{x}}\| P_{\theta}(\cdot|\mathbf{x}))] 
$. 

\paragraph{Going from instruction tuning to test-time MI.} While $I(P_{\mathbf{X}Y})$ measures the MI between the input query and ground truth response from $P_{\mathbf{X}Y}$, one may be interested in measuring MI between the query and model response on inference-time distributions. We use a tensor product $P_{\mathbf{X}} \otimes P_{\theta}$ to present a joint distribution between the input distribution $P_\mathbf{X}$ and \emph{model output} distribution $P_{\theta}(y|\mathbf{x})$:
\begin{equation}
    P_{\mathbf{X}} \otimes P_{\theta} :=P_{\mathbf{X}}(\mathbf{x}) P_{\theta}(y|\mathbf{x}),~\forall (\mathbf{x},y)\in \mathcal{X}\times \mathcal{Y}.%~\text{for~any}~$(\mathbf{x},y)\in \mathcal{X}\times \mathcal{Y}$.
\end{equation}
Accordingly, the mutual information w.r.t. the joint distribution $P_{\mathbf{X}} \otimes P_{\theta}$ can be written as:
\begin{equation}
\begin{split}
   I(P_{\mathbf X}\otimes P_{\theta})& = H(\mathbb{E}_{\mathbf{x}\sim P_{\mathbf{X}}}[P_{\theta}(\cdot|\mathbf{x})])\\ &-\mathbb{E}_{\mathbf{x}\sim P_{\mathbf{X}}} [H(P_{\theta}(\cdot|\mathbf{x}))].
    \end{split}
    \label{eq:mi_generation}
\end{equation}
\paragraph{Limitation of test-time MI under distribution shifts.}
Although one could directly use $I(P_{\mathbf{X}} \otimes P_{\theta})$, the mutual information between the input query and the model response, as a metric, the vanilla MI may not be suitable for scenarios involving distribution shifts. For example, 
consider the distribution $P_{\mathbf{X}}$ (e.g., general domain), and the distribution $Q_{\mathbf{X}}$ (e.g., medical domain). Suppose the MI w.r.t. model $P_{\theta}$ on $P_{\mathbf{X}}$, i.e., $I(P_{\mathbf{X}}\otimes P_{{\theta}})$ is 2.0, while on the $Q_{\mathbf{X}}$, it is $I(Q_{\mathbf{X}}\otimes P_{{\theta}})=1.0$. Does this imply that model $P_\theta$ performs twice as poorly on $Q_{\mathbf{X}}$? The answer is unclear.

The challenge lies in the inherent variability of MI scales across data domains. Recall the formulation of
 $I(P_{\mathbf{X}}\otimes P_{\theta})$ in Eq.~\eqref{eq:mi_generation}, the first term $H(\mathbb{E}_{\mathbf{x}\sim P_{\mathbf{X}}}[P_{\theta}(\cdot|\mathbf{x})] )$ represents the upper bound of MI and varies with the data domain; and the second term $\mathbb{E}_{\mathbf{x}\sim P_{\mathbf{X}}} [H(P_{\theta}(\cdot|\mathbf{x}))]$ reflects the input-output dependency and depends on the true data-generating process.
For instance, given a fixed vocabulary, the responses from MLLM could contain more diverse words in a general domain, whereas a narrower subset of words could be expected in the specialized domains, such as medical. Then, $H(\mathbb{E}_{\mathbf{x}\sim P_{\mathbf{X}}}[P_{\theta}(\cdot|\mathbf{x})] )$ and $\mathbb{E}_{\mathbf{x}\sim P_{\mathbf{X}}}[H(P_{\theta}(\cdot|\mathbf{x}) )]$ would be larger in a general domain. Therefore, we argue that {\textit{a desired evaluation metric should disentangle the pure input-response relevance from the intrinsic characteristics of the dataset}}.

\subsection{Effective Mutual Information for Reliable MLLM Evaluation} \label{sec:3_2_emi}

To remove the influence of the domain-dependent scale, we propose \textit{effective mutual information} (EMI) as a remedy.
\begin{definition}[\textbf{Effective Mutual Information (EMI)}]
    Given a joint distribution $P_{\mathbf{X}Y}$ and an MLLM $P_{\theta}$ parameterized with $\theta$, the effective mutual information between the input and model response is defined as below,
\begin{equation} \label{eq:emi}
    \text{EMI}(P_{\mathbf{X}Y};P_\theta):= I(P_{\mathbf{X}}\otimes P_{\theta}) - I(P_{\mathbf{X}Y}).
\end{equation}
\end{definition}
Compared to the standard MI, $\text{EMI}(P_{\mathbf{X}Y}; P_\theta)$ measures the ``effective" relevance between the query \(\mathbf{x}\) and the model response \(\hat{y}\) by subtracting a ground truth MI \(I(P_{\mathbf{X}Y})\) from $I(P_{\mathbf{X}}\otimes P_{\theta})$. Refer to Figure \ref{fig:emi_motivation} in Appendix~\ref{appendix:emi}, for an intuitive example: by accounting for a baseline level of MI, EMI quantifies the extent to which the model captures the effective relevance between the input and output. The use of EMI as an evaluation metric for MLLMs can be further supported by (1) its analogy to the excess risk and effective robustness;
 (2) its connection to an LLM-judge score.

\paragraph{Analogy to the excess risk and effective robustness.} The minimum achievable error varies depending on the data-generating process. To enable reliable model selection that is agnostic to data distributions, excess risk—defined as the difference between a model's risk and the minimum possible risk—has been extensively studied \cite{castro2008minimax, koltchinskii2010rademacher, mohri2018foundations}. More recently, \citet{taori2020measuring} introduced the concept of effective robustness to quantify the ``effective" OOD generalization accuracy of classification models by subtracting their ID accuracy. The motivation behind EMI aligns with these concepts, i.e., mitigating the influence of external confounding effects that hinder the accurate measure of model performance. EMI ensures that the metric focuses on the model's effective ability to capture input-output relevance, independent of confounding effects from the data domain.

\paragraph{Connection to LLM-judge score.} We also show that EMI is closely related to an LLM-judge-based metric, i.e., RP score in Eq. \eqref{eq:rp}, a common metric used to assess MLLM outputs. Conceptually, EMI quantifies the effective relevance between the input query and the model’s response by accounting for the baseline mutual information, whereas the RP score measures the relative preference of the model’s responses over a reference response. More formally, their connection can be mathematically established through the lens of a logit Bradley-Terry preference model (PM) formulation \cite{bradley1952rank, hunter2004mm}, $\text{logit} P(\hat{y} \succ y|\mathbf{x})$, an alternative formulation of relative preference. To be specific, we commonly use $\log P(\hat{y} \succ y|\mathbf{x})$ to train a reward model (RM) which is adopted to compute LLM-judge score, such as Eq.~\eqref{eq:rp}. We compare both terms in the following.
\begin{equation} \label{eq:preference_model}
\small
\begin{split}
 {\text{PM}}(P_{\mathbf{X}Y};P_\theta):&=\mathbb{E}_{\begin{subarray}{l} \mathbf{x},y \sim P_{\mathbf{X}Y} \\ \hat{y} \sim P_{\theta}(\cdot|\mathbf{x}) \end{subarray}}[\text{logit} \; P(\hat{y} \succ y|\mathbf{x})] \nonumber\\
 &=\mathbb{E}_{\begin{subarray}{l} \mathbf{x},y \sim P_{\mathbf{X}Y} \\ \hat{y} \sim P_{\theta}(\cdot|\mathbf{x}) \end{subarray}}[r(\mathbf{x},\hat{y})-r(\mathbf{x},y)],
 \\
     {\text{RM}}(P_{\mathbf{X}Y};P_\theta):&=
     \mathbb{E}_{\begin{subarray}{l} \mathbf{x},y \sim P_{\mathbf{X}Y} \\ \hat{y} \sim P_{\theta}(\cdot|\mathbf{x}) \end{subarray}} [\log\; P(\hat{y} \succ y|\mathbf{x})] \nonumber\\
   &= \mathbb{E}_{\begin{subarray}{l} \mathbf{x},y \sim P_{\mathbf{X}Y} \\ \hat{y} \sim P_{\theta}(\cdot|\mathbf{x}) \end{subarray}} [\log \sigma(r(\mathbf{x},\hat{y} ) - r(\mathbf{x},y))],
 \end{split}
\end{equation}
where $r(\cdot, 
\cdot)$ is the latent score function so-called reward model that generates preference for $(\mathbf{x},y)$. It is clear that
\begin{equation*}
     {\text{PM}}(P_{\mathbf{X}Y};P_\theta) = {\text{RM}}(P_{\mathbf{X}Y};P_\theta) - \log (1-e^{ {\text{RM}}(P_{\mathbf{X}Y};P_\theta)}),
\end{equation*}
\begin{equation*}
     {\text{RM}}(P_{\mathbf{X}Y};P_\theta) = {\text{PM}}(P_{\mathbf{X}Y};P_\theta) - \log (1+e^{ {\text{PM}}(P_{\mathbf{X}Y};P_\theta)}).
\end{equation*}
Therefore, ${\text{PM}}(P_{\mathbf{X}Y};P_\theta)$ and ${\text{RM}}(P_{\mathbf{X}Y};P_\theta)$ exhibit a mutual equivalence, i.e., \textit{increase in ${\text{PM}}(P_{\mathbf{X}Y};P_\theta)$ corresponds to the increase in ${\text{RM}}(P_{\mathbf{X}Y};P_\theta)$, and vice versa}. In Lemma \ref{Main-thm1-lemma}, we establish an upper bound for the absolute difference between EMI and ${\text{PM}}(P_{\mathbf{X}Y};P_\theta)$, thereby demonstrating their closeness and ultimately highlighting a connection between EMI and LLM-judge score, e.g., Eq~\eqref{eq:rp}.

\begin{lemma}\label{Main-thm1-lemma}
    Given a distribution $P_{\mathbf{X}Y}$ and an MLLM $P_{\theta}$, if
        $\mathbb{E}_{\mathbf{x}\sim P_{\mathbf{X}}} [D_{\rm KL}(P_{\theta}(\cdot|\mathbf{x})\| P_{Y|\mathbf{X}=\mathbf{x}})] \leq \delta$, and let the reward function $r(\mathbf{x},y)$ be $\log P_{Y|\mathbf{X}=\mathbf{x}}(y)$,
    then
    \begin{equation*}
        |\text{EMI}(P_{\mathbf{X}Y};P_\theta)-\text{PM}(P_{\mathbf{X}Y};P_\theta)| \leq \delta + 4.4\delta^{\frac{1}{8}}.
    \end{equation*}
\end{lemma}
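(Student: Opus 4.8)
The plan is to turn the claimed inequality into a clean exact identity for $\text{EMI}(P_{\mathbf{X}Y};\theta)-\text{PM}(P_{\mathbf{X}Y};\theta)$, and then control the single residual term that survives. Writing $\bar{P}_{\theta}:=\mathbb{E}_{\mathbf{x}\sim P_{\mathbf{X}}}[P_{\theta}(\cdot|\mathbf{x})]$ for the model's marginal response distribution, I would expand $\text{EMI}$ using the decomposition $I(P_{\mathbf{X}}\otimes P_{\theta})=H(\bar{P}_{\theta})-\mathbb{E}_{\mathbf{x}}[H(P_{\theta}(\cdot|\mathbf{x}))]$ from Eq.~\eqref{eq:mi_generation} together with $I(P_{\mathbf{X}Y})=H(P_Y)-\mathbb{E}_{\mathbf{x}}[H(P_{Y|\mathbf{X}=\mathbf{x}})]$. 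For $\text{PM}$, I substitute the prescribed reward $r(\mathbf{x},y)=\log P_{Y|\mathbf{X}=\mathbf{x}}(y)$: the reference term becomes $\mathbb{E}_{\mathbf{x},y\sim P_{\mathbf{X}Y}}[\log P_{Y|\mathbf{X}=\mathbf{x}}(y)]=-\mathbb{E}_{\mathbf{x}}[H(P_{Y|\mathbf{X}=\mathbf{x}})]$, while the generated term, after adding and subtracting $\log P_{\theta}(\hat{y}|\mathbf{x})$ inside the expectation, splits as $\mathbb{E}_{\mathbf{x},\hat{y}\sim P_{\theta}}[\log P_{Y|\mathbf{X}=\mathbf{x}}(\hat{y})]=-\mathbb{E}_{\mathbf{x}}[H(P_{\theta}(\cdot|\mathbf{x}))]-\mathbb{E}_{\mathbf{x}}[D_{\rm KL}(P_{\theta}(\cdot|\mathbf{x})\|P_{Y|\mathbf{X}=\mathbf{x}})]$. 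Subtracting, the two conditional-entropy terms cancel exactly, leaving the identity
\[
\text{EMI}(P_{\mathbf{X}Y};\theta)-\text{PM}(P_{\mathbf{X}Y};\theta)=H(\bar{P}_{\theta})-H(P_Y)+\mathbb{E}_{\mathbf{x}}[D_{\rm KL}(P_{\theta}(\cdot|\mathbf{x})\|P_{Y|\mathbf{X}=\mathbf{x}})].
\]

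With this identity, the triangle inequality reduces the problem to two pieces. The averaged-KL term is at most $\delta$ by hypothesis, which contributes the additive $\delta$ in the final bound. It then remains to control the marginal entropy gap $|H(\bar{P}_{\theta})-H(P_Y)|$. Here I would first transfer the conditional closeness to the marginals: since $D_{\rm KL}$ is jointly convex and both $\bar{P}_{\theta}$ and $P_Y=\mathbb{E}_{\mathbf{x}}[P_{Y|\mathbf{X}=\mathbf{x}}]$ are mixtures over $\mathbf{x}\sim P_{\mathbf{X}}$ of the respective conditionals, joint convexity yields $D_{\rm KL}(\bar{P}_{\theta}\|P_Y)\le\mathbb{E}_{\mathbf{x}}[D_{\rm KL}(P_{\theta}(\cdot|\mathbf{x})\|P_{Y|\mathbf{X}=\mathbf{x}})]\le\delta$. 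Pinsker's inequality then converts this into a total-variation bound $\mathrm{TV}(\bar{P}_{\theta},P_Y)\le\sqrt{\delta/2}=O(\delta^{1/2})$.

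The final and main obstacle is to translate this small total-variation (equivalently small KL) distance into a bound on the Shannon-entropy difference of the form $4.4\,\delta^{1/8}$. Shannon entropy is not uniformly continuous in total variation on large or unbounded response alphabets, so a naive Fannes-type estimate would introduce an undesirable $\log|\mathcal{Y}|$ factor; the route I would take is a continuity-of-entropy bound obtained by thresholding the symbol probabilities at a level $\eta$, controlling the high-probability part by $\mathrm{TV}\cdot\log(1/\eta)$ and the low-probability tail separately, then optimizing over $\eta$. Balancing these two contributions is precisely what produces a fractional power of $\mathrm{TV}$: the entropy-continuity estimate must behave like $\mathrm{TV}^{1/4}$, which composed with $\mathrm{TV}=O(\delta^{1/2})$ from Pinsker gives the $\delta^{1/8}$ scaling, with the numerical constant $4.4$ emerging from the threshold optimization. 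Combining $|H(\bar{P}_{\theta})-H(P_Y)|\le 4.4\,\delta^{1/8}$ with the $\delta$ from the averaged-KL term closes the argument, giving $|\text{EMI}-\text{PM}|\le\delta+4.4\,\delta^{1/8}$. I expect the entropy-continuity step to be the technically delicate part, both in securing a dimension-free estimate and in pinning down the exact exponent and constant, whereas the identity and the convexity/Pinsker reductions are routine.
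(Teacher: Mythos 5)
Your identity and the first two reductions are exactly the paper's proof: expanding both quantities with $r(\mathbf{x},y)=\log P_{Y|\mathbf{X}=\mathbf{x}}(y)$, the ground-truth conditional-entropy terms cancel and
\begin{equation*}
\text{EMI}(P_{\mathbf{X}Y};\theta)-\text{PM}(P_{\mathbf{X}Y};\theta)=H(P_{Y_{\theta}})-H(P_Y)+\mathbb{E}_{\mathbf{x}\sim P_{\mathbf{X}}}\big[D_{\rm KL}(P_{\theta}(\cdot|\mathbf{x})\,\|\,P_{Y|\mathbf{X}=\mathbf{x}})\big],
\end{equation*}
after which the triangle inequality and the hypothesis give the additive $\delta$; your convexity-plus-Pinsker transfer to the marginals is likewise equivalent to the paper's Lemma \ref{TV_KL}, which gives $D_{\rm TV}(P_Y,P_{Y_{\theta}})\le\sqrt{2\delta}$. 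Up to this point the two arguments coincide.

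The gap is the step you yourself flag as delicate, and it is not merely technical: you need a \emph{dimension-free} entropy-continuity estimate $|H(P_{Y_{\theta}})-H(P_Y)|\lesssim D_{\rm TV}^{1/4}$, and the thresholding scheme you sketch cannot deliver one. Thresholding controls the head by $D_{\rm TV}\cdot\log(1/\eta)$, but the low-probability tail is genuinely uncontrollable: take $P_Y=\delta_{y_0}$ and $P_{Y_{\theta}}=(1-\epsilon)\delta_{y_0}+\epsilon\,\mathrm{Unif}\{y_1,\dots,y_N\}$; the total variation stays at $\epsilon$ while $H(P_{Y_{\theta}})-H(P_Y)\ge \epsilon\log N$ grows without bound in $N$. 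Hence no modulus of continuity of entropy in total variation alone exists, and any honest repair (Fannes/Audenaert) reintroduces the $\log|\mathcal{Y}|$ factor you were trying to avoid. The paper closes this step by invoking its Lemma \ref{thm:entropy_diff_bound}, $|H(P)-H(Q)|\le 4D_{\rm JS}^{1/4}(P\|Q)\le 4D_{\rm TV}^{1/4}(P,Q)$, whence $4(2\delta)^{1/8}=4\cdot 2^{1/8}\delta^{1/8}\le 4.4\,\delta^{1/8}$ --- so the constant $4.4$ is just $4\cdot 2^{1/8}$ rounded up, not the outcome of a threshold optimization. You should also note that your instinct about the delicacy of this step applies to the paper itself: the proof of Lemma \ref{thm:entropy_diff_bound} rests on the inequality $\sum_{x}\sqrt{a_x}\le\sqrt{\sum_x a_x}$ (labelled H\"older), which is reversed for nonnegative $a_x$ ($\sum_x\sqrt{a_x}\ge\sqrt{\sum_x a_x}$, with a correct Cauchy--Schwarz step costing $\sqrt{|\mathcal{Y}|}$), consistent with the counterexample above. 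So the missing lemma is the crux of the entire bound --- it produces both the exponent $\tfrac18$ and the constant $4.4$ --- and neither your sketch nor the paper's stated proof establishes it dimension-free; it can only hold under an explicit finiteness/cardinality assumption on $\mathcal{Y}$.
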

\vspace{-0.3em}
Intuitively, Lemma \ref{Main-thm1-lemma} shows that if MLLM $P_{\theta}$ can approximate the given distribution $P_{\mathbf{X}Y}$ with approximate error $\delta$, the difference between EMI and $\text{PM}$ can be bounded by a small term w.r.t. the approximate error $\delta$. Furthermore, assuming that the model class $\{P_{\theta}:\forall \theta\in \Theta\}$ has sufficient expressive power (i.e., Eq. \eqref{Expression}), we can derive an additional bound for the case of the optimal solution of autoregressive objective (i.e., Eq. \eqref{eq::1}), as shown below. 
\begin{theorem}\label{Main-thm1-thm}
    Given a distribution $P_{\mathbf{X}Y}$ with $P_{\mathbf{X}Y}>c>0$ for some constant or $P_{Y|\mathbf{X}}>c$, if  the $\epsilon$-representation capacity assumption holds, i.e.,
\begin{equation}\label{Expression}
    \min_{\theta\in \Theta} \mathbb{E}_{\mathbf{x}\sim P_{\mathbf{X}}} [D_{\rm KL}(P_{Y|\mathbf{X}=\mathbf{x}}\| P_{\theta}(\cdot|\mathbf{x}) )] \leq \epsilon,
\end{equation}
and let the reward function $r(\mathbf{x},y)$ be $\log P_{Y|\mathbf{X}=\mathbf{x}}(y)$, then
\begin{equation*}
    \begin{split}
        &|\text{EMI}(P_{\mathbf{X}Y};P_\theta^*) - \text{PM}(P_{\mathbf{X}Y};P_\theta^*)| \leq \delta + 4.4\delta^{\frac{1}{8}}, \nonumber
    \end{split}
    \end{equation*}
    \vspace{-0.2em}
    where $\theta^*$ is the optimal solution of Eq. \eqref{eq::1} over $P_{\mathbf{X}Y}$, and $\delta = 4.4\epsilon^{\frac{1}{8}}-\log c \sqrt{2\epsilon}$.
\end{theorem}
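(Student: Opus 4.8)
The plan is to reduce \ref{Main-thm1-thm} to \ref{Main-thm1-lemma} by producing, for the autoregressive optimum $\theta^*$, a bound on the \emph{reverse} divergence $\mathbb{E}_{\mathbf{x}\sim P_{\mathbf{X}}}[D_{\rm KL}(P_{\theta^*}(\cdot|\mathbf{x})\|P_{Y|\mathbf{X}=\mathbf{x}})]$ of exactly the magnitude $\delta=4.4\epsilon^{1/8}-\log c\sqrt{2\epsilon}$. Once this is established, \ref{Main-thm1-lemma} applied with this $\delta$ immediately yields $|\text{EMI}(P_{\mathbf{X}Y};\theta^*)-\text{PM}(P_{\mathbf{X}Y};\theta^*)|\le\delta+4.4\delta^{1/8}$. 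The conceptual obstacle is that the representation-capacity assumption \eqref{Expression} and the training objective \eqref{eq::1} only control the \emph{forward} KL $D_{\rm KL}(P_{Y|\mathbf{X}=\mathbf{x}}\|P_{\theta^*}(\cdot|\mathbf{x}))$, whereas \ref{Main-thm1-lemma} is stated in terms of the reverse KL; the heart of the argument is therefore a forward-to-reverse conversion.

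First I would identify $\theta^*$. Expanding the loss in \eqref{eq::1} by the chain rule gives $\mathbb{E}_{\mathbf{x},y}[-\log P_\theta(y|\mathbf{x})]=\mathbb{E}_{\mathbf{x}}[H(P_{Y|\mathbf{X}=\mathbf{x}})+D_{\rm KL}(P_{Y|\mathbf{X}=\mathbf{x}}\|P_\theta(\cdot|\mathbf{x}))]$, and since the entropy term is independent of $\theta$, minimizing \eqref{eq::1} is equivalent to minimizing the forward KL. Hence by \eqref{Expression} the optimum satisfies $\mathbb{E}_{\mathbf{x}}[D_{\rm KL}(P_{Y|\mathbf{X}=\mathbf{x}}\|P_{\theta^*}(\cdot|\mathbf{x}))]\le\epsilon$. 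I would also record that $P_{\mathbf{X}Y}>c$ implies $P_{Y|\mathbf{X}=\mathbf{x}}(y)\ge c$ (since $P_{\mathbf{X}}\le 1$ on discrete sequences), so either form of the density hypothesis supplies the pointwise lower bound $P_{Y|\mathbf{X}=\mathbf{x}}\ge c$ that the next step requires.

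The core step is the conversion, carried out pointwise in $\mathbf{x}$. Writing the reverse divergence as cross-entropy minus entropy, I would use
\begin{equation*}
D_{\rm KL}(P_{\theta^*}\|P_{Y|\mathbf{X}})=\big[H(P_{Y|\mathbf{X}})-H(P_{\theta^*})\big]-\textstyle\sum (P_{\theta^*}-P_{Y|\mathbf{X}})\log P_{Y|\mathbf{X}}.
\end{equation*}
The final sum is bounded by $(-\log c)\,\|P_{\theta^*}-P_{Y|\mathbf{X}}\|_1$ using $c\le P_{Y|\mathbf{X}}\le 1$, and the bracket by a modulus-of-continuity estimate for entropy that is sublinear in total variation and, in the authors' accounting, contributes the $4.4\epsilon^{1/8}$ term. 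I would then invoke Pinsker's inequality to pass from the forward KL to $\|P_{\theta^*}(\cdot|\mathbf{x})-P_{Y|\mathbf{X}=\mathbf{x}}\|_1\le\sqrt{2D_{\rm KL}(P_{Y|\mathbf{X}=\mathbf{x}}\|P_{\theta^*}(\cdot|\mathbf{x}))}$, take the expectation over $\mathbf{x}$, and apply Jensen's inequality (concavity of the square root and of the entropy modulus) with \eqref{Expression} to replace the per-$\mathbf{x}$ divergences by their mean bound $\epsilon$. This produces $\mathbb{E}_{\mathbf{x}}[D_{\rm KL}(P_{\theta^*}(\cdot|\mathbf{x})\|P_{Y|\mathbf{X}=\mathbf{x}})]\le 4.4\epsilon^{1/8}-\log c\sqrt{2\epsilon}=\delta$, after which the conclusion follows from \ref{Main-thm1-lemma}.

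I expect the main obstacle to be this reverse-KL control: reverse KL is infinite wherever the model under-covers the support of $P_{Y|\mathbf{X}}$, so the density lower bound $c$ is essential and enters through the $-\log c$ factor; moreover, the nonstandard exponent $1/8$ forces the use of a sublinear entropy-continuity estimate rather than a naive Lipschitz bound, and it is precisely the concavity of that estimate (together with the concavity of $\sqrt{\cdot}$) that lets Jensen's inequality pull the expectation over $\mathbf{x}$ inside and yield the clean $\epsilon^{1/8}$ dependence.
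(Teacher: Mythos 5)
Your proposal is correct and takes essentially the same route as the paper: the paper likewise reduces to Lemma \ref{Main-thm1-lemma} by observing that minimizing Eq.~\eqref{eq::1} is equivalent to minimizing the forward KL (so $\mathbb{E}_{\mathbf{x}\sim P_{\mathbf{X}}}[D_{\rm KL}(P_{Y|\mathbf{X}=\mathbf{x}}\| P_{\theta^*}(\cdot|\mathbf{x}))]\leq\epsilon$), and then performs exactly your forward-to-reverse conversion (its Lemma \ref{thm:cond_entropy_diff_bound-3}) with the same decomposition, bounding the cross term by $-\log c\,\sqrt{2\epsilon}$ via the density lower bound, Pinsker, and Jensen, and the entropy gap by $4.4\epsilon^{\frac{1}{8}}$ via the sublinear entropy-continuity bound (Lemma \ref{thm:entropy_diff_bound}). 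The only difference is organizational: you carry out the conversion inline, whereas the paper packages it into auxiliary lemmas.
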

Theorem \ref{Main-thm1-thm} shows that with a sufficiently expressive model class, EMI exhibits a stronger alignment with PM when the optimal MLLM parameter $\theta^*$ is obtained through the autoregressive objective. This alignment underscores the validity of using EMI as a reliable metric for evaluating MLLM and quantifying the relative preference of responses.

Although we confine our analysis to $I(P_{\mathbf{X}Y})$ and $\text{EMI}(P_{\mathbf{X}Y};P_\theta)$, the chain rule of MI, i.e., $I(P_{\mathbf{X}Y})=I(P_{X_{v}Y})+I(P_{X_{t}Y|X_v})$ allows us to further factorize the query-response relevance into two input modalities, which is suitable for multimodal LLM's fine-grained evaluation. 

\subsection{Characterizing MLLM Performance Gap via Effective Mutual Information Difference} \label{sec:3_3_emid}

Now, based on EMI, we are ready to establish formal guarantees on the performance gap of MLLM via \textit{effective mutual information difference} (EMID). EMID is defined as the difference between the EMI on  the ID distribution $P_{\mathbf{X}Y}$ and the OOD distribution $Q_{\mathbf{X}Y}$, as follows:
\begin{equation} \label{eq:emid}
{\small
\begin{split}
    \text{EMID}(P_{\mathbf{X}Y},Q_{\mathbf{X}Y};P_\theta) :=\text{EMI}(P_{\mathbf{X}Y};P_\theta) - \text{EMI}(Q_{\mathbf{X}Y};P_\theta).
    \end{split}
}
\end{equation}
To elucidate the key insight and provide a clear foundation, we begin by analyzing a simple scenario where the conditional variables remain consistent across both ID and OOD distributions.
In this case, we can derive an upper bound for EMID,  as stated in Theorem \ref{thm:emid_bound_simple}. This bound enables us to characterize the maximum performance gap of MLLM over two distributions by measuring the severity of the marginal distribution shift over visual and language modalities. 

\vspace{0.2cm}
\begin{theorem}[Simplified Scenario]\label{thm:emid_bound_simple}
 Given an MLLM $P_{\theta}$ and distributions $P_{\mathbf{X}Y}$, $Q_{\mathbf{X}Y}$ which have consistent conditional distributions over variables $X_{v}|X_{t}$, $X_{t}|X_{v}$, and $Y|\mathbf{X}$,
if there exist some constants $\delta_{P}$ and $\delta_{Q}$ such that
\begin{equation*}
    D_{\rm JS}(P_{Y_{{\theta}}}\|P_{Y})\leq \delta_{P},~~~ D_{\rm JS}(Q_{Y_{{\theta}}}\|Q_{Y})\leq \delta_{Q},\;\;\Delta=\delta_{P}+\delta_{Q}
\end{equation*}
 and denote $P_{Y_{\theta}}=\mathbb{E}_{P_{\mathbf{X}}} [P_{{\theta}}(\cdot|\mathbf{x})]$ and $Q_{Y_{\theta}}=\mathbb{E}_{Q_{\mathbf{X}}} [P_{{\theta}}(\cdot|\mathbf{x})]$, then $\text{EMID}(P_{\mathbf{X}Y},Q_{\mathbf{X}Y};P_\theta)$ is upper bounded by
% (H(P_{Y^T})+H(P_{Y^S_\theta}))
\begin{equation} \label{eq:emid_bound_simple}
    \begin{split}
     \widehat{H}\big( { D^{\frac{1}{2}}_{\rm JS}(P_{X_{v}}\|Q_{X_{v}})} + {D^{\frac{1}{2}}_{\rm JS}(P_{X_{t}}\|Q_{X_{t}})} \big) + 8\Delta^{\frac{1}{4}},
    \end{split}
\end{equation}
\normalsize
where $ \widehat{H}=\max_{\mathbf{x}\in\mathcal{X}} [H(Q_{Y|\mathbf{X}=\mathbf{x}})+H(P_{\theta}(\cdot|\mathbf{x}))]$. 
\end{theorem}
\begin{figure*}[!th]
    \centering
    \includegraphics[width=\linewidth]{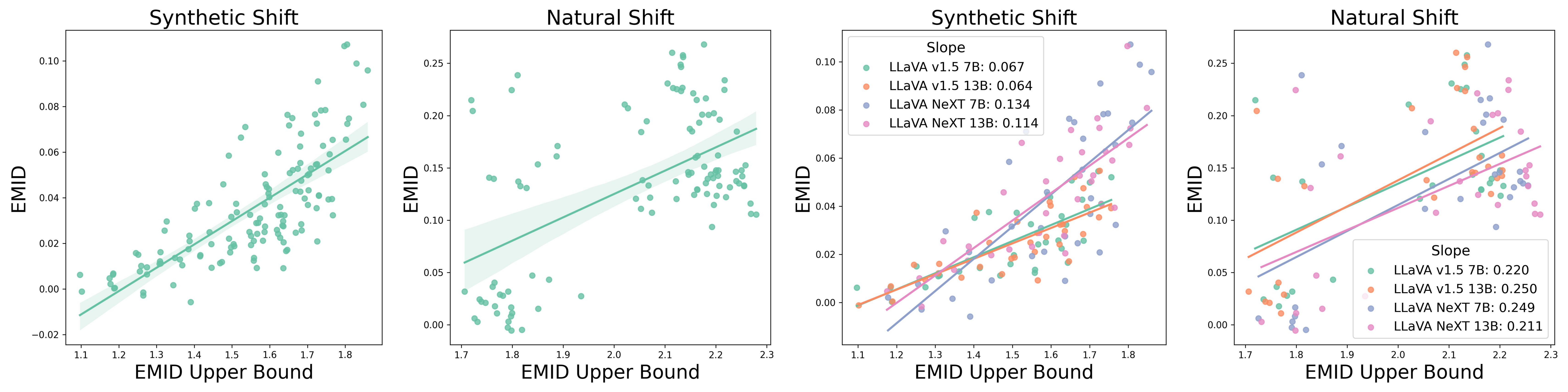}
    \vspace{-1.2em}
    \caption{\textbf{Scatter plot with regression line between empirical estimates of EMID and its upper bound.} Over the 34 synthetic and 27 natural distribution shift scenarios, we evaluate four MLLMs and get 136 cases and 108 cases of synthetic shifts and natural shifts, respectively, for visualizing EMID and its scale-adjusted upper bound estimates (See Appendix~\ref{appendix:implementation_details} for details). The two panels on the left show results for all four models, whereas the right ones distinguish them per model with fitted linear regression coefficients (Slope).}
    \label{fig:emid_ub_scatter}
    \vspace{-0.75em}
\end{figure*}

\begin{tcolorbox}[width=\linewidth, colback=white!95!black]
\noindent \textbf{Implication.} 
Theorem \ref{thm:emid_bound_simple} implies that in the simplified scenario, EMID depends on two main factors: (1) the divergence between the marginal distributions of the visual and textual inputs;
(2) the divergence between the model’s predictions and the true output distributions, encapsulated by $\delta_P$ and $\delta_Q$.
\end{tcolorbox}
Theorem \ref{thm:emid_bound_simple} naturally captures special cases such as visual-only or text-only input shifts. For a visual-only shift, where $D_{\rm JS}(P_{X_{t}} \| Q_{X_{t}}) = 0$, the EMID upper bound depends primarily on the divergence between visual input distributions, and output discrepancy terms. Similarly, for a text-only shift, the bound reflects the divergence in textual input distributions, and output discrepancy terms. These cases not only underscore the flexibility of Theorem \ref{thm:emid_bound_simple} in isolating the impact of modality-specific distribution shifts on model performance but also highlight the importance of visual and text input shifts on it. In Appendix \ref{appendix:thm}, we provide a looser yet better interpretable version of this upper bound (Corollary \ref{thm:emid_bound_simple_v2}) by replacing $\Delta$ with discrepancy terms between model output and ground truth conditional distributions.

\paragraph{General scenario.} Moving beyond the simplified scenario, we now consider the general scenario in which no assumptions are made about the consistency of conditional distributions across ID and OOD settings. This more realistic scenario accommodates shifts not only in the marginal distributions of visual and textual inputs but also in their conditional dependencies and the relationships between inputs and outputs. By relaxing these constraints, we aim to capture the full complexity of distributional shifts encountered in practice and analyze how such shifts collectively influence the performance gap of MLLMs. The formal upper bound is provided in Theorem \ref{thm:emid_bound}.

\vspace{0.2cm}
\begin{theorem}[General Scenario]\label{thm:emid_bound}
Given $P_{\mathbf{X}Y}$ and $Q_{\mathbf{X}Y}$ distributions and an MLLM $P_{{\theta}}$, if there exist some constants $\delta_{P}$ and $\delta_{Q}$ such that
\begin{equation*}
    D_{\rm JS}(P_{Y_{{\theta}}}\|P_{Y})\leq \delta_{P},~~~ D_{\rm JS}(Q_{Y_{{\theta}}}\|Q_{Y})\leq \delta_{Q},\;\;\Delta=\delta_{P}+\delta_{Q}
\end{equation*}
 and denote $P_{Y_{\theta}}=\mathbb{E}_{P_{\mathbf{X}}} [P_{{\theta}}(\cdot|\mathbf{x})]$ and $Q_{Y_{\theta}}=\mathbb{E}_{Q_{\mathbf{X}}} [P_{{\theta}}(\cdot|\mathbf{x})]$, then $\text{EMID}(P_{\mathbf{X}Y},Q_{\mathbf{X}Y};P_\theta)$ is upper bounded by
\begin{equation} \label{eq:emid_bound}
    \begin{split}
    & \widehat{H}\big({ D^{\frac{1}{2}}_{\rm JS}(P_{X_{v}}||Q_{X_{v}}) + D^{\frac{1}{2}}_{\rm JS}(P_{X_{t}}||Q_{X_{t}})} \big) \nonumber\\
    +&\widehat{H}\big({\bar{D}^{\frac{1}{2}}_{\rm JS}(P_{X_{t}|X_{v}}\|Q_{X_{t}|X_{v}})+\bar{D}^{\frac{1}{2}}_{\rm JS}(P_{X_{v}|X_{t}}\|Q_{X_{v}|X_{t}})} \big)\nonumber\\
    +&4 \mathbb{E}_{\mathbf{x}\sim P_{\mathbf{X}} } D^{\frac{1}{4}}_{\rm JS}(P_{Y|\mathbf{X}=\mathbf{x}}\|Q_{Y|\mathbf{X}=\mathbf{x}}) + 8\Delta^{\frac{1}{4}},
    \end{split}
\end{equation}
\normalsize
where $ \widehat{H}=\max_{\mathbf{x}\in\mathcal{X}} [H(Q_{Y|\mathbf{X}=\mathbf{x}})+H(P_{\theta}(\cdot|\mathbf{x}))]$ and 
\begin{equation*}
\begin{split}
    \bar{D}_{\rm JS}(P_{X|X'}||Q_{X|X'}):=&\mathbb{E}_{\mathbf{x}\sim P_{X'}}[D_{\rm JS}(P_{X|{X'}=\mathbf{x}}\|Q_{X|{X'}=\mathbf{x}})]\\+&\mathbb{E}_{\mathbf{x}\sim Q _{X'}}[D_{\rm JS}(P_{X|{X'}=\mathbf{x}}\|Q_{X|{X'}=\mathbf{x}})].
\end{split}
\end{equation*}
\end{theorem}
\begin{tcolorbox}[width=\linewidth, colback=white!95!black]
\vspace{-0.4em}
\noindent \textbf{Implication.} 
Compared to Theorem \ref{thm:emid_bound_simple}, Theorem \ref{thm:emid_bound} indicates that, in the general case, EMID is also influenced by divergences in conditional distributions. Specifically, EMID is upper bounded by marginal distribution shifts in visual and textual inputs ($X_v$ and $X_t$); divergence between marginal output and model response distributions; shifts in conditional dependencies ($X_v | X_t$ and $X_t | X_v$); and a shift between conditional output distributions ($Y|\mathbf{X}$).
\vspace{-0.4em}
\end{tcolorbox}
Theorem \ref{thm:emid_bound} holds for broader cases, whereas Theorem \ref{thm:emid_bound_simple} is much simpler to analyze. Thus, we focus on the validation of Theorem \ref{thm:emid_bound_simple} in the following section. If we have some knowledge of the data-generating process of $P_{\mathbf{X}Y}$ and $Q_{\mathbf{X}Y}$, we can choose the one that is suitable for given distributions. Although the bounds in both theorems are not such tight, they help us to understand the source of the MLLM performance gap. That is, both Theorem \ref{thm:emid_bound_simple} and \ref{thm:emid_bound} {\textit{provide an analytic tool to characterize the performance gap of MLLM, representing the first formal framework for evaluating MLLM under distribution shifts.}}
\section{Empirical Validation on Real Benchmark} \label{sec:benchmark}

\paragraph{Setup.} As done in the pilot experiments (Figure~\ref{fig:pilot_study} and \ref{fig:pilot_study_synthetic}), we mainly used LLaVA v1.5~\cite{liu2024improved} and LLaVA NeXT~\cite{liu2024llavanext} in 7B and 13B sizes and evaluated them on the LLaVA-Bench COCO and LLaVA-Bench Wild~\cite{liu2023visual} datasets to assess open-ended generation quality. We also considered two advanced MLLMs: Qwen2.5-VL-7B~\cite{bai2025qwen2} and InternVL2.5-7B~\cite{chen2024expanding}, and a domain-specialized open-ended question answering dataset, LLaVA-Med~\cite{li2024llava}, to explore the broad applicability of our framework. For a comprehensive examination on diverse types of shifts, we further simulate synthetic distribution shifts as well as natural distribution shifts. For synthetic shifts, we consider 7 visual scenarios (1 ID case $+$ 2 synthetic perturbation types at 3 severity levels), and 5 text scenarios (1 ID case $+$ 2 synthetic perturbation types at 2 severity levels), resulting in $7 \times5 = 35$ synthetic scenarios, where 1 scenario is ID and the other 34 are OOD cases.
For natural shifts, we use 4 visual scenarios (1 ID + 3 OOD difficulty levels) and 7 text scenarios (1 ID-English $+$ 6 different languages), yielding $4 \times 7 = 28$ natural scenarios. \textbf{This comprehensive design covers 34 synthetic and 27 natural shifts spanning 61 shift scenarios in total}.
A summary of the OOD construction strategies is in Table \ref{tab:shift_scenarios}. 
\begin{table}[t]
\vspace{-0.5em}
\caption{\textbf{Summary of distribution shift scenarios.} }
\centering
\small
\begin{tabular}{@{}ll@{}}
\toprule
 Type &
  Strategy (\# of category) \\ \midrule
\begin{tabular}[c]{@{}l@{}}Synthetic visual shift\\ (COCO Images) \end{tabular} &
  \begin{tabular}[c]{@{}l@{}}Perturbation (2): Defocus blur, frost\\ Severity (3): Weak, Normal, Strong \end{tabular} \\ \midrule
  Synthetic text shift &
  \begin{tabular}[c]{@{}l@{}}Perturbation (2): Typo,  Word Replacement\\ Severity (2): Weak, Strong\end{tabular}  \\ \midrule
\begin{tabular}[c]{@{}l@{}}Natural visual shift \\ (Wild Images) \end{tabular} & 
  \begin{tabular}[c]{@{}l@{}}LLaVA-Bench Wild \textit{or} LLaVA-Med \\ Split (3): Easy, Normal, Hard\end{tabular}  \\ \midrule
  Natural text shift & 
  Translation (6): GE, CH, KO, EL, AR, HI \\ \bottomrule
\end{tabular} \label{tab:shift_scenarios}
\vspace{-0.5em}
\end{table}

\paragraph{Estimation of MI and JSD.} 
For the empirical realization of our theoretical statements, we adopt a popular neural estimator for MI, CLUB \cite{cheng2020club} to compute empirical EMI and EMID, and a JS divergence estimator, RJSD \cite{hoyos2023representation}, to compute empirical estimates of EMID upper bound on top of embeddings of $\mathbf{X}$, $Y$, and $\hat{Y}$, extracted from CLIP-ViT-B/32~\cite{radford2021learning} and XLM-RoBERTa-base~\cite{conneau2019unsupervised} (see Appendix~\ref{appendix:implementation_details} for details). Experiments with 23 alternative implementations with other estimators derived consistent conclusions (Please refer to Table \ref{tab:ablation_emirp} and \ref{tab:ablation_emidup}).

\paragraph{Correlation between RP score and EMI.}
We first conduct the Spearman correlation analysis and Kendall's tau analysis between the RP score and our empirical estimates of EMI. In Table \ref{tab:rp_emi_corr}, we can see that EMI estimates exhibit a strong correlation with RP score, both in terms of absolute coefficient and $p$-value, across all models. This empirical evidence validates the theoretical connection between EMI and RP score discussed in Theorem \ref{Main-thm1-thm}. Therefore, our EMI can be used as a reliable and cost-effective alternative to RP for MLLM evaluation with theoretical grounding.

\begin{table}[t]
\vspace{-0.6em}
\caption{\textbf{Spearman rank correlation and Kendall's tau between relative preference (RP) score and EMI.} We conduct correlation analysis between the RP score (Eq.~\eqref{eq:rp}) and EMI (Eq. \eqref{eq:emi}) on 34 synthetic and 27 natural distribution shifts across four MLLMs.
}
\centering
{\footnotesize
\begin{tabular}{@{}c|l|cccc@{}}
\toprule
& & \multicolumn{2}{c}{Spearman} & \multicolumn{2}{c}{Kendall} \\ 
& Model & $\rho$ & $p$-val & $\tau$ & $p$-val \\ \midrule
& LLaVA v1.5 7B & 0.794 & $<$0.001 & 0.604 & $<$0.001              \\
& LLaVA v1.5 13B & 0.652 & $<$0.001 & 0.483 & $<$0.001              \\
& LLaVA NeXT 7B & 0.738 & $<$0.001 & 0.564 & $<$0.001              \\
\multirow{-4}{*}{\rotatebox[origin=c]{90}{Synthetic}} & LLaVA NeXT 13B & 0.726 & $<$0.001 & 0.527 & $<$0.001 \\ \midrule
& LLaVA v1.5 7B & 0.610 & 0.001 & 0.450 & 0.001 \\
& LLaVA v1.5 13B & 0.720 & $<$0.001 & 0.575 & $<$0.001              \\
& LLaVA NeXT 7B & 0.593 & 0.001 & 0.435 & 0.001 \\
\multirow{-4}{*}{\rotatebox[origin=c]{90}{Natural}} & LLaVA NeXT 13B & 0.457 & 0.014 & 0.321 & 0.017 \\ \bottomrule
\end{tabular} \label{tab:rp_emi_corr}
}
\vspace{-0.6em}
\end{table}

\begin{table}[!thb]
\vspace{-0.6em}
\caption{\textbf{Pearson correlation analysis between EMID and its upper bound.} We provide Pearson $r$ and $p$-value between the empirical estimates of EMID (Eq. \eqref{eq:emid}) and its upper bound (Eq. \eqref{eq:emid_bound_simple}) on 34 synthetic and 27 natural shift scenarios per model.
}
\centering
{\scriptsize
\begin{tabular}{@{}l|cccc@{}}
\toprule
               & \multicolumn{2}{c}{Synthetic} & \multicolumn{2}{c}{Natural}  \\ 
Model          & Pearson $r$  & $p$-val & Pearson $r$ & $p$-val \\ \midrule
LLaVA v1.5 7B  & 0.755      & $<$0.001 & 0.553     & 0.003            \\
LLaVA v1.5 13B & 0.785      & $<$0.001 & 0.638     & $<$0.001 \\
LLaVA NeXT 7B  & 0.742      & $<$0.001 & 0.594     & 0.001            \\
LLaVA NeXT 13B & 0.807      & $<$0.001 & 0.550     & 0.003            \\ \midrule
All models     & 0.746      & $<$0.001 & 0.565     & $<$0.001 \\ \bottomrule
\end{tabular} \label{tab:emid_ub_pearson}
}
\vspace{-0.6em}
\end{table}
\paragraph{Verification of bound.} We now validate our main Theorem. Figure \ref{fig:emid_ub_scatter} (left two) shows the scatter plots comparing EMID with its upper bound across four models, each evaluated over 34 and 27 synthetic and natural distribution shifts. We see a clear trend between EMID and its upper bound in the synthetic shift where we could directly control the severity of shifts. While the result of the natural shift case is noisier, the overall trend is similar. Meanwhile, our bounds depend on the distributional discrepancy between the model's response $\hat{y}$ and the ground truth response $y$. Thus, they naturally induce different bounds per MLLM. The right panel of Figure \ref{fig:emid_ub_scatter} presents model-wise plots with linear regression coefficients, where we observe that each model has a different degree of sensitivity against shifts. Pearson correlation analysis results in Table \ref{tab:emid_ub_pearson} further confirm statistically significant correlations between EMID and its upper bound, supporting the validity of our theorems.

\paragraph{Partial bound analysis.} It is common that we can not access the ground truth response $Y$ from our training and evaluation datasets. Then, one may want to use a part of EMID upper bound (e.g., JSD terms on visual and textual input only) as an estimator of EMID, by neglecting the output-related term $\Delta$. In Figure \ref{fig:emid_partialub_scatter}, we investigate whether the summation of two JS divergence terms can still be predictive for EMID. Although the trends become loose compared to the full bound due to the non-optimality of MLLM parameters, the partial upper bound still has moderately high correlations (denoted by Pearson $r$) with EMID.
\begin{figure}[!ht]
    \vspace{-0.65em}
    \centering
    \includegraphics[width=0.98\linewidth]{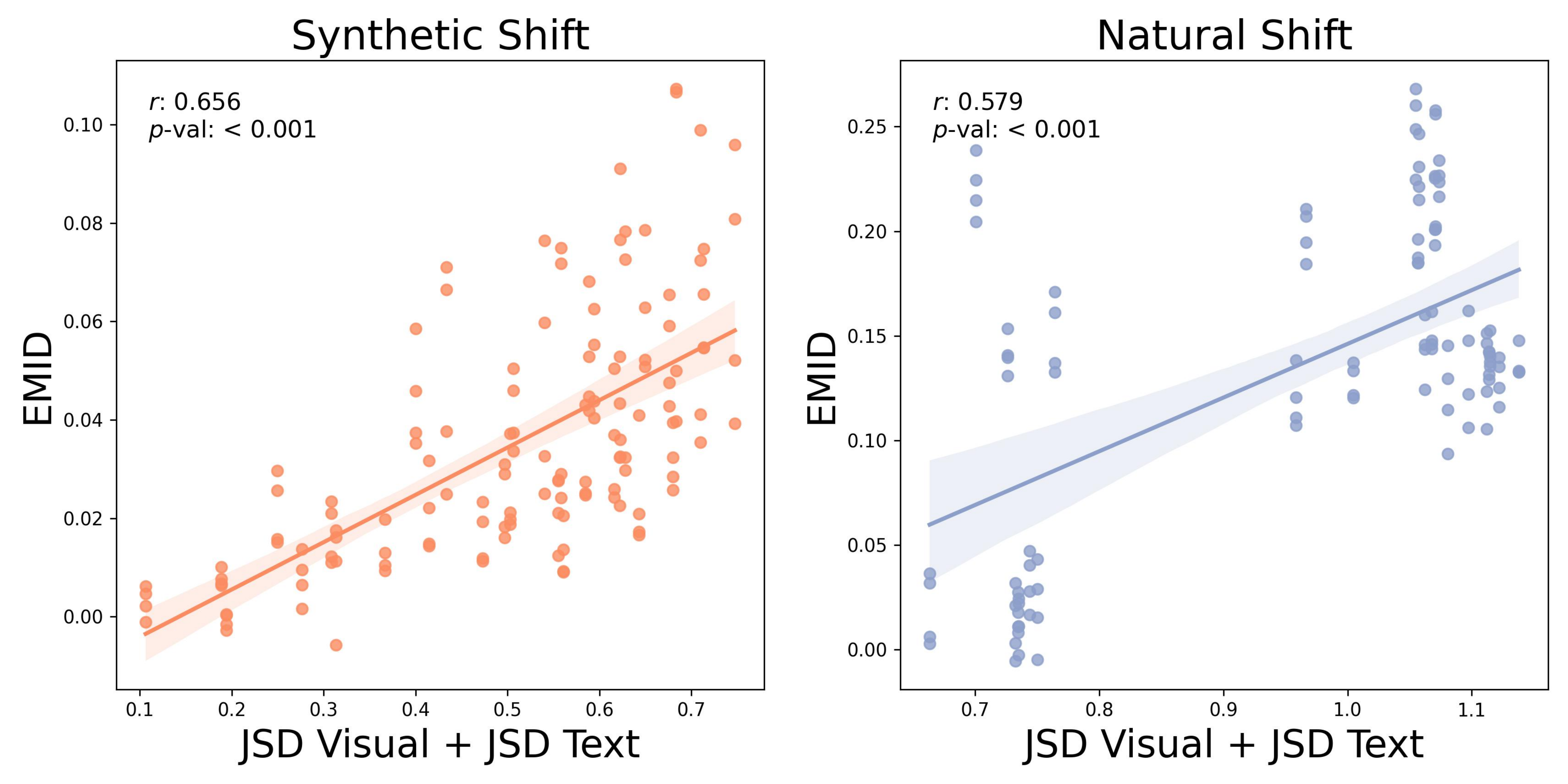}
    \vspace{-0.7em}
    \caption{\textbf{Scatter plot with regression line between empirical estimates of EMID and partial components of its upper bound.} We remove the $\Delta$ term of bound (Eq. \eqref{eq:emid_bound_simple}) and only use the estimates of JSD terms over visual and text inputs. 
    }
    \label{fig:emid_partialub_scatter}
    \vspace{-0.5em}
\end{figure}

\paragraph{Validation with advanced MLLMs.} So far, our evaluation has focused on LLaVA series MLLMs. Now we validate EMI and EMID upper bound with two state-of-the-art MLLMs, Qwen2.5-VL-7B~\cite{bai2025qwen2} and InternVL2.5-7B~\cite{chen2024expanding}, under synthetic shift scenarios. In Table~\ref{tab:othermllms}, we see that Qwen2.5-VL model shows strong correlations between EMI and RP score, and between EMID and its upper bound with remarkable statistical significance. Although the correlations between EMI and RP score are somewhat weak in the case of InternVL2.5, those are still non-negligible correlations~\cite{Schober2018} with the significance level of 0.05, implying the generality of our framework across different models.
\begin{table}[t]
\centering
\vspace{-0.4em}
\caption{\textbf{Verification of EMI and EMID bound with advanced MLLMs.} We conduct correlation analysis between EMI and RP, and between EMID and its upper bound (UB) with Qwen2.5-VL-7B~\cite{bai2025qwen2} and InternVL2.5-7B~\cite{chen2024expanding}, and observe statistically significant correlations.}
\resizebox{\linewidth}{!}{\begin{tabular}{@{}lccc@{}}
\toprule
Model        & \multicolumn{2}{c}{EMI $\Leftrightarrow$ RP} & EMID $\Leftrightarrow$ UB \\ 
 & Spearman $\rho$ ($p$-val) & Kendall $\tau$ ($p$-val) & Pearson $r$ ($p$-val) \\ \midrule
Qwen2.5-VL   & 0.767 ($<$0.001) & 0.571 ($<$0.001) & 0.672 ($<$0.001) \\
InternVL2.5 & 0.375 (0.029) & 0.273 (0.023) & 0.810 ($<$0.001) \\ \bottomrule
\end{tabular}} \label{tab:othermllms}
\vspace{-0.65em}
\end{table}

\paragraph{EMI analysis on a specialized domain.} Since the proposed information-theoretic measures rely on embeddings from external models, e.g., CLIP and RoBERTa, to easily compute empirical estimates of MI and JSD, one may wonder whether our framework could be applied to a kind of specialized domain datasets such as medical visual question answering. To investigate this, we adopt LLaVA-Med instruction dataset~\cite{li2024llava} as a domain-specific open-ended benchmark on the medical images and corresponding questions. We demonstrate, in Table~\ref{tab:medical}, that EMI and EMID upper bound show strong correlation with RP score and EMID, respectively, indicating the broad applicability of our method in a specialized domain as well.
\begin{table}[hb]
\vspace{-0.5em}
\centering
\caption{\textbf{Verification of EMI and EMID bound on LLaVA-Med dataset.} We conduct correlation analysis between EMI and RP, and between EMID and its upper bound (UB) with LLaVA v1.5 (7B) on open-ended medical domain QA tasks.}
{\small
\begin{tabular}{@{}ccc@{}}
\toprule
\multicolumn{2}{c}{EMI $\Leftrightarrow$ RP} & EMID $\Leftrightarrow$ UB \\ 
Spearman $\rho$ ($p$-val) & Kendall $\tau$ ($p$-val) & Pearson $r$ ($p$-val) \\ \midrule
0.718 ($<$0.001) & 0.572 ($<$0.001) & 0.930 ($<$0.001)         \\ \bottomrule
\end{tabular}}\label{tab:medical}
\vspace{-0.75em}
\end{table}

\paragraph{Application: EMID upper bound as a regularization.} Although our framework mainly stands for the performance analysis of MLLMs under distribution shifts at inference time, one can also leverage EMI and/or EMID upper bound at training time. We give a simple example that instantiates the EMID upper bound as an additional regularization term during visual instruction tuning as below,
\begin{equation} \label{eq:emidub_reg} 
    \mathbb{E}[H(P_{\theta}(\cdot|\mathbf{x}))] \cdot (D^{\frac{1}{2}}_{\rm JS}(P_{Z_{v}}||\mathcal{N}) + D^{\frac{1}{2}}_{\rm JS}(P_{Z_{t}}||\mathcal{N}),
\end{equation}
where $\mathbf{Z}=(Z_v,Z_t)$ denotes an intermediate representation of MLLM given $\mathbf{X}$ and $\mathcal{N}$ is an isotropic Gaussian distribution that has the same dimensionality as $Z_{v}$ and $Z_{t}$. Here, we utilize a non-informative prior distribution $\mathcal{N}$ as an alternative to the target distribution $Q$ (which is usually inaccessible during training) to implicitly penalize representation discrepancy between $P$ and $Q$. The above term encourages intermediate representations of visual and text inputs to be shrunk into a zero-centered Gaussian where the penalizing strength is scaled by an averaged model output entropy across batch samples. Table~\ref{tab:ub_reg} presents evaluation results of LLaVA v1.5 7B model on LLaVA-Bench COCO (ID) and its synthetic shift variants (V, T, and J Shift) after being instruction tuned on LLaVA-mix-665k subset, where we see our EMID-based regularization effectively improves robustness to shifts while maintaining the ID performance.
\begin{table}[]
\vspace{-0.3em}
\centering
\caption{\textbf{Visual instruction tuning with EMID upper bound.} We use EMID upper bound as a regularization term (Eq.~\ref{eq:emidub_reg}) during instruction tuning of LLaVA v1.5 (7B) on a 10\% subset of LLaVA-mix-665K w/ and w/o $R$, and report relative preference scores.}
{
\small
\begin{tabular}{@{}l|c|ccc@{}}
\toprule
Method          & ID   & V Shift & T Shift & J Shift \\ \midrule
Baseline        & 72.7 & 65.8    & 68.0    & 59.6    \\
Baseline w/ $R$ & 72.7 & \textbf{66.3}    & \textbf{68.3}    & \textbf{60.8}    \\ \bottomrule
\end{tabular}}\label{tab:ub_reg}
\vspace{-0.3em}
\end{table}
\vspace{-1em}
\section{Related Works} \label{sec:related_works}

\paragraph{Fine-tuned foundation models under distribution shifts.} 
Recent findings imply that fine-tuning on a relatively small amount of ID datasets hurts the OOD generalization capability of foundation models \cite{kumar2022fine, wortsman2022robust}. Although lots of follow-up studies \cite{goyal2023finetune, tian2023trainable, oh2025dawin} including theory-inspired methods \cite{kumar2022fine, pmlr-v162-ju22a, oh2024towards} have been proposed, almost all of them focused on a discriminative model such as CLIP \cite{radford2021learning} for the image classification task. Given the rapidly growing popularity of MLLMs, it is necessary to investigate the reliability of MLLMs under distribution shifts with a tangible formulation. We lay a cornerstone for this.
\vspace{-0.25em}
\paragraph{Performance analysis of MLLM.} There have been numerous reports on MLLMs' corner-case behaviors. \citet{zhang2024out}, \citet{zhou2024adapting}, and \citet{verma2024evaluating} observed that MLLMs poorly perform under specialized domains or synthetic perturbation, while  \citet{zhai2024investigating} and \citet{zhang2024vlmclf} showed that MLLMs are bad at some simple image classification tasks. Besides, \citet{li2023evaluating} and \citet{ye2025beaf} focused on the object hallucination of MLLM under spurious correlation. However, they all lacked a formal framework to explain such degradation of MLLMs. We recast the degeneration of MLLMs via robustness under distribution shifts between instruction-tuning and evaluation data \cite{liang2025aligning}, and devise the first theoretical framework to analyze MLLMs.

\paragraph{Information-theoretic approach for model evaluation.} The information-theoretic view has been steadily adopted to establish evaluation criteria for language model probing \cite{hewitt2021conditional}, prompt engineering \cite{sorensen2022information}, and rationale evaluation \cite{chen2023rev}, alongside learning objectives \cite{alemi2016deep, chen2016infogan, Tschannen2020On, Kong2020A, wang2021infobert}, but relatively unexplored for MLLM yet. We also note some works adopting information-theoretic approaches to analyze models under distribution shifts \cite{federici2021information, shui2022novel}. Although they focused on classification tasks with discriminative models, we established new theorems for MLLM analysis based on a new metric, EMI.
 
%\vspace{-1em}
\section{Discussion} \label{sec:conclusion}
This work urged the development of a formal framework to understand MLLMs under distribution shifts which is unexplored yet crucial for reliable artificial intelligence in the wild. As a first step for this, we devised effective mutual information (EMI) as a metric for MLLM evaluation and showed its theoretical connection to an existing standard metric, relative preference score. Then, we provide a theoretical upper bound for an MLLM's EMI difference between ID and OOD that consists of JS divergences for marginal and conditional distributions of input/output variables. Through experiments on benchmarks spanning 61 distribution shifts, we show the correlation between relative preference and EMI, and further show correlations between EMI difference and its upper bound, thereby empirically verifying our theoretical claims across various models.

\vspace{-0.2cm}
\paragraph{Practical implication.} 
As shown in Table \ref{tab:rp_emi_corr}, EMI strongly correlates with RP. Compared to RP, the MI estimator can be computed more efficiently without relying on the computationally expensive judge LLM \cite{achiam2023gpt} (see Appendix \ref{appendix:experiment:practical}). Thus, EMI can be used as a theoretically grounded, cost-effective evaluation metric that measures the effective relevance between multimodal queries and open-ended responses. Besides, the upper bound of EMID can be adopted as a regularizer during model training, as we showed in Table~\ref{tab:ub_reg}, or test-time adaptation of MLLM to improve its robustness to distribution shifts \cite{li2023robust}, as well as a robustness measure of MLLM.

\vspace{-0.2cm}
\paragraph{Limitation and future work.} 
Although input-output relevance measured by EMI is one of the most important properties for instruction-following models, other crucial quality attributes are not captured by the form of the relevance term. Extending the theory to support evaluation across multiple facets of MLLM will be worth exploring. In addition, we only simulated some intuitive types of distribution shifts with a simplified assumption for the data structure, leaving the analysis on some complex shifts driven by spurious correlation \cite{simon1954spurious} that may be covered by Thm. \ref{thm:emid_bound}. Meanwhile, despite its consistent correlation to EMID, our upper bound is not quite tight in theory, and we did not discuss a lower bound of EMID. Pursuing a tighter bound or exploring the lower limit of EMID can also be worthwhile.
\section*{Acknowledgement}
We gratefully appreciate the ICML anonymous reviewers for their valuable feedback, and also appreciate constructive comments from Max Khanov, Min-Hsuan Yeh, Dongkwan Kim, and  Kyungeun Lee. Changdae Oh, Shawn Im, Xuefeng Du, and Yixuan Li are supported by the AFOSR Young Investigator Program under award number FA9550-23-1-0184, National Science Foundation (NSF) Award No. IIS-2237037 and IIS-2331669, Office of Naval Research under grant number N00014-23-1-2643, Schmidt Sciences Foundation, and Alfred P. Sloan Fellowship. Also, Zhen Fang was funded by the Australian Government through the Australian Research Council (ARC) under grant number DE250100363.

\section*{Impact Statement}
This work lays the first theoretical foundation for quantifying the reliability of MLLMs. The theoretical statements we made shine a light on analyzing the MLLM performance gap under distribution shifts, which commonly emerged in real-world applications. Our framework can help aware of the potential risk, i.e., performance variation, of MLLMs, and guide practitioners to devise a method towards robust adaptation for chat assistants, thereby ensuring the trustworthiness of artificial intelligence solutions in crucial social applications such as finance and healthcare.

\nocite{langley00}

\bibliography{main}
\bibliographystyle{icml2025}

\newpage
\appendix
\onecolumn

\begin{center}
    \LARGE \textbf{Appendix}
    \vspace{1em}
\end{center}
\tableofcontents
\let\addcontentsline\OriginalAddContentsLine

\section{Detailed Description for Effective Mutual Information}\label{appendix:emi}
We propose effective mutual information (EMI) as an alternative to vanilla mutual information (MI) to evaluate a model-generated output response given an input query. As explained in Section \ref{sec:3_2_emi}, MI (e.g., $I(P_{\mathbf{X}}\otimes P_{\theta})$) can not take into account the intrinsic characteristics of data distribution. See Figure \ref{fig:emi_motivation} for an intuitive example. The amount of information represented by entropy $H(\cdot)$ and conditional entropy $H(\cdot|\cdot)$ can vary depending on the data-generating process of each dataset. For example, if the task we are interested in is closer to solving a narrow problem in some specific domain (e.g., OOD1: LLaVA-Med; \citet{li2024llava}), the cardinality of the desired output response space may be significantly smaller than that of a general problem-solving task in a general domain (e.g., OOD2: LLaVA-Bench Wild; \citet{liu2023visual}), and the ground truth MI can differ depending on the domain. By considering these baseline amounts of information, EMI can measure how much our model captures \textbf{effective relevance} between input and output.

In Section \ref{sec:3_2_emi}, we provide some justifications for using EMI as an evaluation metric of MLLMs by revealing analogies to excess risk and effective robustness and presenting its theoretical connection to relative preference score. While LLM-as-a-Judge enables flexible evaluation for open-ended generation tasks with multiple user-defined criteria, EMI confines the facet of evaluation to query-response relevance. However, compromise in the flexibility of evaluation enables us to build solid theoretical statements that are necessary for understanding MLLMs to shifts and improving them in a principled way.

\begin{figure}[t]
    \centering
    \includegraphics[width=0.5\linewidth]{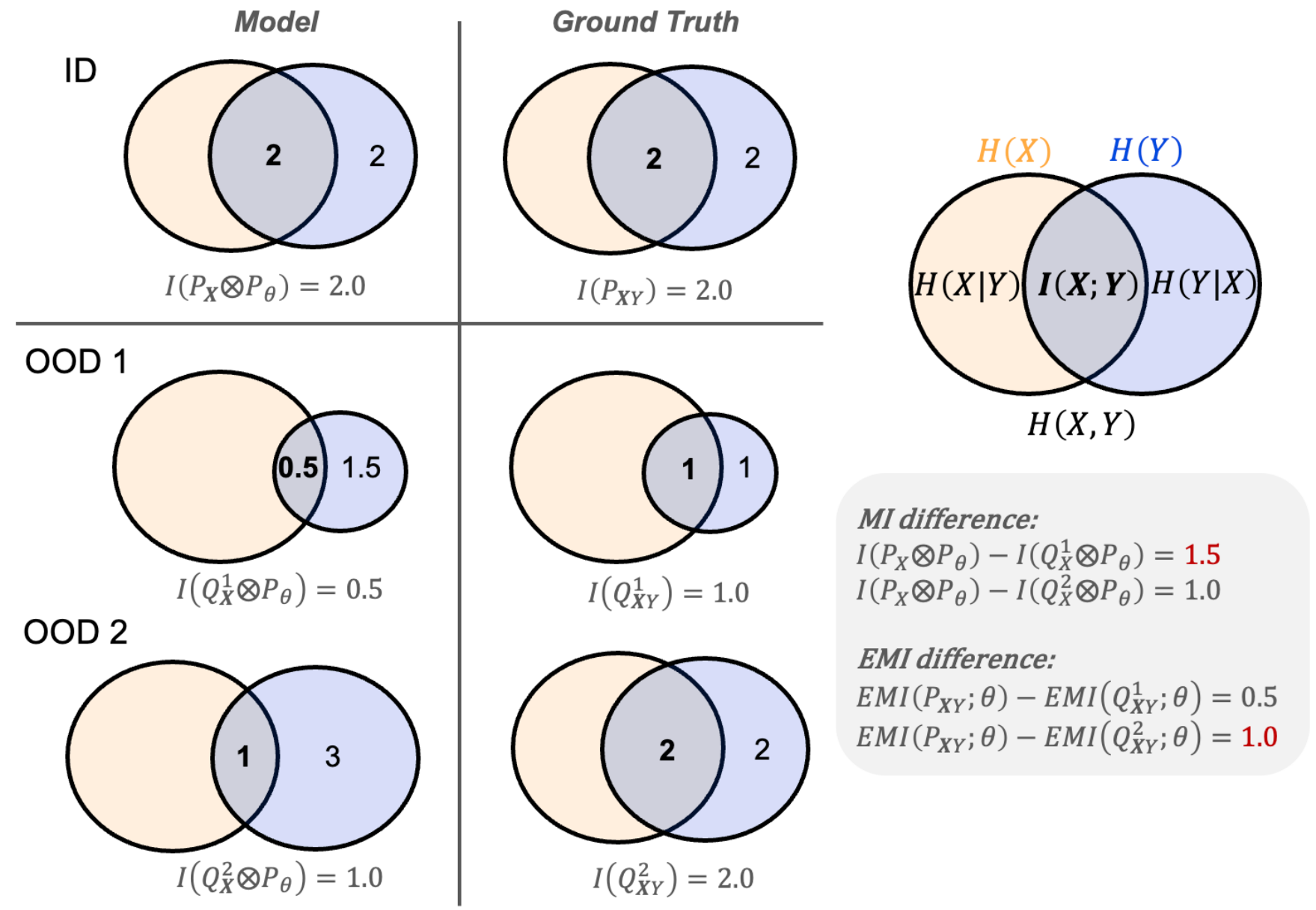}
    \caption{\textbf{Information diagram and motivation of effective mutual information.} The difference between vanilla MI terms does not consider the domain-dependent intrinsic scale and mutual information, thereby failing to fairly measure the relevance between input query $x$ and model prediction $\hat{y}$. Meanwhile, EMI ablates the domain-dependent characteristic to focus on measuring effective relevance between $x$ and $\hat{y}$.}
    \label{fig:emi_motivation}
\end{figure}

Meanwhile, as we adopt neural network models for empirical estimation of EMI, it is somewhat similar to the model-based heuristic metrics, such as BERTscore \cite{zhang2020bertscore}, BARTscore \cite{yuan2021bartscore}, and CLIPscore \cite{hessel2021clipscore}, that map input(s) to a scalar score through a single forward evaluation of the model. However, we take a step further beyond the simple working-heuristic method and lay a theoretical foundation with EMI.

\section{Implementation Details}\label{appendix:implementation_details}
In this paper, we proposed EMI for a reliable evaluation of multimodal large language models (MLLMs) with a theoretical ground. Based on EMI, to analyze the MLLM performance gap under distribution shift, we provided the upper bound for EMI difference between ID and OOD data. In this section, we describe the procedures for estimating EMI and its upper bound in detail. 

\paragraph{Overview.} To estimate EMI and its upper bound, we first need to define estimators for MI and Jensen-Shannon divergence (JSD). Those estimators commonly adopt neural network encoders to project the raw data such as text and image into embedding space of neural networks to reduce problem complexity \cite{oord2018representation, liu2020learning}, and then, MI estimator commonly optimizes a simple critic function \cite{poole2019variational} on top of the embeddings of data. After training of MI estimator, we evaluate empirical MI over different data distributions. For JSD estimation, given the embedding spaces of pre-trained models, additional training is not necessary. Therefore, the procedures can be divided into two phases: (1) neural MI estimator training, and (2) inference of MI and JSD.

\paragraph{MI estimation.} Estimating MI with finite samples from unknown population distribution is a non-trivial problem, and has been actively studied \cite{fraser1986independent, paninski2003estimation, kraskov2004estimating, nguyen2010estimating, shwartz2017opening, mine2018, poole2019variational, cheng2020club}. We adopted the contrastive log-ratio upper bound (CLUB; \citet{cheng2020club}) as our default MI estimator similar to \cite{cheng2021fairfil}. We first extract embeddings for visual input query $Z_{v}=\text{enc}_{v}(X_{v})$ and text input query $Z_{t}=\text{enc}_{t}(X_{t})$ from visual and text encoder models and take the mean of them to provide input query embedding $Z_{\mathbf{X}}=\frac{Z_{v}+Z_{t}}{2}$. Specifically, we adopt the most representative embedding models for each modality, i.e., CLIP pre-trained\footnote{ \url{https://github.com/openai/CLIP}.} ViT-B/32 and XLM-RoBERTa-Base\footnote{\url{https://huggingface.co/FacebookAI/xlm-roberta-base}} \cite{conneau2019unsupervised} as visual and text encoders, respectively by default. We also obtain the embedding vectors for the model response $Z_{\hat{Y}}=\text{enc}_{t}(\hat{Y})$ and reference response $Z_{Y}=\text{enc}_{t}(Y)$ with text encoder model. Then, we train the MI estimator $\hat{I}_{\psi}(\cdot,\cdot)$ with parameter $\psi$ via gradient descent. To be specific, CLUB formulates the unbiased estimation for MI as below,
\begin{equation}
    \hat{I}_{\text{CLUB}}(P_{Z_{\mathbf{X}}Z_{Y}})=\frac{1}{N}\sum_{i=1}^{N}\log q_{\psi}(z_{y_{i}}|z_{x_{i}}) - \frac{1}{N^{2}}\sum_{i=1}^{N}\sum_{j=1}^{N}\log q_{\psi}(z_{y_j}|z_{x_{i}}),
\end{equation}
where $q_{\psi}(\cdot|\cdot)$ denotes variational approximation of ground truth probability density function $p(\cdot|\cdot)$. 

Following \cite{cheng2020club, cheng2021fairfil}, we parameterize the $q_{\psi}$ as a multi-variate Gaussian distribution and estimate the mean and variance parameters of Gaussian with separated two-layer MLPs with 250 hidden dimension size. During mini-batch training, those MLPs consume the concatenated input and response embeddings $\{[z_{\mathbf{x}_{i}},z_{y_{i}}]\}_{i=1}^{N}$ to produce a scalar estimate of MI, and they are simultaneously optimized by AdamW optimizer with learning rate 0.001 and batch size 1,024 for 5,000 iterations. 
However, if we have to train an estimator for every ID-OOD data pair, it may not be practical when the number of data pairs to be evaluated is large. Therefore, we constructed a dataset that integrates all ID-OOD data subsets for MI training (integration of all variants of LLaVA-Bench datasets reach roughly 5,000 samples for natural shift, and 9,000 samples for synthetic shift), trains it only once, and then infers all ID-OOD scenarios (27 for natural shift, 34 for synthetic shift) using these common MI estimators. This not only significantly reduces the time required to evaluate the model’s robustness against multiple distribution shift scenarios, but also stabilizes the training process by increasing the size of the data set used in the training process.

\paragraph{JS divergence estimation.} Estimation of distribution divergences from finite samples has been also a central topic of research \cite{yang1999information, sriperumbudur2012empirical, li2016renyi, bu2018estimation, sinn2018non, sreekumar2022neural, hoyos2023representation}. We adopt the most recent one, representation Jensen-Shannon divergence (RJSD; \citet{hoyos2023representation, hoyos2024a}), which proves its effectiveness on real benchmark datasets as our JSD estimator. The formula is as follows:
\begin{equation}
    \hat{D}_{\text{RJSD}}(P,Q)=S(\frac{C_{P}+C_{Q}}{2}) - \frac{1}{2}(S(C_{P})+S(C_{Q})),
\end{equation}
where $C_{P}=\mathbb{E}_{X\sim P}[\phi(X) \otimes \phi(X)]$ and $S(C_{P})= -\text{Trace}(C_{P}\log C_{P})$. Similar to MI, we compute $\hat{D}_{\text{RJSD}}(P,Q)$ in the embedding space of the same frozen pre-trained models, i.e., leverage neural network embedding space as a kernel $<\phi(x),\phi(x')>$. In contrast to the case of MI, RJSD with a frozen neural embedding model does not require additional training. Still, one might consider learning the embedding model from scratch if necessary.

\paragraph{Scale-adjusted EMID upper bound construction.} By leveraging the estimator described above, one can compute all JSD terms in the proposed EMID upper bound (Eq.~\ref{eq:emid_bound_simple}): $D_{\rm JS}(P_{X_v}||Q_{X_{v}})$, $D_{\rm JS}(P_{X_t}||Q_{X_{t}})$, $D_{\rm JS}(P_{Y_{\theta}}||P_{Y})$, and $D_{\rm JS}(Q_{Y_{\theta}}||Q_{Y})$). Because the exact computation of the entropy scaler term $\hat{H}$ in Eq.~\eqref{eq:emid_bound_simple} is intractable, we relax it with an estimate on batch samples, and replace the inaccessible true conditional distribution $Q_{Y|\mathbf{X}}$ with $P_{\theta}$. For a case of sentence output $Y=\{y_{1},...,y_{L}\}$ with $L$ tokens, the length-normalized batch entropy estimate over $N$ samples indexed through $i\in\mathcal{I}=\{1,...,N\}$ can be formulated as below, 
\begin{equation}
    \tilde{H}=\max_{i\in\mathcal{I}}-\frac{1}{L}\sum_{l=1}^{L}\log P_{\theta}(y_{i,l}|y_{i,<l},x_{i}).
\end{equation}
In pilot experiments, we observed that the values of $\tilde{H}$ are centered on 2.0 in the datasets that we considered in this work. Therefore, $\hat{H}=\max_{\mathbf{x}\in\mathcal{X}}[H(Q_{Y|\mathbf{X}=\mathbf{x}})+H(P_{\theta}(\cdot|\mathbf{x}))]$ can approximately have a value 4.0. As the main implications of EMID upper bound are providing the characterization of MLLM performance gaps and serving as a practical measure of robustness, we compute the scale-adjusted upper bound estimates for EMID upper bound (UB) as below,
\begin{align*}
    \text{EMID UB}&=\widehat{H}\big( { D^{\frac{1}{2}}_{\rm JS}(P_{X_{v}}\|Q_{X_{v}})} + {D^{\frac{1}{2}}_{\rm JS}(P_{X_{t}}\|Q_{X_{t}})} \big) + 4(D^{\frac{1}{4}}_{\rm JS}(P_{Y_{\theta}}||P_{Y})+D^{\frac{1}{4}}_{\rm JS}(Q_{Y_{\theta}}||Q_{Y})) \\
    &\approx4\big( { D^{\frac{1}{2}}_{\rm JS}(P_{X_{v}}\|Q_{X_{v}})} + {D^{\frac{1}{2}}_{\rm JS}(P_{X_{t}}\|Q_{X_{t}})}  +D^{\frac{1}{4}}_{\rm JS}(P_{Y_{\theta}}||P_{Y})+D^{\frac{1}{4}}_{\rm JS}(Q_{Y_{\theta}}||Q_{Y})\big) \\
    \text{Scale-adjusted}\; \hat{\text{UB}}&:= \hat{D}^{\frac{1}{2}}_{\text{RJSD}}(P_{X_v},Q_{X_v})+\hat{D}^{\frac{1}{2}}_{\text{RJSD}}(P_{X_t},Q_{X_t})+\hat{D}^{\frac{1}{4}}_{\text{RJSD}}(P_{Y_{\theta}},P_{Y})+\hat{D}^{\frac{1}{4}}_{\text{RJSD}}(Q_{Y_{\theta}},Q_{Y}).
\end{align*}
See Section~\ref{appendix:thm} for the detailed derivation of EMID UB. Note that applying the linear transformation to the target variables of Pearson correlation analysis does not affect the correlation coefficient value, so that the correlation coefficients between the EMID and its scale-adjusted UB has the same values as between the EMID and its original UB, even though it is not the exact estimate of EMID UB.

\begin{figure*}[t]
    \centering
    \includegraphics[width=0.9\linewidth]{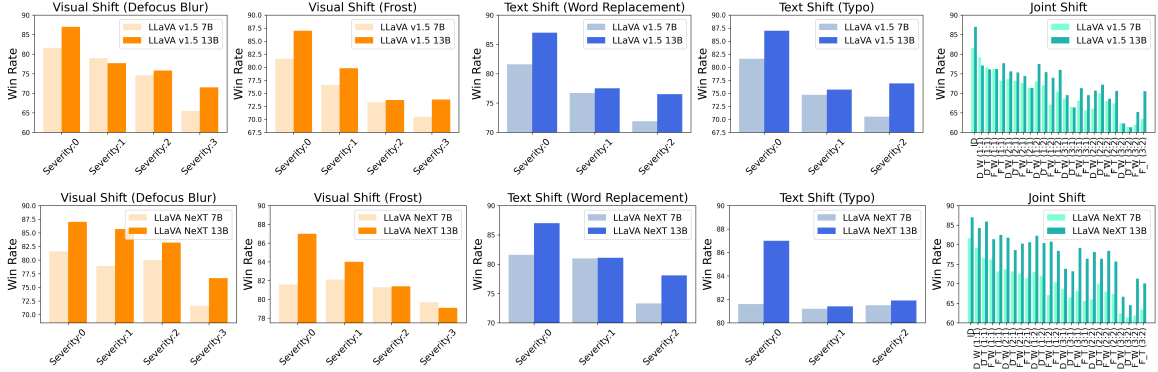}
    \vspace{-0.25em}
    \caption{\textbf{Performance variation against varying degrees of distribution shifts.} We evaluated LLaVA v1.5 and LLaVA NeXT models on 34 out-of-distribution (OOD) variants induced by image and text perturbations of the LLaVA-Bench COCO dataset (ID). Here, the $x$-axis is sorted by the severity of shifts between ID and OOD. There is a consistent trend -- increased degrees of distribution shifts result in performance degradations of MLLM.}
    \label{fig:pilot_study_synthetic}
\end{figure*}

\paragraph{MLLM judge and relative preference (RP) score.} For evaluation of open-ended generation tasks, (M)LLM-as-a-Judge has been adopted as a current de facto standard. Following \cite{liu2023visual,liu2024improved}, we use GPT-4\footnote{\texttt{gpt-4-turbo} with \texttt{2024-08-01-preview} API version was mainly adopted} with text-only inference mode (with plain-text form visual cue such as ground truth caption for image) as a judge model and also use the output of the same model as a reference answer for each query. We leverage the prompts provided by the source code of LLaVA\footnote{https://github.com/haotian-liu/LLaVA}, and compute the RP score of a model of interest by comparing its output with that of the reference answer.

\section{Extended Empirical Validation and Discussion}\label{appendix:experiment}
\subsection{Additional Result from Pilot Study}
In Section \ref{sec:motivation}, we conduct an experiment to validate our hypotheses on the relation between MLLM performance degradation and the severity of natural distribution shift. In Figure \ref{fig:pilot_study_synthetic}, we provide additional results from another type of distribution shift that occurred by image and text perturbations. For image perturbation, we consider defocus blur and frost with three different magnitudes, and for text perturbation, we consider keyboard typo error and word synonym replacement with two different magnitudes (We adopt the source code of \url{https://github.com/Jielin-Qiu/MM_Robustness} to generate perturbed datasets). We observe the consistent trend in the relation between MLLM performance degradation and the severity of distribution shifts for the case of visual-only, text-only, and joint shift, likewise the case of natural shifts in Figure \ref{fig:pilot_study}. That is, the increased magnitude of distribution shifts induces more severe MLLM performance degradation, and the degree of performance degradation can attribute to shifts in two modalities.

\subsection{Different Design Choices of MI and JSD Estimation} Note that the results of all theorem (Lemma \ref{Main-thm1-lemma}, Theorem \ref{Main-thm1-thm}, Theorem \ref{thm:emid_bound_simple}, and Theorem \ref{thm:emid_bound}) are not limited to a specific class of MI and JSD estimators. To investigate whether our empirical verification of theorems robustly holds in an estimator-agnostic manner (if the estimator is valid), we provide an ablation study for the  MI estimator, JSD estimator, and embedding space that the estimators are built on. 

Specifically, we consider four MI estimators $\{$NWJ \cite{nguyen2010estimating}, MINE \cite{mine2018}, InfoNCE \cite{oord2018representation}, CLUB \cite{cheng2020club}$\}$, three embedding spaces $\{$individual models (CLIP ViT and XLM-RoBERTa), E5-V joint \cite{jiang2024e5}, E5-V disjoint \cite{jiang2024e5}$\}$, and two JSD estimators $\{$MMD \cite{liu2020learning}, RJSD\cite{hoyos2023representation}$\}$. E5-V \cite{jiang2024e5} is a recently proposed embedding extraction method that leverages an MLLM with a carefully designed prompt. We used the default prompt ``\texttt{Summary above sentence/image in one word: }" to separately extract embeddings (E5-V disjoint) for images and sentences, and design an ensemble of four custom prompts, 
\begin{enumerate}
    \item ``\texttt{Summary of the image <image>, and sentence <sent> in one word: }"
    \item ``\texttt{Summary of the visual content "<image>" with an associated text query "<sent>" in one word: }"
    \item ``\texttt{Given <image>, Summary of the sentence "<sent>" in one word: }"
    \item ``\texttt{Given visual content "<image>", Summary of the text query "<sent>" in one word: }",
\end{enumerate}
to extract multimodal joint query embedding (E5-V joint) by averaging four embedding vectors per (image, sentence) pair.
\begin{table}[th]
\caption{\textbf{Ablation study for MI estimator and embedding space}. We evaluate four MI estimators with three different embedding space choices in terms of Spearman correlation coefficient $\rho$ between RP score and EMI. We can see that EMI and RP score are robustly correlated to variations in the embedding space and the MI estimator, but CLUB shows the most stable correlation.}
\small
\centering
\begin{tabular}{@{}ll|cccccccc@{}}
\toprule
\multicolumn{2}{c}{Configuration} &
  \multicolumn{2}{c}{LLaVA v1.5 7B} &
  \multicolumn{2}{c}{LLaVA v1.5 13B} &
  \multicolumn{2}{c}{LLaVA NeXT 7B} &
  \multicolumn{2}{c}{LLaVA NeXT 13B} \\ \midrule
MI estimator & Embedding         & $\rho$   & $p$-val & $\rho$   & $p$-val & $\rho$   & $p$-val & $\rho$   & $p$-val \\ \midrule
CLUB         & E5-V disjoint      & 0.695 & 0.000 & 0.726 & 0.000 & 0.581 & 0.001 & 0.579 & 0.001 \\
CLUB         & E5-V joint         & 0.910 & 0.000 & 0.846 & 0.000 & 0.817 & 0.000 & 0.902 & 0.000 \\
CLUB         & Individual models & 0.606 & 0.001 & 0.720 & 0.000 & 0.594 & 0.001 & 0.457 & 0.014 \\
InfoNCE      & E5-V disjoint      & 0.670 & 0.000 & 0.708 & 0.000 & 0.638 & 0.000 & 0.590 & 0.001 \\
InfoNCE      & E5-V joint         & 0.800 & 0.000 & 0.717 & 0.000 & 0.636 & 0.000 & 0.609 & 0.001 \\
InfoNCE      & Individual models & 0.519 & 0.005 & 0.421 & 0.026 & 0.410 & 0.030 & 0.275 & 0.157 \\
MINE         & E5-V disjoint      & 0.664 & 0.000 & 0.605 & 0.001 & 0.269 & 0.167 & 0.278 & 0.153 \\
MINE         & E5-V joint         & 0.632 & 0.000 & 0.559 & 0.002 & 0.610 & 0.001 & 0.308 & 0.111 \\
MINE         & Individual models & 0.632 & 0.000 & 0.562 & 0.002 & 0.632 & 0.000 & 0.613 & 0.001 \\
NWJ          & E5-V disjoint      & 0.583 & 0.001 & 0.552 & 0.002 & 0.513 & 0.005 & 0.429 & 0.023 \\
NWJ          & E5-V joint         & 0.502 & 0.005 & 0.519 & 0.005 & 0.492 & 0.008 & 0.480 & 0.010 \\
NWJ          & Individual models & 0.510 & 0.006 & 0.717 & 0.000 & 0.488 & 0.008 & 0.322 & 0.095 \\ \bottomrule
\end{tabular} \label{tab:ablation_emirp}
\end{table}
\begin{table}[htb]
\caption{\textbf{Ablation study for MI estimator, JSD estimator, and embedding space}. We evaluate four MI estimator candidates and two JSD estimator candidates, with three different embedding space choices in terms of Pearson correlation coefficient between EMID and its upper bound. In all the considered variations, EMID and the upper bound of EMID (i.e., the simplified version in Theorem \ref{thm:emid_bound_simple}) show strong correlations, implying that our theorem robustly holds in practice.}
\small
\centering
\begin{tabular}{@{}lll|cc@{}}
\toprule
\multicolumn{3}{c}{Configuration} &
  \multicolumn{2}{c}{Pearson} \\ \midrule
JSD estimator & MI estimator & Embedding         & $r$     & $p$-val  \\ \midrule
RJSD          & CLUB         & E5-V disjoint      & 0.618 & 0.000 \\
RJSD          & CLUB         & E5-V joint         & 0.659 & 0.000 \\
RJSD          & CLUB         & Individual models & 0.565 & 0.000 \\
RJSD          & InfoNCE      & E5-V disjoint      & 0.618 & 0.000 \\
RJSD          & InfoNCE      & E5-V joint         & 0.617 & 0.000 \\
RJSD          & InfoNCE      & Individual models & 0.295 & 0.002 \\
RJSD          & MINE         & E5-V disjoint      & 0.602 & 0.000 \\
RJSD          & MINE         & E5-V joint         & 0.534 & 0.000 \\
RJSD          & MINE         & Individual models & 0.630 & 0.000 \\
RJSD          & NWJ          & E5-V disjoint      & 0.611 & 0.000 \\
RJSD          & NWJ          & E5-V joint         & 0.413 & 0.000 \\
RJSD          & NWJ          & Individual models & 0.468 & 0.000 \\ \midrule
MMD           & CLUB         & E5-V disjoint      & 0.618 & 0.000 \\
MMD           & CLUB         & E5-V joint         & 0.659 & 0.000 \\
MMD           & CLUB         & Individual models & 0.478 & 0.000 \\
MMD           & InfoNCE      & E5-V disjoint      & 0.432 & 0.000 \\
MMD           & InfoNCE      & E5-V joint         & 0.617 & 0.000 \\
MMD           & InfoNCE      & Individual models & 0.295 & 0.002 \\
MMD           & MINE         & E5-V disjoint      & 0.602 & 0.000 \\
MMD           & MINE         & E5-V joint         & 0.623 & 0.000 \\
MMD           & MINE         & Individual models & 0.630 & 0.000 \\
MMD           & NWJ          & E5-V disjoint      & 0.611 & 0.000 \\
MMD           & NWJ          & E5-V joint         & 0.273 & 0.004 \\
MMD           & NWJ          & Individual models & 0.468 & 0.000 \\ \bottomrule
\end{tabular} \label{tab:ablation_emidup}
\end{table}
In Table \ref{tab:ablation_emirp}, we conduct Spearman correlation analysis over the 12 (4 $\times$ 3) cases of MI estimator and embedding space ablation. We can clearly see that EMI is consistently correlated with the RP score which demonstrates the robust effectiveness of our theorem in practice. Meanwhile, among the candidate MI estimators and embedding space, CLUB and two E5-V joint embeddings show outstanding results. However, E5-V embedding extraction require a forward pass of MLLM in contrast to the case of leveraging relatively small individual models (ViT base and BERT-base). To strike the balance between effectiveness and efficiency, we adopt CLIP ViT-B/32 and XLM-RoBERTa-Base embedding spaces by default.

Next, we present the Pearson correlation analysis result in Table \ref{tab:ablation_emidup} by ablating the JSD estimator with the MI estimator and embedding space choices. Although there are some variations in the exact values, we also observe consistently significant correlations between EMID and its upper bound (that of Theorem \ref{eq:emid_bound_simple}). Therefore, the upper bound of EMID we derived robustly holds in practice across diverse estimator configurations.

\subsection{Hyperparameter Sensitivity}
We provide the hyperparameter configuration for MI estimator training (Table \ref{tab:mi_hyperparameter}) and further provide sensitivity analysis for varying hyperparameters (Figure \ref{fig:param_ablation}). We see that the CLUB estimator is quite robust to varying hyperparameters, i.e., batch size, learning rate, and hidden dimension, which implies the effectiveness of EMI estimation without intensive hyperparameter tuning.
\begin{table}[h]\caption{\textbf{Hyperparameter tuning grid and selected value for MI estimator training.} The hyperparameters are selected based on the variance of last 10 iterations during training.}
\centering
\begin{tabular}{@{}lcc@{}}
\toprule
Parameter        & Selected & Sweep                                              \\ \midrule
learning rate    & 0.001    & \{0.005, 0.001, 0.0005, 0.0001\} \\
batch size       & 1024     & \{64, 128, 256, 512, 1024, 2048\}                  \\
hidden dimension & 100/500  & \{250, 500, 1000, 2000\}                      \\ \bottomrule
\end{tabular} \label{tab:mi_hyperparameter}
\end{table}
\begin{figure}[h]
    \centering
    \includegraphics[width=0.325\linewidth]{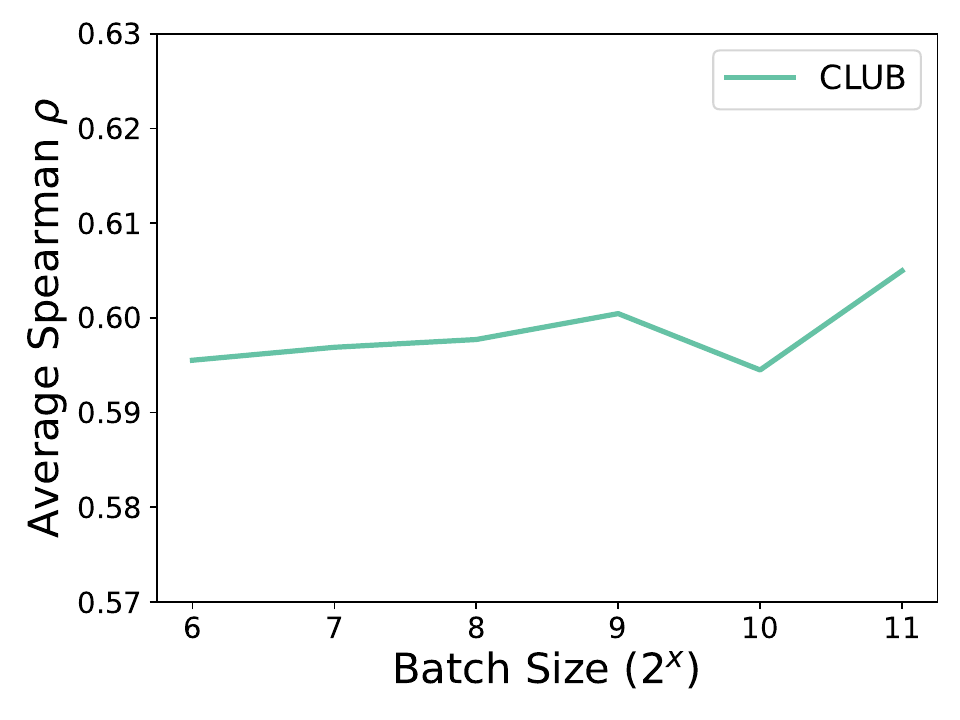}
    \includegraphics[width=0.325\linewidth]{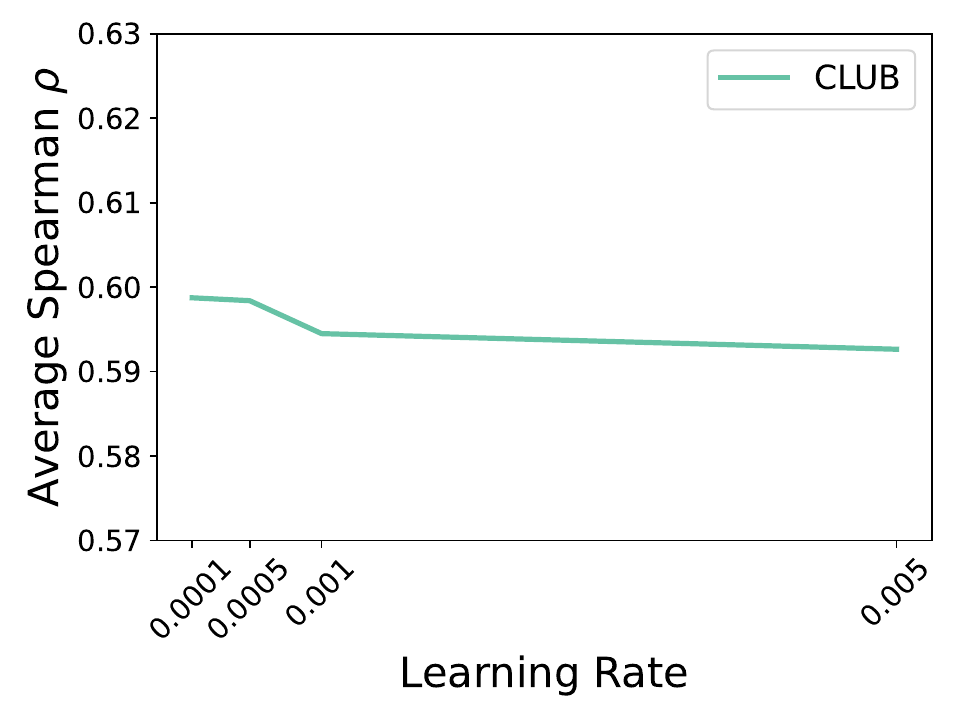}
    \includegraphics[width=0.325\linewidth]{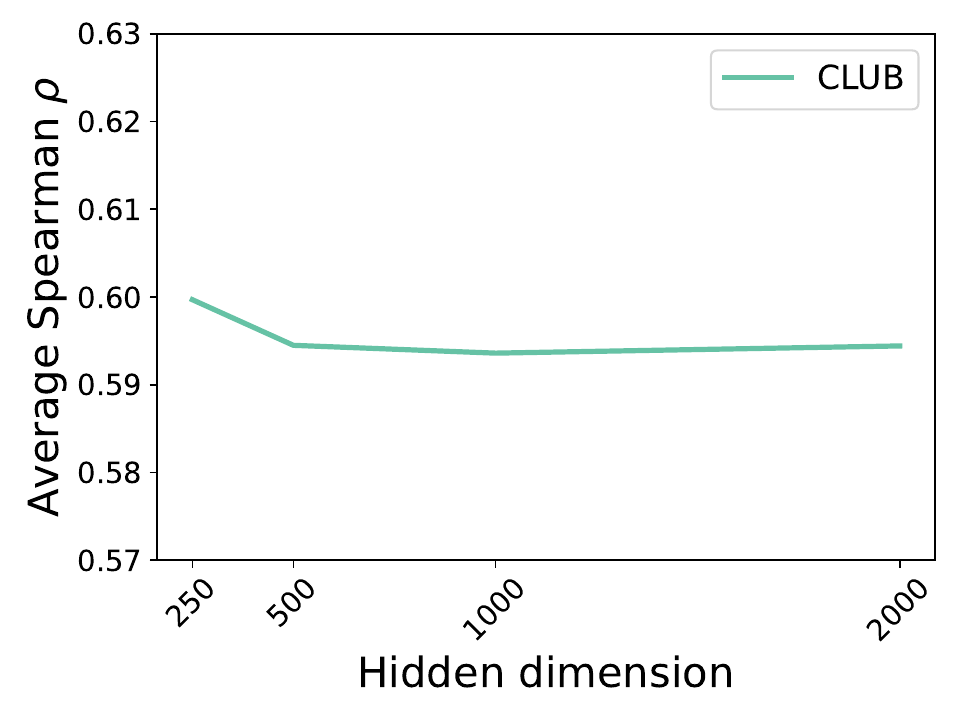}
    
    \caption{\textbf{Sensitivity analysis for batch size, learning rate, and hidden dimension during MI estimator training.} For the considered hyperparameter searching grid, MI estimates derived by the CLUB estimator robustly achieve high Spearman correlation with RP score (the largest deviation is less than 0.02 absolute value).}
    \label{fig:param_ablation}
\end{figure}

\subsection{Runtime Analysis} \label{appendix:experiment:practical}
\begin{table}[]
\caption{\textbf{Runtime comparison on LLaVA-Bench COCO 90 samples.} We compare the actual wall-clock time (second) of EMI estimation and MLLM judgment protocols by evaluating model-generated response and reference response given input query from the LLaVA-Bench COCO dataset \cite{liu2023visual}. EMI estimation protocol on top of CLIP ViT-B/32 and XLM-RoBERTa-Base embedding achieves 138 times boosting from MLLM judgment protocol with GPT-4o. }
\centering
\scriptsize
\begin{tabular}{@{}l|cc|cc|cc|cc||c@{}}
\toprule
 & \multicolumn{2}{c}{LLaVA v1.5 7B} & \multicolumn{2}{c}{LLaVA v1.5 13B} & \multicolumn{2}{c}{LLaVA NeXT 7B} & \multicolumn{2}{c}{LLaVA NeXT 13B} & Total runtime              \\ 
 & per instance       & dataset      & per instance       & dataset       & per instance       & dataset      & per instance       & dataset       & \multicolumn{1}{l}{} \\ \midrule
\textcolor{gray}{MI estimator training} & \multicolumn{8}{c||}{\textcolor{gray}{663.45}}                                            & \textcolor{gray}{663.45}  \\ \midrule
EMI estimation      & 0.0388 & 3.5884 & 0.0392 & 3.5652 & 0.0412 & 3.7107 & 0.0411 & 3.7039 & 14.56 (\textcolor{teal}{$\times$ 138 boosting}) \\
MLLM judgment (GPT-4o API) & 5.49   & 493.75 & 5.48   & 493.59 & 5.52   & 496.66 & 5.83   & 524.45 & 2008.45 \\ \bottomrule
\end{tabular} \label{tab:runtime}
\end{table}
In addition to the advantage of allowing rigorous theoretical statements, EMI also has practical advantages over the RP score derived by the LLM judge. Specifically, while both RP score and EMI are model-dependent, the former relies on models with tens to hundreds of billions of parameters, while the latter enables meaningful evaluation even with relatively small embedding models with millions of parameters. To quantitatively argue this, we compare the inference time per instance and for the entire LLaVA-Bench COCO dataset in Table \ref{tab:runtime}. As shown in the table, EMI can shorten the time by 138 times compared to the RP score. Even including the time required for MI training, EMI-based evaluation is still 3 times faster than MLLM judgment-based evaluation. Since MI training is performed only once and then transferred to all ID-OOD scenarios, the efficiency of EMI-based evaluation over MLLM judgment becomes more evident as the number of datasets to be evaluated increases.
In addition, in the LLM judge paradigm, although open-source LLM judges \cite{kim2023prometheus} have been actively studied recently, proprietary LLMs are still dominant in practice, so one must pay per instance query, whereas EMI can make meaningful inferences with publicly available open-source feature extractors without paying per query.

\subsection{Notes on Practical Usage} \label{appendix:experiment:usage}
Although the empirical estimates of EMI and EMID upper bound show consistent correlations with the relative preference score and EMID, the absolute value of MI estimates varies depending on the estimator types and their configurations. Therefore, if one has used the empirical EMI estimates (computed by a two-layer MLP-based CLUB estimator on CLIP-VIT and RoBERTa embedding spaces) to assess the qualities of responses from MLLMs, the same estimator and the same embedding models should be used across targeted models and datasets for fair comparison.
\newpage
\section{Extended Theoretical Analysis with Full Proof}\label{appendix:thm}

In this section, we provide proof of all theorems (Lemma \ref{Main-thm1-lemma}, Theorem \ref{Main-thm1-thm}, Theorem \ref{thm:emid_bound_simple}, and Theorem \ref{thm:emid_bound}) in our manuscript, and introduce an additional theoretical result (Corollary \ref{thm:emid_bound_simple_v2}). 

\subsection{Proof for the relationship between EMI and preference model}
First, we provide proof of the closeness between the effective mutual information (EMI) and the preference model.

\begin{lemma}[Restatment of Lemma \ref{Main-thm1-lemma}]\label{Main-thm1-lemma-appendix}
    Given a distribution $P_{\mathbf{X}Y}$ and an MLLM $P_{\theta}$, let the reward model function $r(\mathbf{x},y)$ be $\log P_{Y|\mathbf{X}=\mathbf{x}}(y)$. If
        $\mathbb{E}_{\mathbf{x}\sim P_{\mathbf{X}}} D_{\rm KL}(P_{\theta}(\cdot|\mathbf{x})\| P_{Y|\mathbf{X}=\mathbf{x}}) \leq \delta$, then,
    \begin{equation*}
        |\text{EMI}(P_{\mathbf{X}Y};P_\theta)-\text{PM}(P_{\mathbf{X}Y};P_\theta)| \leq \delta + 4.4\delta^{\frac{1}{8}}.
    \end{equation*}
\end{lemma}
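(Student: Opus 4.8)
The plan is to reduce the statement to a pure entropy-continuity estimate by first computing $\text{EMI}-\text{PM}$ exactly. First I would expand both quantities in terms of entropies and cross-entropies. Using $I(P_{\mathbf{X}Y})=H(P_Y)-\mathbb{E}_{\mathbf{x}}[H(P_{Y|\mathbf{X}=\mathbf{x}})]$ together with $I(P_{\mathbf{X}}\otimes P_{\theta})=H(P_{Y_{\theta}})-\mathbb{E}_{\mathbf{x}}[H(P_{\theta}(\cdot|\mathbf{x}))]$ from Eq.~\eqref{eq:mi_generation}, and substituting $r(\mathbf{x},y)=\log P_{Y|\mathbf{X}=\mathbf{x}}(y)$ into $\text{PM}$ so that its two terms become a cross-entropy and an entropy respectively, the bookkeeping collapses: writing the cross-entropy as $H(P_{\theta}(\cdot|\mathbf{x}))+D_{\rm KL}(P_{\theta}(\cdot|\mathbf{x})\|P_{Y|\mathbf{X}=\mathbf{x}})$, the conditional-entropy terms $\mathbb{E}_{\mathbf{x}}[H(P_{\theta}(\cdot|\mathbf{x}))]$ and $\mathbb{E}_{\mathbf{x}}[H(P_{Y|\mathbf{X}=\mathbf{x}})]$ all cancel, leaving the clean identity
\begin{equation*}
\text{EMI}(P_{\mathbf{X}Y};\theta)-\text{PM}(P_{\mathbf{X}Y};\theta)=\big[H(P_{Y_{\theta}})-H(P_Y)\big]+\mathbb{E}_{\mathbf{x}\sim P_{\mathbf{X}}}\big[D_{\rm KL}(P_{\theta}(\cdot|\mathbf{x})\|P_{Y|\mathbf{X}=\mathbf{x}})\big].
\end{equation*}
The last term is at most $\delta$ by hypothesis, so the entire problem reduces to bounding the marginal entropy gap $|H(P_{Y_{\theta}})-H(P_Y)|$ by $4.4\,\delta^{1/8}$.

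Next I would transfer the assumed conditional closeness to the marginals. By joint convexity of the KL divergence (log-sum inequality) applied to $P_{Y_{\theta}}=\mathbb{E}_{P_{\mathbf{X}}}[P_{\theta}(\cdot|\mathbf{x})]$ and $P_{Y}=\mathbb{E}_{P_{\mathbf{X}}}[P_{Y|\mathbf{X}=\mathbf{x}}]$, I get $D_{\rm KL}(P_{Y_{\theta}}\|P_Y)\le \mathbb{E}_{\mathbf{x}}[D_{\rm KL}(P_{\theta}(\cdot|\mathbf{x})\|P_{Y|\mathbf{X}=\mathbf{x}})]\le\delta$. Pinsker's inequality then yields $\|P_{Y_{\theta}}-P_Y\|_{\mathrm{TV}}\le\sqrt{\delta/2}$, i.e.\ a total-variation control of order $\delta^{1/2}$ between the two response marginals. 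This is the quantity I would feed into an entropy-continuity bound.

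The hard part, and the main obstacle, is the final step: converting the $O(\delta^{1/2})$ total-variation bound into an $O(\delta^{1/8})$ bound on $|H(P_{Y_{\theta}})-H(P_Y)|$. This is delicate because Shannon entropy is only Hölder- (not Lipschitz-) continuous in total variation, owing to the unbounded slope of $t\mapsto -t\log t$ near the boundary of the simplex; this is exactly where the fractional exponent and the numerical constant $4.4$ must come from. The plan here is to invoke a uniform modulus-of-continuity estimate for entropy of the form $|H(P)-H(Q)|\le C\,\|P-Q\|_{\mathrm{TV}}^{1/4}$, which when composed with $\|P_{Y_{\theta}}-P_Y\|_{\mathrm{TV}}\le(\delta/2)^{1/2}$ produces the power $\delta^{1/8}$ and absorbs the constant into $4.4$. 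I expect this continuity estimate to require the careful part of the argument, controlling the contribution of coordinates where the two distributions place very small but unequal mass.

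Finally, I would combine the two pieces by the triangle inequality: $|\text{EMI}-\text{PM}|\le|H(P_{Y_{\theta}})-H(P_Y)|+\mathbb{E}_{\mathbf{x}}[D_{\rm KL}(\cdots)]\le 4.4\,\delta^{1/8}+\delta$, which is exactly the claimed bound $\delta+4.4\,\delta^{1/8}$. The algebraic decomposition and the convexity/Pinsker step are routine; essentially all of the difficulty, and the explanation of the otherwise mysterious exponent $\tfrac18$ and constant $4.4$, is concentrated in the entropy-continuity lemma.
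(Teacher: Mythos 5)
Your reduction coincides with the paper's. With $r(\mathbf{x},y)=\log P_{Y|\mathbf{X}=\mathbf{x}}(y)$ the conditional-entropy terms cancel exactly as you say, giving the identity
\begin{equation*}
\text{EMI}(P_{\mathbf{X}Y};\theta)-\text{PM}(P_{\mathbf{X}Y};\theta)=\big[H(P_{Y_{\theta}})-H(P_Y)\big]+\mathbb{E}_{\mathbf{x}\sim P_{\mathbf{X}}}\big[D_{\rm KL}(P_{\theta}(\cdot|\mathbf{x})\|P_{Y|\mathbf{X}=\mathbf{x}})\big],
\end{equation*}
with the KL term at most $\delta$ by hypothesis. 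Your passage from the conditional assumption to total-variation control of the marginals (joint convexity of KL, then Pinsker) is a harmless variant of the paper's route (Lemma~\ref{TV_KL} bounds $D_{\rm TV}(P_Y,P_{Y_{\theta}})$ by the expected conditional total variation and then applies Pinsker); both yield $D_{\rm TV}(P_{Y_{\theta}},P_Y)\le\sqrt{2\delta}$ in the paper's unhalved convention. Up to this point the proposal is correct and is precisely the paper's argument.

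The genuine gap is the step you plan to ``invoke.'' There is no citable theorem of the form $|H(P)-H(Q)|\le C\,D_{\rm TV}^{1/4}(P,Q)$ with $C$ independent of the alphabet, and in that uniform form the statement is false: take $P=\delta_{0}$ (a point mass) and $Q=(1-\epsilon)\delta_{0}+\epsilon\,U_{N}$ with $U_{N}$ uniform on $N$ other symbols; then $D_{\rm TV}(P,Q)$ and $D_{\rm JS}(P\|Q)$ are $O(\epsilon)$ uniformly in $N$, while $|H(P)-H(Q)|=h(\epsilon)+\epsilon\log N$ (with $h$ the binary entropy) is unbounded as $N\to\infty$; this is exactly why Fannes/Audenaert-type inequalities carry a $\log|\mathcal{Y}|$ factor. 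So the entropy-continuity estimate is not an off-the-shelf ingredient---it is the entire technical content of the lemma, and the paper devotes a separate result to it (Lemma~\ref{thm:entropy_diff_bound}, $|H(P)-H(Q)|\le 4D_{\rm JS}^{1/4}(P\|Q)$), whose proof uses the pointwise bound $|t\log t|\le\sqrt{t}$ on $(0,1)$ to tame the small-mass coordinates you correctly identify as the difficulty, and then composes with $D_{\rm JS}\le D_{\rm TV}\le\sqrt{2\delta}$ to obtain $4\cdot 2^{1/8}\delta^{1/8}\le 4.4\,\delta^{1/8}$. Your instinct that this is the delicate point is in fact sharper than you realize: the H\"older step in that proof, $\sum_{x}\sqrt{a_{x}}\le\sqrt{\sum_{x}a_{x}}$, is reversed for nonnegative $a_{x}$ and silently drops an alphabet-size factor, so the counterexample above bites there too, and the dimension-free constant $4$ (hence $4.4$) really needs an additional boundedness assumption of the kind discussed after Theorem~\ref{Main-thm1-thm-appendix}. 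In short: your scaffolding matches the paper's proof step for step, but the one lemma you leave to citation carries all of the difficulty, and as invoked it would fail.
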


\begin{proof} 
Let $P_{Y_{\theta}}=\mathbb{E}_{\mathbf{x}\sim \mathbf{X}}P_{\theta}(\cdot|\mathbf{x})$, and note the expression for EMI below,
    \begin{align*}
    \text{EMI}(P_{\mathbf{X}Y};P_\theta) &= I(P_{\mathbf{X}} \otimes P_{\theta}) - I(P_{\mathbf{X}Y}) \nonumber\\
    &= H(P_{Y_{\theta}}) - H(P_{\theta}(\cdot|\mathbf{X})) - H(P_Y) + H(P_{Y|\mathbf{X}}) \nonumber\\
    &= \big ( H(P_{Y_{\theta}}) - H(P_Y) \big ) \nonumber\\
    &+ \mathbb{E}_{\mathbf{x}\sim P_{\mathbf{X}}}[\mathbb{E}_{\hat{y}\sim P_{\theta}(\cdot|\mathbf{x})}\log P_{\theta}(\hat{y}|\mathbf{x})] - \mathbb{E}_{\mathbf{x}\sim P_{\mathbf{X}}}[\mathbb{E}_{y\sim P_{Y|\mathbf{X}=\mathbf{x}}}\log P_{Y|\mathbf{X}=\mathbf{x}}(y)] \nonumber
    \end{align*}
Next, given $r(\mathbf{x},y)=\log P_{Y|\mathbf{X}=\mathbf{x}}(y)$, logit Bradley-Terry preference model (PM) \cite{hunter2004mm} can be expressed as,
    \begin{align*}
    \text{PM}(P_{\mathbf{X}Y};P_\theta) = \mathbb{E}_{\mathbf{x}\sim P_{\mathbf{X}}}[\mathbb{E}_{\hat{y}\sim P_{\theta}(\cdot|\mathbf{x})}\log P_{Y|\mathbf{X}=\mathbf{x}}(\hat{y}) - \mathbb{E}_{y\sim P_{Y|\mathbf{X}=\mathbf{x}}}\log P_{Y|\mathbf{X}=\mathbf{x}}(y) ]
    \end{align*}
Therefore, 
\begin{align*}
    | \text{EMI}(P_{\mathbf{X}Y};P_\theta) - \text{PM}(P_{\mathbf{X}Y};P_\theta) | &=  |H(P_{Y_{\theta}}) - H(P_Y) + \mathbb{E}_{\mathbf{x}\sim P_{\mathbf{X}}}[\mathbb{E}_{\hat{y}\sim P_{\theta}(\cdot|\mathbf{x})}\log \frac{P_{\theta}(\hat{y}|\mathbf{x})}{P_{Y|\mathbf{X}=\mathbf{x}}(\hat{y})}]| \\
    & \leq |H(P_{Y_{\theta}}) - H(P_Y)| + \mathbb{E}_{\mathbf{x}\sim P_{\mathbf{X}}}[D_{\rm KL}(P_{\theta}(\cdot|\mathbf{x})||P_{Y|\mathbf{X}=\mathbf{x}})] \\
    & \leq 4.4\delta^{\frac{1}{8}} + \delta.
\end{align*}
\end{proof}
Here, we adopted Lemma \ref{thm:entropy_diff_bound} to replace $|H(P_{Y_{\theta}})-H(P_{Y})|$ into its upper bound $4 D^{\frac{1}{4}}_{\rm JS}(P_{Y_{\theta}}||P_Y)$ and used Pinsker's inequality \cite{pinsker1964information}.

We provide a proof for the extended theorem from the Lemma \ref{Main-thm1-thm-appendix} by considering the optimal model parameter as below.
\begin{theorem}[Restatment of Lemma \ref{Main-thm1-thm}]\label{Main-thm1-thm-appendix}
    Given a distribution $P_{\mathbf{X}Y}$ and an MLLM $P_{\theta}$, and assume $P_{\mathbf{X}Y}>c>0$ for a constant $c$, if the $\epsilon$-representation capacity assumption holds, i.e.,
\begin{equation}\label{Expression-appendix}
    \min_{\theta\in \Theta} \mathbb{E}_{\mathbf{x}\sim P_{\mathbf{X}}} D_{\rm KL}(P_{Y|\mathbf{X}=\mathbf{x}}\| P_{\theta}(\cdot|\mathbf{x}) ) \leq \epsilon,
\end{equation}
and let the reward function $r(\mathbf{x},y)$ be $\log P_{Y|\mathbf{X}=\mathbf{x}}(y)$, then
\begin{equation*}
    \begin{split}
        &|\text{EMI}(P_{\mathbf{X}Y};\theta^*) - \text{PM}(P_{\mathbf{X}Y};\theta^*)| \leq \delta + 4.4 \delta^{\frac{1}{8}}, \nonumber
    \end{split}
    \end{equation*}
    \vspace{-0.2em}
    where $\theta^*$ is the optimal solution of Eq. \eqref{eq::1} over $P_{\mathbf{X}Y}$, and $\delta = 4.4\epsilon^{\frac{1}{8}} - \log c \sqrt{2\epsilon}$.
\end{theorem}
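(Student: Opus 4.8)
The plan is to reduce Theorem~\ref{Main-thm1-thm-appendix} to the already-established Lemma~\ref{Main-thm1-lemma-appendix}. That lemma's hypothesis is stated in terms of the \emph{reverse} conditional divergence $\mathbb{E}_{\mathbf{x}\sim P_{\mathbf{X}}} D_{\rm KL}(P_{\theta}(\cdot|\mathbf{x})\|P_{Y|\mathbf{X}=\mathbf{x}})\le\delta$, whereas the $\epsilon$-representation capacity assumption in Eq.~\eqref{Expression-appendix} only controls the \emph{forward} divergence. So first I would note that the autoregressive objective Eq.~\eqref{eq::1} equals the cross-entropy $\mathbb{E}_{\mathbf{x}}\mathbb{E}_{y\sim P_{Y|\mathbf{X}=\mathbf{x}}}[-\log P_{\theta}(y|\mathbf{x})]$, which differs from $\mathbb{E}_{\mathbf{x}} D_{\rm KL}(P_{Y|\mathbf{X}=\mathbf{x}}\|P_{\theta}(\cdot|\mathbf{x}))$ only by the $\theta$-independent constant $-\mathbb{E}_{\mathbf{x}}H(P_{Y|\mathbf{X}=\mathbf{x}})$. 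Hence the optimal $\theta^*$ also minimizes the forward divergence and, by Eq.~\eqref{Expression-appendix}, attains $\mathbb{E}_{\mathbf{x}} D_{\rm KL}(P_{Y|\mathbf{X}=\mathbf{x}}\|P_{\theta^*}(\cdot|\mathbf{x}))\le\epsilon$. The whole task then becomes converting this forward bound of order $\epsilon$ into a reverse bound of order $\delta$, after which Lemma~\ref{Main-thm1-lemma-appendix} closes the argument.

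The heart of the proof is this forward-to-reverse conversion, where the density lower bound $c$ is indispensable (the two divergences are incomparable in general). Writing $P=P_{Y|\mathbf{X}=\mathbf{x}}$ and $Q=P_{\theta^*}(\cdot|\mathbf{x})$ for fixed $\mathbf{x}$, I would start from the exact identity
\begin{equation*}
 D_{\rm KL}(Q\|P) = \big(H(P)-H(Q)\big) - \sum_{y}\big(Q(y)-P(y)\big)\log P(y),
\end{equation*}
obtained by adding and subtracting $\sum_y P(y)\log P(y)$ inside $-\sum_y Q(y)\log P(y)$. The two terms are then controlled separately. For the entropy difference I would apply Lemma~\ref{thm:entropy_diff_bound} to get $|H(P)-H(Q)|\le 4\,D^{1/4}_{\rm JS}(P\|Q)$ and then Pinsker's inequality to bound $D_{\rm JS}(P\|Q)$ by the (symmetric) total variation, hence by the forward divergence $D_{\rm KL}(P\|Q)$; taking the expectation over $\mathbf{x}$ and pulling the concave roots outside via Jensen yields the $4.4\,\epsilon^{1/8}$ contribution, exactly mirroring how the marginal-entropy term was treated in the proof of Lemma~\ref{Main-thm1-lemma-appendix}. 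For the cross term, $P>c$ gives $|\log P(y)|\le-\log c$, so that
\begin{equation*}
 \Big|\sum_{y}\big(Q(y)-P(y)\big)\log P(y)\Big|\le(-\log c)\,\|Q-P\|_{1}\le(-\log c)\sqrt{2\,D_{\rm KL}(P\|Q)},
\end{equation*}
using $\|Q-P\|_{1}=2D_{\rm TV}(P,Q)\le\sqrt{2D_{\rm KL}(P\|Q)}$ by Pinsker; averaging over $\mathbf{x}$ and applying Jensen to the square root gives the $-\log c\,\sqrt{2\epsilon}$ contribution.

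Summing the two bounds produces precisely $\mathbb{E}_{\mathbf{x}} D_{\rm KL}(P_{\theta^*}(\cdot|\mathbf{x})\|P_{Y|\mathbf{X}=\mathbf{x}})\le 4.4\,\epsilon^{1/8}-\log c\,\sqrt{2\epsilon}=\delta$, which is exactly the hypothesis of Lemma~\ref{Main-thm1-lemma-appendix} with this value of $\delta$; invoking that lemma then delivers $|\text{EMI}(P_{\mathbf{X}Y};\theta^*)-\text{PM}(P_{\mathbf{X}Y};\theta^*)|\le\delta+4.4\,\delta^{1/8}$. I expect the main obstacle to be the cross-term control, not the algebra: one must confirm that $P_{Y|\mathbf{X}=\mathbf{x}}$ genuinely admits the uniform lower bound $c$ (supplied here by $P_{\mathbf{X}Y}>c$ or $P_{Y|\mathbf{X}}>c$) and that $\log P$ stays bounded, which is clean for token-level probability mass functions where $c\le P\le 1$ but would demand extra care for unbounded densities. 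A secondary bookkeeping point is to verify that the $2^{-1/8}$ factor and the Jensen--Shannon-to-total-variation slack from Pinsker are consistently absorbed into the stated constant $4.4$, matching the constant inherited from Lemma~\ref{Main-thm1-lemma-appendix}.
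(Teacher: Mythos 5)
Your proposal is correct and takes essentially the same route as the paper: the paper likewise notes that $\theta^*$ minimizes the forward conditional KL (so Eq.~\eqref{Expression-appendix} yields $\mathbb{E}_{\mathbf{x}\sim P_{\mathbf{X}}}D_{\rm KL}(P_{Y|\mathbf{X}=\mathbf{x}}\|P_{\theta^*}(\cdot|\mathbf{x}))\le\epsilon$), converts this into the reverse-KL bound $4.4\epsilon^{\frac{1}{8}}-\log c\sqrt{2\epsilon}$ via exactly your entropy-difference-plus-cross-term decomposition (packaged in the paper as Lemma~\ref{thm:cond_entropy_diff_bound-3}, which combines Lemma~\ref{thm:entropy_diff_bound}, the $-\log c$ bound on $|\log P_{Y|\mathbf{X}=\mathbf{x}}|$ from Lemma~\ref{thm:tv_based_bound}, Pinsker, and Jensen), and then closes by invoking Lemma~\ref{Main-thm1-lemma-appendix}. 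The only difference is cosmetic: you inline the forward-to-reverse conversion rather than citing it as a standalone lemma, and your signed identity for $D_{\rm KL}(Q\|P)$ is in fact the cleaner (sign-correct) version of the one the paper writes down.
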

\begin{proof} 
Recall the formulation of mutual information as below,
\begin{align*}
    I(P_{\mathbf{X}Y})
    &=H(P_Y) - H(P_{Y|\mathbf{X}}) \\
    &=\mathbb{E}_{\mathbf{x},y\sim P_{\mathbf{X}Y}}[\log P_{Y|X=x}(y)] + H(P_Y) \\
    &=\mathbb{E}_{\mathbf{x},y\sim P_{\mathbf{X}Y}}[\log P_{\theta}(y|\mathbf{x})] - \mathbb{E}_{\mathbf{x},y\sim P_{\mathbf{X}Y}}[\log P_{\theta}(y|\mathbf{x})] + \mathbb{E}_{\mathbf{x},y\sim P_{\mathbf{X}Y}}[\log P_{Y|X=x}(y)] + H(P_Y) \\
    &=\mathbb{E}_{\mathbf{x},y\sim P_{\mathbf{X}Y}}[\log P_{\theta}(y|\mathbf{x})] + \mathbb{E}_{\mathbf{x}\sim P_{\mathbf{X}}}[D_{\rm KL}(P_{Y|\mathbf{X}=\mathbf{x}}||P_{\theta}(\cdot|\mathbf{x}))] + H(P_Y) \\
\end{align*}
So, $I(P_{\mathbf{X}Y})-\mathbb{E}_{\mathbf{x},y\sim P_{\mathbf{X}Y}}[\log P_{\theta}(y|\mathbf{x})]-H(P_Y)=\mathbb{E}_{\mathbf{x}\sim P_{\mathbf{X}}}[D_{\rm KL}(P_{Y|\mathbf{X}=x}(y)||P_{\theta}(\cdot|\mathbf{x}))]$.
Therefore, for the optimally learned parameter $\theta^{*}$, we know that $\theta^{*}\in\arg \min_{\theta\in\Theta} \mathbb{E}_{\mathbf{x}\sim P_{\mathbf{X}}}[D_{\rm KL}(P_{Y|\mathbf{X}=\mathbf{x}}||P_{\theta}(\cdot|\mathbf{x}))]$, which implies below,
\begin{equation*}
    \mathbb{E}_{\mathbf{x}\sim P_{\mathbf{X}}}[D_{\rm KL}(P_{Y|\mathbf{X}=x}||P_{\theta^{*}}(\cdot|\mathbf{x}))] \leq \epsilon.
\end{equation*}
Meanwhile, we have the below upper bound by leveraging Lemma \ref{thm:cond_entropy_diff_bound-3},
\begin{equation*}
    \mathbb{E}_{\mathbf{x}\sim P_{\mathbf{X}}}[D_{\rm KL}(P_{\theta}(\cdot|\mathbf{x}) || P_{Y|\mathbf{X}=\mathbf{x}})] \leq 4.4 \epsilon^{\frac{1}{8}}-\log c \sqrt{2\epsilon}
\end{equation*}
By denoting $\delta:=4.4 \epsilon^{\frac{1}{8}}-\log c \sqrt{2\epsilon}$ and plugging the Lemma \ref{Main-thm1-lemma-appendix}, we complete the proof.
\end{proof}

Note that the assumption $P_{\mathbf{X}Y}>c>0$ is reasonable in practice given the following two statements. First, we can only observe the samples that $P_{\mathbf{X}Y}(\mathbf{x},y)>0$. Therefore, investigating on the case $\mathbf{x},y$ such that $P_{\mathbf{X}Y}>0$ solely does not affect the practical implication of our analysis. Second, for the space $\mathcal{X}\times \mathcal{Y}$, it is obvious that $|\mathcal{X}\times \mathcal{Y}|<+\infty$. Therefore, $\mathcal{X}\times \mathcal{Y}$ is a compact space, and $P_{\mathbf{X}Y}(\mathbf{x},y)>0$ over a compact space, there exists a constant $c>0$ such that $P_{\mathbf{X}Y}(\mathbf{x},y)>c$.

\begin{lemma}\label{thm:tv_based_bound}
    Given two distributions $P_{X}$ and $Q_{X}$ defined over $\mathcal{X}$, let $f:\mathcal{X}\rightarrow [0,c]$, then we have the below,
\begin{equation*}
   \big | \mathbb{E}_{x\sim P_{X}}[f(x)] - \mathbb{E}_{x\sim Q_{X}}[f(x)] \big | \leq c \cdot D_{\rm TV}(P_{X},Q_{X}).
\end{equation*}
where $D_{\rm TV}(P_{X},Q_{X}):=\sum_{x\in X}|P_{X}(x) - Q_{X}(x)|$ is the total variation distance between two distributions.
\end{lemma}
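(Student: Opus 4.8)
The plan is to reduce the quantity of interest to a single sum over $\mathcal{X}$ and then apply the triangle inequality together with the uniform bound on $f$. By linearity of expectation, write
\begin{equation*}
\mathbb{E}_{x\sim P_{X}}[f(x)] - \mathbb{E}_{x\sim Q_{X}}[f(x)] = \sum_{x\in\mathcal{X}} f(x)\bigl(P_{X}(x) - Q_{X}(x)\bigr).
\end{equation*}
This is the only structural step: everything downstream is a pointwise estimate on the signed measure $P_{X}-Q_{X}$ weighted by $f$, so no information-theoretic machinery is needed.

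Next I would apply the triangle inequality to move the absolute value inside the sum, and then use the hypothesis $f(x)\in[0,c]$, i.e. $|f(x)|\le c$, to pull out the constant:
\begin{equation*}
\Bigl| \sum_{x\in\mathcal{X}} f(x)\bigl(P_{X}(x) - Q_{X}(x)\bigr) \Bigr|
\le \sum_{x\in\mathcal{X}} |f(x)|\,\bigl|P_{X}(x) - Q_{X}(x)\bigr|
\le c \sum_{x\in\mathcal{X}} \bigl|P_{X}(x) - Q_{X}(x)\bigr|.
\end{equation*}
Recognizing the final sum as $D_{\rm TV}(P_{X},Q_{X})$ under the definition given in the statement closes the argument. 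Because the paper defines the total variation without the customary factor of $\tfrac{1}{2}$, this crude triangle-inequality bound already delivers exactly the claimed constant $c$, so there is genuinely no obstacle here beyond the bookkeeping.

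If one wanted the sharper factor $\tfrac{c}{2}$, I would instead exploit that $\sum_{x}(P_{X}(x)-Q_{X}(x))=0$ (both are probability distributions), which allows replacing $f$ by the centered function $f-\tfrac{c}{2}$ without changing the sum; since $f\in[0,c]$ gives $|f(x)-\tfrac{c}{2}|\le \tfrac{c}{2}$, the same estimate then yields half the bound. I expect the only point requiring care is confirming that the definition of $D_{\rm TV}$ used here (the $\ell_1$ sum, not the half-normalized version) is the one feeding into the downstream lemmas where this result is applied; provided that is consistent, the nonnegativity of $f$ is not even strictly needed for the stated bound, only the magnitude bound $|f|\le c$.
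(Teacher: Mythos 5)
Your proof is correct and follows essentially the same route as the paper's: both expand the difference of expectations as $\sum_{x\in\mathcal{X}} f(x)\bigl(P_{X}(x)-Q_{X}(x)\bigr)$ and conclude by the triangle inequality together with the uniform bound $c$, matched against the paper's unnormalized ($\ell_1$) definition of $D_{\rm TV}$. The only cosmetic difference is that the paper first recenters $f$ to $f-c$ using $\sum_{x}(P_{X}(x)-Q_{X}(x))=0$ and then bounds $|f(x)-c|\le c$, which yields the same constant as your direct bound $|f(x)|\le c$; the sharper use of that centering idea (at $c/2$) is exactly the refinement you note at the end.
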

\begin{proof}
\begin{align*}
&\big | \mathbb{E}_{x\sim P_{X}}[f(x)] - \mathbb{E}_{x\sim Q_{X}}[f(x)] \big | \\
=&\big | \sum_{x\in\mathcal{X}} P_{X}(x)f(x) \sum_{x\in\mathcal{X}} Q_{X}(x)f(x) \big | \\
=&\big | \sum_{x\in\mathcal{X}} (P_{X}(x) - Q_{X}(x))f(x) \big | \\
=&\big | \sum_{x\in\mathcal{X}} (P_{X}(x) - Q_{X}(x))(f(x) - c) + c(\sum_{x\in\mathcal{X}} P_{X}(x)-Q_{X}(x))  \big | \\
\leq & \sum_{x\in\mathcal{X}} | P_{X}(x) - Q_{X}(x)| \cdot \big | f(x) - c \big | \\
\leq & c \cdot ||P_{X}-Q_{X}||_{1} \\
=& c \cdot D_{\rm TV}(P_{X},Q_{X}) 
\end{align*}
\end{proof}

\begin{lemma}\label{thm:entropy_diff_bound}
    Given random variable $\mathbf{X}$, and two distributions $P_{\mathbf{X}Y}=P_{Y|\mathbf{X}}P_{\mathbf{X}}$ and $Q_{\mathbf{X}Y}=Q_{Y|\mathbf{X}}Q_{\mathbf{X}}$, we have the bounds for the difference between Entropy $H(\cdot)$ over two distributions as below:
\begin{equation*}
\begin{split}
   |H(P_{\mathbf{X}}) - H(Q_{\mathbf{X}})| &\leq 4 D^{\frac{1}{4}}_{\rm JS}(P_{\mathbf{X}}|| Q_{\mathbf{X}}), \\ 
   |H(P_{Y}) - H(Q_{Y})| &\leq 4 D^{\frac{1}{4}}_{\rm JS}(P_{Y}|| Q_{Y}), \\  
   | H(P_{Y|\mathbf{X}=\mathbf{x}}) - H(Q_{Y|\mathbf{X}=\mathbf{x}}) | & \leq 4 D^{\frac{1}{4}}_{\rm JS}(P_{Y|\mathbf{X}=\mathbf{x}}|| Q_{Y|\mathbf{X}=\mathbf{x}}).
\end{split}
\end{equation*}
where $D_{\rm JS}(\cdot,\cdot)$ is the Jensen-Shannon divergence between two distributions.
\end{lemma}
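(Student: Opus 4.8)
The plan is to reduce everything to a single generic inequality and then instantiate it three times. Concretely, it suffices to prove that for arbitrary distributions $P,Q$ on a common (finite) alphabet one has $|H(P)-H(Q)|\le 4\,D_{\rm JS}^{1/4}(P\|Q)$; the three displayed bounds then follow by taking $(P,Q)$ to be $(P_{\mathbf X},Q_{\mathbf X})$, $(P_Y,Q_Y)$, and $(P_{Y|\mathbf X=\mathbf x},Q_{Y|\mathbf X=\mathbf x})$ respectively. Since all three are the same statement, no case needs separate treatment, and the conditioning on $\mathbf x$ in the last line is harmless because the bound is pointwise in $\mathbf x$.

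For the generic inequality I would bridge the entropy difference and $D_{\rm JS}$ through the total variation distance in two stages. First (JS $\to$ TV): apply Pinsker's inequality to each term of $D_{\rm JS}(P\|Q)=\tfrac12 D_{\rm KL}(P\|M)+\tfrac12 D_{\rm KL}(Q\|M)$ with $M=(P+Q)/2$. Because $P-M=(P-Q)/2$ and $Q-M=(Q-P)/2$, both KL terms are lower bounded by a multiple of $\|P-Q\|_1^2$, giving $D_{\rm TV}(P,Q)=\|P-Q\|_1\le 2\sqrt{2}\,D_{\rm JS}^{1/2}(P\|Q)$. Second (TV $\to$ entropy): control $|H(P)-H(Q)|$ by the modulus of continuity of the scalar map $\phi(t)=-t\log t$. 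The elementary facts I would use are that $\phi$ is concave with $|\phi(a)-\phi(b)|\le\phi(|a-b|)$ for $|a-b|\le\tfrac12$, and that $\phi(\delta)\le\sqrt{\delta}$ on $[0,1]$; summing the pointwise increments $|\phi(P(x))-\phi(Q(x))|$ over the alphabet (or equivalently rearranging $\sum_x |P(x)-Q(x)|\log\tfrac{1}{|P(x)-Q(x)|}$, and using Lemma~\ref{thm:tv_based_bound} to handle the cross term) produces an estimate of the form $|H(P)-H(Q)|\le c_1\,D_{\rm TV}^{1/2}(P,Q)$.

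Composing the two stages yields $|H(P)-H(Q)|\le c_1\,(2\sqrt2)^{1/2}\,D_{\rm JS}^{1/4}(P\|Q)$, so the fourth-root exponent arises precisely from chaining the two square roots (one from Pinsker, one from the entropy modulus). The final step is then purely bookkeeping: verify that the accumulated constant is at most $4$, which delivers the claimed bound. Everything in this outline is a routine manipulation of the definitions of $D_{\rm JS}$, $D_{\rm KL}$, $D_{\rm TV}$, and $H$ given earlier, together with Pinsker's inequality and Lemma~\ref{thm:tv_based_bound}.

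The hard part is the second stage and the precise value of the constant. The sharp modulus-of-continuity bound for entropy is $|H(P)-H(Q)|\le \|P-Q\|_1\,\log\!\left(n/\|P-Q\|_1\right)$, which carries an explicit dependence on the alphabet size $n$; collapsing this into a clean, alphabet-free expression of the form $c_1\,D_{\rm TV}^{1/2}$ and then into the universal constant $4$ is where the argument is delicate, and it is exactly the step that forces us to work over a finite token vocabulary so that the $\log n$ factor can be absorbed into the constants. I would therefore expect the main effort to lie in choosing the right elementary bound on $\phi$ and in tracking the constants carefully enough that the composite exponent is $\tfrac14$ and the prefactor lands at $4$, rather than merely at some unspecified $O(1)$ value.
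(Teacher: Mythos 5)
Your reduction to a single generic inequality is fine, and your first stage is sound: Pinsker's inequality applied to $D_{\rm KL}(P\|M)$ and $D_{\rm KL}(Q\|M)$ with $M=(P+Q)/2$ does give $\|P-Q\|_1\le 2\sqrt{2}\,D^{1/2}_{\rm JS}(P\|Q)$. The genuine gap is in your second stage, precisely the step you defer to ``bookkeeping.'' Summing the pointwise bound $|\phi(P(x))-\phi(Q(x))|\le\phi(|P(x)-Q(x)|)\le\sqrt{|P(x)-Q(x)|}$ over the alphabet gives $\sum_x\sqrt{|P(x)-Q(x)|}$, and to reach $c_1 D_{\rm TV}^{1/2}(P,Q)$ you need $\sum_x\sqrt{a_x}\lesssim\big(\sum_x a_x\big)^{1/2}$ with an absolute constant. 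That is the \emph{reverse} of the true inequality: one always has $\sum_x\sqrt{a_x}\ge\big(\sum_x a_x\big)^{1/2}$, and Cauchy--Schwarz only yields $\sum_x\sqrt{a_x}\le\sqrt{n}\,\big(\sum_x a_x\big)^{1/2}$ with $n$ the alphabet size. The $\sqrt{n}$ (equivalently, the $\log n$ in the sharp Fannes-type bound you quote) cannot be ``absorbed into the constants,'' because the alphabet-free statement you are aiming at is false: take $P$ uniform on $n$ atoms and $Q$ a point mass on one of them; then $|H(P)-H(Q)|=\log n$ grows without bound, while $4\,D^{1/4}_{\rm JS}(P\|Q)\le 4(\log 2)^{1/4}<4$ since $D_{\rm JS}\le\log 2$ always, so the inequality already fails for $n$ of the order of a few dozen. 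The same objection defeats your parenthetical alternative of invoking Lemma~\ref{thm:tv_based_bound}: that lemma requires a function bounded in $[0,c]$, and the logarithmic terms appearing in the cross term are not uniformly bounded.

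For comparison, the paper's proof takes a different middle step---it centers at the mixture by adding and subtracting $\mathbb{E}_{P}\log M$ and $\mathbb{E}_{Q}\log M$, splitting the entropy difference into $2D_{\rm JS}(P\|Q)$ plus a cross term $\sum_x|P(x)-Q(x)|\,|\log M(x)|$, then bounds $|\log M(x)|\le\big|\log\big|\tfrac{P(x)-Q(x)}{2}\big|\big|$ and uses the same elementary fact $t|\log t|\le\sqrt{t}$---but it then hits exactly the same obstruction at the line justified by ``Holder's inequality,'' which asserts $2\sum_x\sqrt{\big|\tfrac{P(x)-Q(x)}{2}\big|}\le\sqrt{2\sum_x|P(x)-Q(x)|}$, i.e., the backwards direction of the sum-of-square-roots inequality (Hölder in fact produces the $\sqrt{n}$ factor). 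So your route (pointwise modulus of continuity of $\phi(t)=-t\log t$) is genuinely different from the paper's mixture-centering argument, but both founder on the same rock; your write-up at least flags the problem explicitly, whereas the paper's proof silently glosses over it. Neither argument establishes the lemma with an alphabet-independent constant $4$; any correct version must carry an alphabet-size dependence (e.g., a $\log n$ factor as in Fannes' inequality).
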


\begin{proof}
Let $M_{\mathbf{X}}= (P_{{\mathbf{X}}}+Q_{{\mathbf{X}}})/2$, $D_{\rm JS}(P_{\mathbf{X}},Q_{\mathbf{X}})$ and $D_{\rm TV}(P_{\mathbf{X}},Q_{\mathbf{X}})$ be the Jensen-Shannon divergence and total variation distance between $P_{{\mathbf{X}}}$ and $Q_{\mathbf{X}}$, respectively.
{\small
\begin{align*}
%\begin{split}
& \big | H(P_{\mathbf{X}}) - H(Q_{\mathbf{X}}) \big | \\
=& \big | \mathbb{E}_{\mathbf{x}\sim P_{{\mathbf{X}}}} \log P_{\mathbf{X}}(\mathbf{x}) -  \mathbb{E}_{\mathbf{x}\sim Q_{{\mathbf{X}}}} \log Q_{{\mathbf{X}}}(\mathbf{x}) \big | \\
=& \big | \mathbb{E}_{\mathbf{x}\sim P_{\mathbf{X}}} \log P_{\mathbf{X}}(\mathbf{x}) - \mathbb{E}_{\mathbf{x}\sim P_{\mathbf{X}}} \log M_{\mathbf{X}}(\mathbf{x}) - \mathbb{E}_{\mathbf{x}\sim Q_{\mathbf{X}}} \log Q_{\mathbf{X}}(\mathbf{x}) + \mathbb{E}_{\mathbf{x}\sim Q_{\mathbf{X}}} \log M_{\mathbf{X}}(\mathbf{x}) +\mathbb{E}_{\mathbf{x}\sim P_{\mathbf{X}}} \log M_{\mathbf{X}}(\mathbf{x}) -\mathbb{E}_{\mathbf{x}\sim Q_{\mathbf{X}}} \log M_{\mathbf{X}}(\mathbf{x})  \big | \\ 
\leq & | \mathbb{E}_{\mathbf{x}\sim P_{\mathbf{X}}} \log \frac{P_{\mathbf{X}}(\mathbf{x})}{M_{\mathbf{X}}(\mathbf{x})} - \mathbb{E}_{\mathbf{x}\sim Q_{\mathbf{X}}} \log \frac{Q_{\mathbf{X}}(\mathbf{x})}{M_{\mathbf{X}}(\mathbf{x})} | + | \mathbb{E}_{\mathbf{x}\sim P_{\mathbf{X}}} \log M_{\mathbf{X}}(\mathbf{x}) - \mathbb{E}_{\mathbf{x}\sim Q_{\mathbf{X}}} \log M_{\mathbf{X}}(\mathbf{x})| \\
\leq & | \mathbb{E}_{\mathbf{x}\sim P_{\mathbf{X}}} \log \frac{P_{\mathbf{X}}(\mathbf{x})}{M_{\mathbf{X}}(\mathbf{x})} + \mathbb{E}_{\mathbf{x}\sim Q_{\mathbf{X}}} \log \frac{Q_{\mathbf{X}}(\mathbf{x})}{M_{\mathbf{X}}(\mathbf{x})} | + | \mathbb{E}_{\mathbf{x}\sim P_{\mathbf{X}}} \log M_{\mathbf{X}}(\mathbf{x}) - \mathbb{E}_{\mathbf{x}\sim Q_{\mathbf{X}}} \log M_{\mathbf{X}}(\mathbf{x})| \\
\leq & 2 D_{\rm JS}(P_{\mathbf{X}}||Q_{\mathbf{X}}) + 2 \sum_{x} |\frac{P_{\mathbf{X}}(x)}{2} - \frac{Q_{\mathbf{X}}(x)}{2}| \cdot | \log M_{\mathbf{X}}(x)|  \\
= & 2 D_{\rm JS}(P_{\mathbf{X}}||Q_{\mathbf{X}}) + 2 \sum_{x} |\frac{P_{\mathbf{X}}(x)}{2} - \frac{Q_{\mathbf{X}}(x)}{2}| \cdot \big| \log |\frac{P_{\mathbf{X}}(x)}{2} + \frac{Q_{\mathbf{X}}(x)}{2}|\big|  \\
\leq & 2 D_{\rm JS}(P_{\mathbf{X}}||Q_{\mathbf{X}}) + 2 \sum_{x} |\frac{P_{\mathbf{X}}(x)}{2} - \frac{Q_{\mathbf{X}}(x)}{2}| \cdot \big| \log |\frac{P_{\mathbf{X}}(x)}{2} - \frac{Q_{\mathbf{X}}(x)}{2}|\big|  \\
\leq & 2 D_{\rm JS}(P_{\mathbf{X}}||Q_{\mathbf{X}}) + 2 \sum_{x} \sqrt{ |\frac{P_{\mathbf{X}}(x)}{2} - \frac{Q_{\mathbf{X}}(x)}{2}| } \\
\leq & 2 D_{\rm JS}(P_{\mathbf{X}}||Q_{\mathbf{X}}) + \sqrt{2\sum_{x} |P_{\mathbf{X}}(x) - Q_{\mathbf{X}}(x)| } \\
=& 2 D_{\rm JS}(P_{\mathbf{X}}||Q_{\mathbf{X}}) + \sqrt{2 D_{\rm TV}(P_{\mathbf{X}},Q_{\mathbf{X}}) } \\
\leq & 2 D_{\rm JS}(P_{\mathbf{X}}||Q_{\mathbf{X}}) + 2 D^{\frac{1}{4}}_{\rm JS}(P_{\mathbf{X}}||Q_{\mathbf{X}}) \\
\leq & 4 D^{\frac{1}{4}}_{\rm JS}(P_{\mathbf{X}}||Q_{\mathbf{X}})
\end{align*}
}
In above inequalities, we have used $\sqrt{x}+x\log x>0$ for $x\in (0,1)$, Holder's inequality, and $D_{\rm TV}(P_{\mathbf{X}},Q_{\mathbf{X}}) \leq \sqrt{2 D_{\rm JS}(P_{\mathbf{X}}\| Q_{\mathbf{X}})}$ proved in Lemma 3 of \citet{NEURIPS2018_565e8a41}.
We can prove below with the same strategy,
\begin{align*}
|H(P_{Y}) - H(Q_{Y})| &\leq 4 D^{\frac{1}{4}}_{\rm JS}(P_{Y}|| Q_{Y}), \\  
   | H(P_{Y|\mathbf{X}=\mathbf{x}}) - H(Q_{Y|\mathbf{X}=\mathbf{x}}) | & \leq 4 D^{\frac{1}{4}}_{\rm JS}(P_{Y|\mathbf{X}=\mathbf{x}}|| Q_{Y|\mathbf{X}=\mathbf{x}}).
\end{align*}
\end{proof}

\begin{corollary}\label{thm:cond_entropy_diff_bound-2.0} For a data distribution $P_{\mathbf{X}Y}=P_{Y|\mathbf{X}}P_{\mathbf{X}}$, MLLM $P_{\theta}(\cdot|\mathbf{x})$, and Kullback-Leibler divergence $D_{\rm KL}$, if ${\mathbb{E}_{\mathbf{x}\sim P_{\mathbf{X}}} D_{\rm KL}(P_{Y|\mathbf{X}=\mathbf{x}}\|P_{\theta}(\cdot|\mathbf{x}))})\leq \epsilon$ for a constant $\epsilon$, then
\begin{equation*}
\begin{split}
    & \mathbb{E}_{\mathbf{x}\sim P_{\mathbf{X}}} [ H( P_{\theta}(\cdot|\mathbf{x})) - H(P_{Y|\mathbf{X}=\mathbf{x}})]
     \leq  4.4 \epsilon^{\frac{1}{8}}
\end{split}
\end{equation*}
\end{corollary}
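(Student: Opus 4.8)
The plan is to reduce the conditional-entropy gap to a Jensen--Shannon divergence via the pointwise version of Lemma~\ref{thm:entropy_diff_bound}, and then convert that divergence into the assumed Kullback--Leibler bound through a total-variation detour. First I would apply the conditional case of Lemma~\ref{thm:entropy_diff_bound} at each fixed $\mathbf{x}$, taking $Q_{Y|\mathbf{X}=\mathbf{x}}=P_{\theta}(\cdot|\mathbf{x})$, to obtain
\[
H(P_{\theta}(\cdot|\mathbf{x})) - H(P_{Y|\mathbf{X}=\mathbf{x}}) \le \bigl|H(P_{\theta}(\cdot|\mathbf{x})) - H(P_{Y|\mathbf{X}=\mathbf{x}})\bigr| \le 4\, D_{\rm JS}^{\frac14}\!\bigl(P_{Y|\mathbf{X}=\mathbf{x}}\|P_{\theta}(\cdot|\mathbf{x})\bigr).
\]
Taking $\mathbb{E}_{\mathbf{x}\sim P_{\mathbf{X}}}$ and using Jensen's inequality with the concavity of $t\mapsto t^{1/4}$ pulls the expectation inside, giving $\mathbb{E}_{\mathbf{x}}[H(P_{\theta}(\cdot|\mathbf{x}))-H(P_{Y|\mathbf{X}=\mathbf{x}})] \le 4\bigl(\mathbb{E}_{\mathbf{x}} D_{\rm JS}(P_{Y|\mathbf{X}=\mathbf{x}}\|P_{\theta}(\cdot|\mathbf{x}))\bigr)^{1/4}$. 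It then remains to bound the averaged JS divergence by the hypothesis $\mathbb{E}_{\mathbf{x}} D_{\rm KL}(P_{Y|\mathbf{X}=\mathbf{x}}\|P_{\theta}(\cdot|\mathbf{x}))\le\epsilon$.

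The crux is relating the \emph{symmetric} JS divergence to the \emph{one-directional} KL divergence that the hypothesis controls. The symmetric-KL sandwich $D_{\rm JS}(P\|Q)\le \tfrac14[D_{\rm KL}(P\|Q)+D_{\rm KL}(Q\|P)]$ would require the reverse KL term, which is not assumed, so I would instead route through total variation, which is symmetric. Concretely I would establish the functional inequality $D_{\rm JS}(P\|Q)\le \tfrac{\ln 2}{2}\,D_{\rm TV}(P,Q)$ in the paper's convention $D_{\rm TV}=\|P-Q\|_1$; this reduces to the pointwise bound $p\log\tfrac{2p}{p+q}+q\log\tfrac{2q}{p+q}\le \ln 2\,|p-q|$, which after the degree-one normalization $p+q=1$ becomes $2\ln2\,(p-1)+H_b(p)\ge 0$ on $[\tfrac12,1]$ (with $H_b$ the binary entropy in nats), a concave function that vanishes at both endpoints and is therefore nonnegative. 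Combining this with Pinsker's inequality $D_{\rm TV}(P,Q)\le \sqrt{2 D_{\rm KL}(P\|Q)}$ yields the pointwise comparison $D_{\rm JS}(P\|Q)\le \tfrac{\ln 2}{\sqrt 2}\sqrt{D_{\rm KL}(P\|Q)}$.

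Finally I would apply this at each $\mathbf{x}$ and average: by concavity of $\sqrt{\cdot}$, $\mathbb{E}_{\mathbf{x}} D_{\rm JS} \le \tfrac{\ln 2}{\sqrt2}\,\mathbb{E}_{\mathbf{x}}\sqrt{D_{\rm KL}} \le \tfrac{\ln 2}{\sqrt2}\sqrt{\mathbb{E}_{\mathbf{x}} D_{\rm KL}}\le \tfrac{\ln 2}{\sqrt2}\sqrt{\epsilon}$. Plugging back into the bound from the first paragraph gives $4\bigl(\tfrac{\ln 2}{\sqrt2}\bigr)^{1/4}\epsilon^{1/8}\approx 3.35\,\epsilon^{1/8}\le 4.4\,\epsilon^{1/8}$, where the slack in the constant leaves room for looser intermediate estimates. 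The main obstacle is exactly this JS-to-KL conversion: since the hypothesis only controls one direction of the KL divergence whereas both the entropy gap and the JS divergence are symmetric, the argument must pass through a symmetric quantity (total variation) and hence hinges on the functional inequality above combined with Pinsker, rather than on a direct one-sided KL comparison.
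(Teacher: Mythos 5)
Your proposal is correct and takes essentially the same route as the paper's own proof: both apply the conditional case of Lemma~\ref{thm:entropy_diff_bound} pointwise to get the $4\,D_{\rm JS}^{1/4}$ bound, pass from the symmetric Jensen--Shannon divergence to total variation and then to the one-sided KL hypothesis via Pinsker's inequality (exactly your resolution of the asymmetry issue), and finish with Jensen's inequality for a concave root to move the expectation inside. The only differences are cosmetic: you prove the sharper comparison $D_{\rm JS}\le\tfrac{\ln 2}{2}D_{\rm TV}$ where the paper uses the cruder $D_{\rm JS}\le D_{\rm TV}$ (yielding your constant $\approx 3.35$ versus the paper's $4\cdot 2^{1/8}\approx 4.36$, both within $4.4$), and you apply Jensen at the JS level rather than at the $D_{\rm KL}^{1/8}$ level.
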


\begin{proof}
\begin{equation*}
\begin{split}
    \mathbb{E}_{\mathbf{x}\sim P_{\mathbf{X}}} [ H( P_{\theta}(\cdot|\mathbf{x})) - H(P_{Y|\mathbf{X}=\mathbf{x}})]
    & \leq \mathbb{E}_{\mathbf{x}\sim P_{\mathbf{X}}} [ \big| H( P_{\theta}(\cdot|\mathbf{x})) - H(P_{Y|\mathbf{X}=\mathbf{x}}) \big| ] \\ 
    & \leq 4 \mathbb{E}_{\mathbf{x}\sim P_{\mathbf{X}}} D^{\frac{1}{4}}_{\rm JS}(P_{Y|\mathbf{X}=\mathbf{x}}|| P_{\theta}(\cdot|\mathbf{x})) \\ 
    & \leq 4\cdot2^{\frac{1}{8}} \mathbb{E}_{\mathbf{x}\sim P_{\mathbf{X}}} D^{\frac{1}{8}}_{\rm KL}(P_{Y|\mathbf{X}=\mathbf{x}}|| P_{\theta}(\cdot|\mathbf{x})) \\ 
    & \leq  4.4 \epsilon^{\frac{1}{8}}.
\end{split}
\end{equation*}
We started from Lemma \ref{thm:entropy_diff_bound} and use Pinsker's inequality \cite{pinsker1964information} to leverage $D_{\rm JS}(\cdot|\cdot)\leq D_{\rm TV}(\cdot,\cdot)\leq \sqrt{2 D_{\rm KL}(\cdot|\cdot)}$.
\end{proof}

\begin{lemma}\label{TV_KL}
For a data distribution $P_{\mathbf{X}Y}=P_{Y|\mathbf{X}}P_{\mathbf{X}}$, MLLM $P_{\theta}(\cdot|\mathbf{x})$, let $D_{\rm KL}$ and $D_{\rm TV}$ be Kullback-Leibler divergence and total variation distance, respectively. Denote $P_{Y_{\theta}}=\mathbb{E}_{\mathbf{x}\sim P_{\mathbf{X}}}[P_{\theta}(\cdot|\mathbf{x})]$, we have below inequality.
 \begin{equation*}
    \begin{split}
        D_{\rm TV}(P_{Y},P_{Y_{\theta}}) \leq &  \sqrt{2\mathbb{E}_{\mathbf{x}\sim P_{\mathbf{X}}}  D_{\rm KL}(P_{\theta}(\cdot|\mathbf{x})||P_{Y|\mathbf{X}=\mathbf{x}}  )}  .
        \end{split}
    \end{equation*}
\end{lemma}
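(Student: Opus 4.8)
The plan is to chain together three elementary inequalities, being careful throughout with the paper's convention that $D_{\rm TV}(P,Q)=\sum_{z}|P(z)-Q(z)|$ is the unnormalized $L_1$ distance rather than half of it (the same convention under which the proof of Corollary~\ref{thm:cond_entropy_diff_bound-2.0} already uses $D_{\rm TV}\leq\sqrt{2D_{\rm KL}}$).

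First I would exploit the fact that both $P_{Y}$ and $P_{Y_{\theta}}$ are mixtures over the \emph{same} input marginal $P_{\mathbf{X}}$, namely $P_{Y}(y)=\mathbb{E}_{\mathbf{x}\sim P_{\mathbf{X}}}[P_{Y|\mathbf{X}=\mathbf{x}}(y)]$ and $P_{Y_{\theta}}(y)=\mathbb{E}_{\mathbf{x}\sim P_{\mathbf{X}}}[P_{\theta}(y|\mathbf{x})]$. Writing the difference of marginals as a single expectation, applying the triangle inequality to move the absolute value inside $\mathbb{E}_{\mathbf{x}}$, and swapping the order of the sum over $y$ with the expectation over $\mathbf{x}$ (Fubini), I obtain the convexity bound $D_{\rm TV}(P_{Y},P_{Y_{\theta}})\leq \mathbb{E}_{\mathbf{x}\sim P_{\mathbf{X}}}[D_{\rm TV}(P_{Y|\mathbf{X}=\mathbf{x}},P_{\theta}(\cdot|\mathbf{x}))]$. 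This is just joint convexity of total variation specialized to a shared mixing variable.

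Second I would control each conditional term by its KL counterpart through Pinsker's inequality, which in the paper's normalization reads $D_{\rm TV}(P_{\theta}(\cdot|\mathbf{x}),P_{Y|\mathbf{X}=\mathbf{x}})\leq \sqrt{2\,D_{\rm KL}(P_{\theta}(\cdot|\mathbf{x})\|P_{Y|\mathbf{X}=\mathbf{x}})}$; the symmetry of $D_{\rm TV}$ lets me align the KL argument ordering with the one appearing on the right-hand side of the statement. Finally, because $\sqrt{\cdot}$ is concave, Jensen's inequality gives $\mathbb{E}_{\mathbf{x}}[\sqrt{2\,D_{\rm KL}(\cdot\|\cdot)}]\leq \sqrt{2\,\mathbb{E}_{\mathbf{x}}[D_{\rm KL}(\cdot\|\cdot)]}$, and combining the three steps yields exactly the claimed bound.

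There is no genuinely hard step here—the whole argument is a composition of convexity of $D_{\rm TV}$, Pinsker, and Jensen. The only points that demand attention are bookkeeping ones: keeping the $D_{\rm TV}$ normalization consistent so that Pinsker carries the factor $\sqrt{2}$ rather than $\sqrt{1/2}$, and invoking the symmetry of $D_{\rm TV}$ to reconcile it with the asymmetric KL direction $D_{\rm KL}(P_{\theta}\|P_{Y|\mathbf{X}})$ used in the hypothesis. I expect the convexity-of-TV step to be the one most worth stating explicitly, since it is the only place where the shared mixing structure of $P_{Y}$ and $P_{Y_{\theta}}$ is actually used.
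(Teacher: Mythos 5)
Your proposal is correct and follows essentially the same route as the paper's proof: the paper also expands both marginals as mixtures over the shared $P_{\mathbf{X}}$, moves the absolute value inside the expectation and swaps the sum over $y$ with $\mathbb{E}_{\mathbf{x}}$ to get $D_{\rm TV}(P_{Y},P_{Y_{\theta}})\leq\mathbb{E}_{\mathbf{x}\sim P_{\mathbf{X}}}[D_{\rm TV}(P_{Y|\mathbf{X}=\mathbf{x}},P_{\theta}(\cdot|\mathbf{x}))]$, and then applies Pinsker plus concavity of the square root (stated in one compressed step) to arrive at the claimed bound. Your handling of the unnormalized $L_1$ convention and the symmetry of $D_{\rm TV}$ matches the paper's usage, so there is nothing to add.
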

\begin{proof}
    \begin{equation*}
    \begin{split}
       D_{\rm TV}(P_{Y},P_{Y_{\theta}})=& \sum_{y\in \mathcal{Y}} \big |\mathbb{E}_{\mathbf{x}\sim P_{\mathbf{X}}} P_{Y|\mathbf{X}=\mathbf{x}}(y)-\mathbb{E}_{\mathbf{x}\sim P_{\mathbf{X}}} P_{\theta}(y|\mathbf{x})\big | \\ \leq &  \sum_{y\in \mathcal{Y}} \mathbb{E}_{\mathbf{x}\sim P_{\mathbf{X}}}  \big |P_{Y|\mathbf{X}=\mathbf{x}}(y)-P_{\theta}(y|\mathbf{x})\big | \\ = &
       \mathbb{E}_{\mathbf{x}\sim P_{\mathbf{X}}}  D_{\rm TV} (P_{Y|\mathbf{X}=\mathbf{x}}, P_{\theta}(\cdot|\mathbf{x}))    \\ \leq &  \sqrt{2\mathbb{E}_{\mathbf{x}\sim P_{\mathbf{X}}}  D_{\rm KL}(P_{\theta}(\cdot|\mathbf{x}) || P_{Y|\mathbf{X}=\mathbf{x}}  )}  
        \end{split}
    \end{equation*}
\end{proof}

\begin{lemma}\label{thm:cond_entropy_diff_bound-3} 
For a data distribution $P_{\mathbf{X}Y}=P_{Y|\mathbf{X}}P_{\mathbf{X}}$, MLLM $P_{\theta}(\cdot|\mathbf{x})$, and Kullback-Leibler divergence $D_{\rm KL}$, if ${\mathbb{E}_{\mathbf{x}\sim P_{\mathbf{X}}} D_{\rm KL}(P_{Y|\mathbf{X}=\mathbf{x}}\|P_{\theta}(\cdot|\mathbf{x}))})\leq \epsilon$ and $P_{\mathbf{X}Y}>c>0$ for a constant $c$ and $\epsilon$, then
\begin{equation*}
\begin{split}
    &{\mathbb{E}_{\mathbf{x}\sim P_{\mathbf{X}}} D_{\rm KL}(P_{\theta}(\cdot|\mathbf{x}))} \| P_{Y|\mathbf{X}=\mathbf{x}})\leq 4.4\epsilon^{\frac{1}{8}} - \log c \sqrt{2\epsilon}.
\end{split}
\end{equation*}
\end{lemma}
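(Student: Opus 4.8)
The goal is to control the \emph{reverse} KL $\mathbb{E}_{\mathbf{x}\sim P_{\mathbf{X}}} D_{\rm KL}(P_{\theta}(\cdot|\mathbf{x})\|P_{Y|\mathbf{X}=\mathbf{x}})$ using only the hypothesis on the \emph{forward} KL. The plan is to relate the two directions through an exact algebraic identity that isolates the troublesome logarithm. Fixing $\mathbf{x}$ and abbreviating $P=P_{Y|\mathbf{X}=\mathbf{x}}$ and $Q=P_{\theta}(\cdot|\mathbf{x})$, I would first verify the decomposition
\[
D_{\rm KL}(Q\|P)=\big(H(P)-H(Q)\big)+\sum_{y}\big(P(y)-Q(y)\big)\log P(y),
\]
which follows by adding and subtracting $\sum_y P(y)\log P(y)$ inside $\sum_y Q(y)\log(Q(y)/P(y))$. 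The crucial feature is that the residual term involves $\log P$ only, never $\log Q$; this matters because the density lower bound in the hypothesis constrains $P$ but gives no pointwise control on the model $Q$.

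Next I would bound the entropy-difference term. Taking expectation over $\mathbf{x}$ and using $\mathbb{E}_{\mathbf{x}}|H(P)-H(Q)|$, this is exactly the quantity already controlled in Corollary~\ref{thm:cond_entropy_diff_bound-2.0} (via Lemma~\ref{thm:entropy_diff_bound} followed by Pinsker's inequality), so it contributes at most $4.4\epsilon^{\frac{1}{8}}$. Since the absolute value is symmetric in $P$ and $Q$, the orientation of the entropy difference is immaterial here.

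For the residual term, I would first extract a pointwise lower bound on $P$ from the assumption $P_{\mathbf{X}Y}>c$: since $P_{Y|\mathbf{X}=\mathbf{x}}(y)=P_{\mathbf{X}Y}(\mathbf{x},y)/P_{\mathbf{X}}(\mathbf{x})\geq P_{\mathbf{X}Y}(\mathbf{x},y)>c$ (using $P_{\mathbf{X}}(\mathbf{x})\leq 1$), we get $|\log P(y)|\leq-\log c$. Hence
\[
\Big|\sum_{y}\big(P(y)-Q(y)\big)\log P(y)\Big|\leq -\log c\sum_{y}|P(y)-Q(y)|=-\log c\,\cdot D_{\rm TV}(P,Q),
\]
where $D_{\rm TV}$ is the paper's $\ell_1$ convention from Lemma~\ref{thm:tv_based_bound}. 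Applying Pinsker in the form $D_{\rm TV}(P,Q)\leq\sqrt{2D_{\rm KL}(P\|Q)}$ with the \emph{forward} KL (the one bounded by the hypothesis), then moving the expectation inside the square root by Jensen's inequality exactly as in Lemma~\ref{TV_KL}, yields $\mathbb{E}_{\mathbf{x}} D_{\rm TV}(P,Q)\leq\sqrt{2\mathbb{E}_{\mathbf{x}} D_{\rm KL}(P\|Q)}\leq\sqrt{2\epsilon}$. This term therefore contributes at most $-\log c\,\sqrt{2\epsilon}$.

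Combining the two pieces gives $\mathbb{E}_{\mathbf{x}} D_{\rm KL}(P_{\theta}(\cdot|\mathbf{x})\|P_{Y|\mathbf{X}=\mathbf{x}})\leq 4.4\epsilon^{\frac{1}{8}}-\log c\,\sqrt{2\epsilon}$, as claimed. I expect the main obstacle to be \emph{finding} the decomposition above: the naive route of bounding the reverse KL directly fails because it mixes $\log Q$ terms that cannot be controlled, so the whole argument hinges on rewriting the reverse KL as an entropy gap (handled by the existing entropy lemma) plus a $\log P$-only remainder (handled by the density floor and Pinsker). The remaining care is bookkeeping around the paper's $\ell_1$ total-variation normalization and the Jensen step needed to exchange expectation and square root.
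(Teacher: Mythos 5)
Your proposal is correct and follows essentially the same route as the paper's proof: the paper likewise rewrites the reverse KL as an entropy gap plus a residual involving only $\log P_{Y|\mathbf{X}=\mathbf{x}}$, bounds the entropy gap by $4.4\epsilon^{\frac{1}{8}}$ via Corollary~\ref{thm:cond_entropy_diff_bound-2.0}, and bounds the residual by $-\log c\,\sqrt{2\epsilon}$ using the density floor, Lemma~\ref{thm:tv_based_bound}, Pinsker's inequality, and Jensen. The only difference is cosmetic: your decomposition is stated with the signs handled cleanly, whereas the paper's opening identity has the entropy difference oriented inconsistently (immaterial, since both arguments pass through absolute values).
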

\begin{proof}
Note that
\begin{equation*}
\begin{split}
    \mathbb{E}_{\mathbf{x}\sim P_{\mathbf{X}}} [ H( P_{\theta}(\cdot|\mathbf{x})) - H(P_{Y|\mathbf{X}=\mathbf{x}})] = & \mathbb{E}_{\mathbf{x}\sim P_{\mathbf{X}}} [\mathbb{E}_{y\sim P_{\theta}(\cdot|\mathbf{x})}\log P_{Y|\mathbf{X}=\mathbf{x}}(y) -\mathbb{E}_{y\sim P_{Y|\mathbf{X}=\mathbf{x}}} \log P_{Y|\mathbf{X}=\mathbf{x}}(y) ] \\ 
    +& {\mathbb{E}_{\mathbf{x}\sim P_{\mathbf{X}}} D_{\rm KL}(P_{\theta}(\cdot|\mathbf{x}))} \| P_{Y|\mathbf{X}=\mathbf{x}}).
    \end{split}
\end{equation*}
Therefore,
\begin{equation*}
\begin{split}
   {\mathbb{E}_{\mathbf{x}\sim P_{\mathbf{X}}} D_{\rm KL}(P_{\theta}(\cdot|\mathbf{x}))} \| P_{Y|\mathbf{X}=\mathbf{x}})
   &\leq \big |  \mathbb{E}_{\mathbf{x}\sim P_{\mathbf{X}}} [\mathbb{E}_{y\sim P_{\theta}(\cdot|\mathbf{x})}\log P_{Y|\mathbf{X}=\mathbf{x}}(y) -\mathbb{E}_{y\sim P_{Y|\mathbf{X}=\mathbf{x}}} \log P_{Y|\mathbf{X}=\mathbf{x}}(y) ] \big | \\
   &+ \mathbb{E}_{\mathbf{x}\sim P_{\mathbf{X}}} [ H( P_{\theta}(\cdot|\mathbf{x})) - H(P_{Y|\mathbf{X}=\mathbf{x}})]
    \end{split}
\end{equation*}
Given Lemma \ref{thm:tv_based_bound}, it is easy to check that
\begin{equation*}
    \begin{split}
        &\big |  \mathbb{E}_{\mathbf{x}\sim P_{\mathbf{X}}} [\mathbb{E}_{y\sim P_{\theta}(\cdot|\mathbf{x})}\log P_{Y|\mathbf{X}=\mathbf{x}}(y) -\mathbb{E}_{y\sim P_{Y|\mathbf{X}=\mathbf{x}}} \log P_{Y|\mathbf{X}=\mathbf{x}}(y) ] \big |
        \\ \leq &- \log c ~ \mathbb{E}_{\mathbf{x}\sim P_{\mathbf{X}}} D_{\rm TV}(P_{\theta}(\cdot|\mathbf{x}), P_{Y|\mathbf{X}=\mathbf{x}})
        \\ \leq & - \log c \sqrt{2\epsilon}.
    \end{split}
\end{equation*}
Then, with Corollary \ref{thm:cond_entropy_diff_bound-2.0}, we complete this proof.
\end{proof}

\subsection{Proof for EMID upper bound}
Now, we give the proof for the upper bound of the EMI Difference (EMID) as below.
\vspace{0.2cm}
\begin{theorem}[General scenario]\label{thm:emid_bound_appendix}
Given $P_{\mathbf{X}Y}$ and $Q_{\mathbf{X}Y}$ distributions and an MLLM $P_{{\theta}}$, if there exist some constants $\delta_{P}$ and $\delta_{Q}$ such that
\begin{equation*}
    D_{\rm JS}(P_{Y_{{\theta}}}\|P_{Y})\leq \delta_{P},~~~ D_{\rm JS}(Q_{Y_{{\theta}}}\|Q_{Y})\leq \delta_{Q},
\end{equation*}
where $P_{Y_{\theta}}=\mathbb{E}_{\mathbf{x}\sim P_{\mathbf{X}}} P_{{\theta}}(\cdot|\mathbf{x})$ and $Q_{Y_{\theta}}=\mathbb{E}_{\mathbf{x}\sim Q_{\mathbf{X}}} P_{{\theta}}(\cdot|\mathbf{x})$, then $\text{EMID}(P_{\mathbf{X}Y},Q_{\mathbf{X}Y};P_\theta)$ is upper bounded by
\begin{equation} \label{eq:emid_bound_appendix}
    \begin{split}
    & \widehat{H}\big({ D^{\frac{1}{2}}_{\rm JS}(P_{X_{v}}||Q_{X_{v}}) + D^{\frac{1}{2}}_{\rm JS}(P_{X_{t}}||Q_{X_{t}})} \big) \nonumber\\
    +&\widehat{H}\big({\bar{D}^{\frac{1}{2}}_{\rm JS}(P_{X_{t}|X_{v}}\|Q_{X_{t}|X_{v}})+\bar{D}^{\frac{1}{2}}_{\rm JS}(P_{X_{v}|X_{t}}\|Q_{X_{v}|X_{t}})} \big)\nonumber\\
    +&4  \mathbb{E}_{\mathbf{x}\sim P_{\mathbf{X}} } [D^{\frac{1}{4}}_{\rm JS}(P_{Y|\mathbf{X}=\mathbf{x}}\|Q_{Y|\mathbf{X}=\mathbf{x}})] + 8\Delta^{\frac{1}{4}},
    \end{split}
\end{equation}
\normalsize
where $\Delta=\delta_{P}+\delta_{Q}$, $ \widehat{H}=\max_{\mathbf{x}\in\mathcal{X}} [H(Q_{Y|\mathbf{X}=\mathbf{x}})+H(P_{\theta}(\cdot|\mathbf{x}))]$ and \begin{equation*}
\begin{split}
    \bar{D}_{\rm JS}(P_{X|X'}||Q_{X|X'}) :=\mathbb{E}_{\mathbf{x}\sim P_{X'}}D_{\rm JS}(P_{X|{X'}=\mathbf{x}}\|Q_{X|{X'}=\mathbf{x}})+\mathbb{E}_{\mathbf{x}\sim Q _{X'}}D_{\rm JS}(P_{X|{X'}=\mathbf{x}}\|Q_{X|{X'}=\mathbf{x}}).
    \end{split}
\end{equation*}
\end{theorem}

\begin{proof} Let $P_{Y_{\theta}}=\mathbb{E}_{\mathbf{x}\sim P_{\mathbf{X}}}P_{\theta}(\cdot|\mathbf{x})$ and $Q_{Y_{\theta}}=\mathbb{E}_{\mathbf{x}\sim Q_{\mathbf{X}}}P_{\theta}(\cdot|\mathbf{x})$, EMID can be expressed with entropy and conditional entropy terms as below.
\begin{align} \label{thm:proof_emid_decom}
&\text{EMID}(P_{\mathbf{X}Y},Q_{\mathbf{X}Y};P_\theta) \nonumber\\
&= \text{EMI}(P_{\mathbf{X}Y};P_\theta) - \text{EMI}(Q_{\mathbf{X}Y};P_\theta) \nonumber\\
&=(H(P_{Y_{\theta}}) - \mathbb{E}_{\mathbf{x}\sim P_{\mathbf{X}}}[H(P_{\theta}(\cdot|\mathbf{x}))] - H(P_Y) + H(P_{Y|\mathbf{X}})) \nonumber \\
&- ((H(Q_{Y_{\theta}}) - \mathbb{E}_{\mathbf{x}\sim Q_{\mathbf{X}}}[H(P_{\theta}(\cdot|\mathbf{x}))] - H(Q_Y) + H(Q_{Y|\mathbf{X}})) \nonumber\\
&\leq (H(P_{Y|\mathbf{X}}) - H(Q_{Y|\mathbf{X}})) + \big(\mathbb{E}_{\mathbf{x}\sim Q_{\mathbf{X}}}[H(P_{\theta}(\cdot|\mathbf{x}))] - \mathbb{E}_{\mathbf{x}\sim P_{\mathbf{X}}}[H(P_{\theta}(\cdot|\mathbf{x}))]\big) \nonumber \\
&+ |H(P_{Y_{\theta}})  - H(P_Y) + H(Q_Y) - H(Q_{Y_{\theta}})| \nonumber\\
&\leq (H(P_{Y|\mathbf{X}}) - H(Q_{Y|\mathbf{X}})) + \big(\mathbb{E}_{\mathbf{x}\sim Q_{\mathbf{X}}}[H(P_{\theta}(\cdot|\mathbf{x}))] - \mathbb{E}_{\mathbf{x}\sim P_{\mathbf{X}}}[H(P_{\theta}(\cdot|\mathbf{x}))]\big) \nonumber \\
&+ |H(P_{Y_{\theta}})  - H(P_Y)| + | H(Q_Y) - H(Q_{Y_{\theta}})| \nonumber\\
&\leq (H(P_{Y|\mathbf{X}}) - H(Q_{Y|\mathbf{X}})) + \big(\mathbb{E}_{\mathbf{x}\sim Q_{\mathbf{X}}}[H(P_{\theta}(\cdot|\mathbf{x}))] - \mathbb{E}_{\mathbf{x}\sim P_{\mathbf{X}}}[H(P_{\theta}(\cdot|\mathbf{x}))]\big) \nonumber + 4\big( D^{\frac{1}{4}}_{\rm JS}(P_{Y_{\theta}},P_Y) + D^{\frac{1}{4}}_{\rm JS}(Q_{Y_{\theta}},Q_Y)\big) \nonumber\\
&\leq \underline{(H(P_{Y|\mathbf{X}}) - H(Q_{Y|\mathbf{X}}))}_{\text{ (A)}} + \underline{\big(\mathbb{E}_{\mathbf{x}\sim Q_{\mathbf{X}}}[H(P_{\theta}(\cdot|\mathbf{x}))] - \mathbb{E}_{\mathbf{x}\sim P_{\mathbf{X}}}[H(P_{\theta}(\cdot|\mathbf{x}))]\big)}_{\text{ (B)}} + 8\Delta^{\frac{1}{4}},
\end{align}
where $\Delta:=\delta_{P}+\delta_{Q}$. Now, we will derive the upper-bounds for the terms (A) and (B), independently. First, by adopting Lemma \ref{thm:entropy_diff_bound}, we can express the term $H(P_{Y|\mathbf{X}})$ as below.
\begin{align*}
&H(P_{Y|\mathbf{X}}) \\
&= \mathbb{E}_{P_{\mathbf{X}}}[H(P_{Y|\mathbf{X}=\mathbf{x}}) - H(Q_{Y|\mathbf{X}=\mathbf{x}})] + \mathbb{E}_{P_{\mathbf{X}}}[H(Q_{Y|\mathbf{X}=\mathbf{x}})] \\
&\leq \mathbb{E}_{P_{\mathbf{X}}}[ \big| H(P_{Y|\mathbf{X}=\mathbf{x}}) - H(Q_{Y|\mathbf{X}=\mathbf{x}}) \big| ] + \mathbb{E}_{P_{\mathbf{X}}}[H(Q_{Y|\mathbf{X}=\mathbf{x}})] \\
&\leq 4\mathbb{E}_{P_{\mathbf{X}}}[D^{\frac{1}{4}}_{\rm JS}(P_{Y|\mathbf{X}=\mathbf{x}} || Q_{Y|\mathbf{X}=\mathbf{x}})] + \mathbb{E}_{P_{\mathbf{X}}}[H(Q_{Y|\mathbf{X}=\mathbf{x}})] \\
&\leq 4\mathbb{E}_{P_{\mathbf{X}}}[D^{\frac{1}{4}}_{\rm JS}(P_{Y|\mathbf{X}=\mathbf{x}} || Q_{Y|\mathbf{X}=\mathbf{x}})] + \mathbb{E}_{P_{\mathbf{X}}}[H(Q_{Y|\mathbf{X}=\mathbf{x}})] - \mathbb{E}_{Q_{\mathbf{X}}}[H(Q_{Y|\mathbf{X}=\mathbf{x}})] + \mathbb{E}_{Q_{\mathbf{X}}}[H(Q_{Y|\mathbf{X}=\mathbf{x}})] \\
\end{align*}
Then, the term $(\text{A})$ of ineq. \eqref{thm:proof_emid_decom}, i.e., $H(P_{Y|\mathbf{X}}) - H(Q_{Y|\mathbf{X}})$, is represented as below:
\begin{align*}
&H(P_{Y|\mathbf{X}}) - H(Q_{Y|\mathbf{X}}) \leq 4\mathbb{E}_{P_{\mathbf{X}}}[D^{\frac{1}{4}}_{\rm JS}(P_{Y|\mathbf{X}=\mathbf{x}} || Q_{Y|\mathbf{X}=\mathbf{x}})] + \mathbb{E}_{P_{\mathbf{X}}}[H(Q_{Y|\mathbf{X}=\mathbf{x}})] - \mathbb{E}_{Q_{\mathbf{X}}}[H(Q_{Y|\mathbf{X}=\mathbf{x}})]
\end{align*}

To replace $\mathbb{E}_{P_{\mathbf{X}}}[H(Q_{Y|\mathbf{X}=\mathbf{x}})] - \mathbb{E}_{Q_{\mathbf{X}}}[H(Q_{Y|\mathbf{X}=\mathbf{x}})]$ into a more interpretable term, and we start from the restatement of Lemma 1 of \citet{shui2022novel}.

\begin{lemma}[restatement of Lemma 1 of \citet{shui2022novel}] \label{thm:lemma_shui}
Let $Z \in \mathcal{Z}$ be the real-valued integrable random variable, and denoting two distributions on a common space $\mathcal{Z}$ by $P$ and $Q$ such that $Q$ is absolutely continuous w.r.t. $P$. If for any function $f$ and $\lambda\in\mathbb{R}$ such that $\mathbb{E}_{P}[\exp(\lambda(f(z)-\mathbb{E}_{P}(f(z))))] < \infty$, then we have:
\begin{equation}
\begin{split}
\lambda (\mathbb{E}_{Q}f(z) - \mathbb{E}_{P}f(z)) &\leq D_{\rm KL}(Q || P) \\ &+ \log \mathbb{E}_{P}[\exp(\lambda (f(z)-\mathbb{E}_{P}(f(z))))] \nonumber
\end{split}
\end{equation}
\end{lemma} 

Now, let $\mathbf{X}$ and $Y$ denote observable variables from a joint distribution $D_{\mathbf{X}Y} \in \{P_{\mathbf{X}Y},Q_{\mathbf{X}Y}\}$, and we denote $f(\mathbf{x}):= H(Q_{Y|\mathbf{X}=\mathbf{x}}) \geq 0$ as a loss function of our interest, e.g., conditional entropy of $y$ given $\mathbf{x}$. Then, $f$ has a finite value of $\mathbb{E}_{D}[\exp(\lambda(f(\mathbf{x}) - \mathbb{E}_{D}(f(\mathbf{x}))))]$, and is bounded within interval $[0,\hat{H}(Q_{Y|\mathbf{x}})]$ where $\hat{H}(Q_{Y|\mathbf{x}}):=\max_{\mathbf{x}\in \mathcal{X}} H(Q_{Y|\mathbf{X}=\mathbf{x}})$. 

We next define a mixture distribution $M_{\mathbf{X}Y}=\frac{1}{2}(P_{\mathbf{X}Y}+Q_{\mathbf{X}Y})$ where the support of $M_{\mathbf{X}Y}$ covers that of $P_{\mathbf{X}Y}$ and $Q_{\mathbf{X}Y}$. Then, we get the inequality below by setting $P=M_{\mathbf{X}Y}$ and $Q=Q_{\mathbf{X}Y}$ for all $\lambda > 0$ according to the Lemma \ref{thm:lemma_shui}:
\begin{equation} \label{thm:ineq_pt_for_ub}
\begin{split}
&\mathbb{E}_{ Q_{\mathbf{X}}}[H(Q_{Y|\mathbf{X}=\mathbf{x}})] - \mathbb{E}_{ M_{\mathbf{X}}}[H(Q_{Y|\mathbf{X}=\mathbf{x}})] \\
&\leq \frac{1}{\lambda}(\log\mathbb{E}_{M_{\mathbf{X}}}[\exp(\lambda(f(\mathbf{x}) - \mathbb{E}_{ M_{\mathbf{X}}}(f(\mathbf{x})))] + D_{\rm KL}(Q_{\mathbf{X}}||M_{\mathbf{X}})).
\end{split}
\end{equation} 
Also, we get similar inequality by setting $P=M_{\mathbf{X}Y}$ and $Q=P_{\mathbf{X}Y}$ for all $\lambda < 0$ according to the Lemma \ref{thm:lemma_shui} as below:
\begin{equation} \label{thm:ineq_ps_for_ub}
\begin{split}
&\mathbb{E}_{P_{\mathbf{X}}}[H(Q_{Y|\mathbf{X}=\mathbf{x}})] - \mathbb{E}_{M_{\mathbf{X}}}[H(Q_{Y|\mathbf{X}=\mathbf{x}})] \\
&\geq \frac{1}{\lambda}(\log\mathbb{E}_{M_{\mathbf{X}}}[\exp(\lambda(f(\mathbf{x}) - \mathbb{E}_{M_{\mathbf{X}}}(f(\mathbf{x})))] + D_{\rm KL}(P_{\mathbf{X}}||M_{\mathbf{X}})).
\end{split}
\end{equation} 
Meanwhile, give that $f(\mathbf{x})$ is bounded within interval $\hat{H}(Q_{Y|\mathbf{x}})$, the $f(\mathbf{x})-\mathbb{E}_{M_{\mathbf{X}}}f(\mathbf{x})$ becomes a sub-Gaussian \cite{wainwright2019high} with the scale parameter $\sigma=\hat{H}(Q_{Y|\mathbf{x}})/2$ at most. Then, we can leverage the property of sub-Gaussian for the log moment generating function, 
\begin{equation} \label{thm:ineq_subgauss}
\begin{split}
\log\mathbb{E}_{M_{\mathbf{X}}}[\exp(\lambda(f(\mathbf{x}) - \mathbb{E}_{M_{\mathbf{X}}}(f(\mathbf{x})))] &\leq \log(\exp(\frac{\lambda^{2}\sigma^{2}}{2})) \leq \frac{\lambda^{2}\hat{H}(Q_{Y|\mathbf{x}})^{2}}{8}.
\end{split}
\end{equation}
By plugging the ineq. \eqref{thm:ineq_subgauss} into ineq. \eqref{thm:ineq_pt_for_ub} and ineq. \eqref{thm:ineq_ps_for_ub}, we can derive the following new inequalities:
\begin{equation*}
\begin{split}
&\mathbb{E}_{Q_{\mathbf{X}}}[H(Q_{Y|\mathbf{X}=\mathbf{x}})]-\mathbb{E}_{M_{\mathbf{X}}}[H(Q_{Y|\mathbf{X}=\mathbf{x}})] \leq \frac{\lambda_{0}\hat{H}(Q_{Y|\mathbf{x}})^{2}}{8} +\frac{1}{\lambda_{0}}D_{\rm KL}(Q_{\mathbf{X}}||M_{\mathbf{X}}), \\
&\mathbb{E}_{M_{\mathbf{X}}}[H(Q_{Y|\mathbf{X}=\mathbf{x}})]-\mathbb{E}_{P_{\mathbf{X}}}[H(Q_{Y|\mathbf{X}=\mathbf{x}})] \leq \frac{\lambda_{0}\hat{H}(Q_{Y|\mathbf{x}})^{2}}{8} + \frac{1}{\lambda_{0}}D_{\rm KL}(P_{\mathbf{X}}||M_{\mathbf{X}}).
\end{split}
\end{equation*}
where $\lambda_{0}=\lambda$ stands for $\lambda>0$ in the ineq. \eqref{thm:ineq_pt_for_ub} and $\lambda_{0}=-\lambda$ for $\lambda<0$ in the ineq. \eqref{thm:ineq_ps_for_ub}.

By adding both inequalities above, and setting the $\lambda_{0}=\frac{2}{\hat{H}(Q_{Y|\mathbf{x}})}\sqrt{D_{\rm KL}(P_{\mathbf{X}}||M_{\mathbf{X}})+D_{\rm KL}(Q_{\mathbf{X}}||M_{\mathbf{X}})}$ results in:
\begin{equation} \label{thm:ineq_up_ts}
\begin{split}
&\mathbb{E}_{Q_{\mathbf{X}}}[H(Q_{Y|\mathbf{X}=\mathbf{x}})] - \mathbb{E}_{P_{\mathbf{X}}}[H(Q_{Y|\mathbf{X}=\mathbf{x}})] \leq \hat{H}(Q_{Y|\mathbf{x}}) \sqrt{2 D_{\rm JS}(P_{\mathbf{X}}||Q_{\mathbf{X}})}.
\end{split}
\end{equation}

Next, a decomposition of KL divergence and the definition of JS divergence leads to the following inequality,
\begin{equation*}
\small
\begin{split}
&2 D_{\rm JS}(P_{X_{v}X_{t}}||Q_{X_{v}X_{t}}) \nonumber\\
&= D_{\rm KL}(P_{X_{v}X_{t}}||M_{X_{v}X_{t}}) + D_{\rm KL}(Q_{X_{v}X_{t}}||M_{X_{v}X_{t}}) \nonumber\\
&= \frac{1}{2} \big( D_{\rm KL}(P_{X_{v}}||M_{X_{v}}) + \mathbb{E}_{P_{X_{v}}}D_{\rm KL}(P_{X_{t}|X_{v}=x_{v}} ||M_{X_{t}|X_{v}=x_{v}}) \big) \nonumber \\
&+ \frac{1}{2} \big( D_{\rm KL}(Q_{X_{v}}||M_{X_{v}}) + \mathbb{E}_{Q_{X_{v}}}D_{\rm KL}(Q_{X_{t}|X_{v}=x_{v}} ||M_{X_{t}|X_{v}=x_{v}}) \big) \nonumber\\
&+ \frac{1}{2} \big( D_{\rm KL}(P_{X_{t}}||M_{X_{t}}) + \mathbb{E}_{P_{X_{t}}}D_{\rm KL}(P_{X_{v}|X_{t}=x_{t}} ||M_{X_{v}|X_{t}=x_{t}}) \big) \nonumber\\
&+ \frac{1}{2} \big( D_{\rm KL}(Q_{X_{t}}||P_{X_{t}}) + \mathbb{E}_{Q_{X_{t}}}D_{\rm KL}(Q_{X_{v}|X_{t}=x_{t}} ||M_{X_{v}|X_{t}=x_{t}}) \big) \nonumber\\
&= D_{\rm JS}(P_{X_{v}}||Q_{X_{v}}) + D_{\rm JS}(P_{X_{t}}||Q_{X_{t}}) \nonumber\\
&+ \frac{1}{2} \big( \mathbb{E}_{P_{X_{v}}}D_{\rm KL}(P_{X_{t}|X_{v}=x_{v}}||M_{X_{t}|X_{v}=x_{v}}) + \mathbb{E}_{Q_{X_{v}}}D_{\rm KL}(Q_{X_{t}|X_{v}=x_{v}}||M_{X_{t}|X_{v}=x_{v}}) \big) \nonumber\\
&+ \frac{1}{2} \big( \mathbb{E}_{P_{X_{t}}}D_{\rm KL}(P_{X_{v}|X_{t}=x_{t}}||M_{X_{v}|X_{t}=x_{t}}) + \mathbb{E}_{Q_{X_{t}}}D_{\rm KL}(Q_{X_{v}|X_{t}=x_{t}}||M_{X_{v}|X_{t}=x_{t}}) \big) \nonumber\\
&\leq D_{\rm JS}(P_{X_{v}}||Q_{X_{v}}) + D_{\rm JS}(P_{X_{t}}||Q_{X_{t}}) + \bar{D}_{\rm JS}(P_{X_{t}|X_{v}}||Q_{X_{t}|X_{v}}) + \bar{D}_{\rm JS}(P_{X_{v}|X_{t}}||Q_{X_{v}|X_{t}})
\end{split}
\end{equation*}
where $\bar{D}_{\rm JS}(P_{Y|X}||Q_{Y|X}):=\mathbb{E}_{x \sim P_{X}} D_{\rm JS}(P_{Y|X=x}||Q_{Y|X=x})+\mathbb{E}_{x \sim Q_{X}} D_{\rm JS}(P_{Y|X=x}||Q_{Y|X=x})$

Based on the above decomposition, we can modify the bound as below,
\begin{equation} \label{thm:x_shift_bound}
\begin{split}
&\mathbb{E}_{Q_{\mathbf{X}}}[H(Q_{Y|\mathbf{X}=\mathbf{x}})] - \mathbb{E}_{P_{\mathbf{X}}}[H(Q_{Y|\mathbf{X}=\mathbf{x}})] \nonumber \\
&\leq \hat{H}(Q_{Y|\mathbf{x}}) \sqrt{2 D_{\rm JS}(P_{\mathbf{X}}||Q_{\mathbf{X}})} \\
&\leq \hat{H}(Q_{Y|\mathbf{x}}) \sqrt{D_{\rm JS}(P_{X_{v}}||Q_{X_{v}})+D_{\rm JS}(P_{X_{t}}||Q_{X_{t}})+ \bar{D}_{\rm JS}(P_{X_{t}|X_{v}}||Q_{X_{t}|X_{v}}) + \bar{D}_{\rm JS}(P_{X_{v}|X_{t}}||Q_{X_{v}|X_{t}}) }.
\end{split}
\end{equation}

Therefore, we get an upper-bound for the $(\text{A})$ term in ineq. \eqref{thm:proof_emid_decom} as below,

\begin{equation*} \label{thm:bound_final1}
\begin{split}
    &H(P_{Y|\mathbf{X}})-H(Q_{Y|\mathbf{X}}) \nonumber\\
    &\leq H(Q_Y) \sqrt{D_{\rm JS}(P_{X_{v}}||Q_{X_{v}})+D_{\rm JS}(P_{X_{t}}||Q_{X_{t}})+ \bar{D}_{\rm JS}(P_{X_{t}|X_{v}}||Q_{X_{t}|X_{v}}) + \bar{D}_{\rm JS}(P_{X_{v}|X_{t}}||Q_{X_{v}|X_{t}}) } \\
    &+4 \mathbb{E}_{P_{\mathbf{X}}}[D^{\frac{1}{4}}_{\rm JS}(P_{Y|\mathbf{X}=\mathbf{x}} || Q_{Y|\mathbf{X}=\mathbf{x}})] \\
    &\leq \hat{H}(Q_{Y|\mathbf{x}}) \big( D^{\frac{1}{2}}_{\rm JS}(P_{X_{v}}||Q_{X_{v}})+D^{\frac{1}{2}}_{\rm JS}(P_{X_{t}}||Q_{X_{t}})\big)\\
    &+ \hat{H}(Q_{Y|\mathbf{x}}) \big( \bar{D}^{\frac{1}{2}}_{\rm JS}(P_{X_{t}|X_{v}}||Q_{X_{t}|X_{v}}) + \bar{D}^{\frac{1}{2}}_{\rm JS}(P_{X_{v}|X_{t}}||Q_{X_{v}|X_{t}}) \big) \\
    &+4\mathbb{E}_{P_{\mathbf{X}}}[D^{\frac{1}{4}}_{\rm JS}(P_{Y|\mathbf{X}=\mathbf{x}} || Q_{Y|\mathbf{X}=\mathbf{x}})].
\end{split}
\end{equation*}

Then, deriving a bound for the remaining $(\text{B})$ term in ineq. \eqref{thm:proof_emid_decom}, i.e., $\mathbb{E}_{\mathbf{x}\sim Q_{\mathbf{X}}}[H(P_{\theta}(\cdot|\mathbf{x}))] - \mathbb{E}_{\mathbf{x}\sim P_{\mathbf{X}}}[H(P_{\theta}(\cdot|\mathbf{x}))]$, is very similar to the procedure of deriving the upper-bound for the term $(\text{A})$ by switching $Q_{Y|\mathbf{X}}$ to $P_{\theta}(\cdot|\mathbf{X})$ and set the $f$ for Lemma \ref{thm:lemma_shui} as $f(\mathbf{x}):=H(P_{\theta}(\cdot|\mathbf{x}))$, thereby having the interval $[0,\hat{H}(P_{\theta})]$ where $\hat{H}(P_{\theta}):\max_{\mathbf{x}\in\mathcal{X}} H(P_{\theta}(\cdot|\mathbf{x}))$. This induces an upper-bound as below,

\begin{equation} \label{thm:bound_final2}
\begin{split}
    &\mathbb{E}_{\mathbf{x}\sim Q_{\mathbf{X}}}[H(P_{\theta}(\cdot|\mathbf{x}))] - \mathbb{E}_{\mathbf{x}\sim P_{\mathbf{X}}}[H(P_{\theta}(\cdot|\mathbf{x}))] \\
    &\leq \hat{H}(P_{\theta}) \sqrt{2 D_{\rm JS}(P_{\mathbf{X}}||Q_{\mathbf{X}})}\\
    &\leq \hat{H}(P_{\theta})  \big( D^{\frac{1}{2}}_{\rm JS}(P_{X_{v}}||Q_{X_{v}})+D^{\frac{1}{2}}_{\rm JS}(P_{X_{t}}||Q_{X_{t}}) + \bar{D}^{\frac{1}{2}}_{\rm JS}(P_{X_{t}|X_{v}}||Q_{X_{t}|X_{v}}) + \bar{D}^{\frac{1}{2}}_{\rm JS}(P_{X_{v}|X_{t}}||Q_{X_{v}|X_{t}}) \big).
\end{split}
\end{equation}

Finally, we complete the proof by adding the ineq. \eqref{thm:bound_final1} and ineq. \eqref{thm:bound_final2} to induces the upper bound of $\text{EMID}(P_{\mathbf{X}Y},Q_{\mathbf{X}Y};P_\theta)$,

\begin{equation*}
    \begin{split}
        &\text{EMID}(P_{\mathbf{X}Y},Q_{\mathbf{X}Y};P_\theta) \\
        &\leq (H(P_{Y|\mathbf{X}}) - H(Q_{Y|\mathbf{X}})) + \big(\mathbb{E}_{\mathbf{x}\sim Q_{\mathbf{X}}}[H(P_{\theta}(\cdot|\mathbf{x}))] - \mathbb{E}_{\mathbf{x}\sim P_{\mathbf{X}}}[H(P_{\theta}(\cdot|\mathbf{x}))] \big) + 8\Delta^{\frac{1}{4}} \\
        &\leq \hat{H} \big( D^{\frac{1}{2}}_{\rm JS}(P_{X_{v}}||Q_{X_{v}})+D^{\frac{1}{2}}_{\rm JS}(P_{X_{t}}||Q_{X_{t}})\big) \\
        &+ \hat{H} \big( \bar{D}^{\frac{1}{2}}_{\rm JS}(P_{X_{t}|X_{v}}||Q_{X_{t}|X_{v}}) + \bar{D}^{\frac{1}{2}}_{\rm JS}(P_{X_{v}|X_{t}}||Q_{X_{v}|X_{t}}) \big) \\
        &+4\mathbb{E}_{P_{\mathbf{X}}}[D^{\frac{1}{4}}_{\rm JS}(P_{Y|\mathbf{X}=\mathbf{x}} || Q_{Y|\mathbf{X}=\mathbf{x}})]+8\Delta^{\frac{1}{4}}
    \end{split}
\end{equation*}
where $ \widehat{H}=\max_{\mathbf{x}\in\mathcal{X}} [H(Q_{Y|\mathbf{X}=\mathbf{x}})+H(P_{\theta}(\cdot|\mathbf{x}))]$ and $\Delta=\delta_{P}+\delta_{Q}$.
\end{proof}

Then, we introduce an assumption over the consistency between conditional distributions as below.
\begin{assumption}[Consistency of conditional distributions]\label{assumption:consistent_cond_dist}
For the distributions $P_{\mathbf{X}Y}$ and $Q_{\mathbf{X}Y}$ over $\mathcal{X}\times \mathcal{Y}$, conditional distributions of $X_{t}$ given $X_{v}$, $X_{v}$ given $X_{t}$, and $Y$ given $\mathbf{X}=(X_{v},X_{t})$ are consistent between $P_{\mathbf{X}Y}$ and $Q_{\mathbf{X}Y}$. That is,
\begin{itemize}
    \item $P_{X_{t}|X_{v}}=Q_{X_{t}|X_{v}}$ and $P_{X_{v}|X_{t}}=Q_{X_{v}|X_{t}}$,
    \item $P_{Y|\mathbf{X}}=Q_{Y|\mathbf{X}}$.
\end{itemize}
\end{assumption}

Finally, we present the simplified version of EMID upper bound by leveraging the Assumption \ref{assumption:consistent_cond_dist}.
\begin{theorem}[Simplified scenario]\label{thm:emid_bound_simple_appendix} 
 Given an MLLM $P_{\theta}$ and distributions $P_{\mathbf{X}Y}$, $Q_{\mathbf{X}Y}$ which have consistent conditional distributions over variables $X_{v}|X_{t}$, $X_{t}|X_{v}$, and $Y|\mathbf{X}$,
if there exist some constants $\delta_{P}$ and $\delta_{Q}$ such that
\begin{equation*}
    D_{\rm JS}(P_{Y_{{\theta}}}\|P_{Y})\leq \delta_{P},~~~ D_{\rm JS}(Q_{Y_{{\theta}}}\|Q_{Y})\leq \delta_{Q},
\end{equation*}
where $P_{Y_{\theta}}=\mathbb{E}_{\mathbf{x}\sim P_{\mathbf{X}}} P_{{\theta}}(\cdot|\mathbf{x})$ and $Q_{Y_{\theta}}=\mathbb{E}_{\mathbf{x}\sim Q_{\mathbf{X}}} P_{{\theta}}(\cdot|\mathbf{x})$, then $\text{EMID}(P_{\mathbf{X}Y},Q_{\mathbf{X}Y};P_\theta)$ is upper bounded by
\begin{equation}\label{eq:emid_bound_simple_appendix}
    \begin{split}
     \widehat{H}\big( { D^{\frac{1}{2}}_{\rm JS}(P_{X_{v}}\|Q_{X_{v}})} + {D^{\frac{1}{2}}_{\rm JS}(P_{X_{t}}\|Q_{X_{t}})} \big) + 8\Delta^{\frac{1}{4}},
    \end{split}
\end{equation}
\normalsize
where $ \widehat{H}=\max_{\mathbf{x}\in\mathcal{X}} [H(Q_{Y|\mathbf{X}=\mathbf{x}})+H(P_{\theta}(\cdot|\mathbf{x}))]$ and $\Delta=\delta_{P}+\delta_{Q}$. 
\end{theorem}
\begin{proof}
Given Theorem \ref{thm:emid_bound_appendix}, Assumption \ref{assumption:consistent_cond_dist} zeros out the terms $\big({\bar{D}^{\frac{1}{2}}_{\rm JS}(P_{X_{t}|X_{v}}\|Q_{X_{t}|X_{v}})+\bar{D}^{\frac{1}{2}}_{\rm JS}(P_{X_{v}|X_{t}}\|Q_{X_{v}|X_{t}})} \big)$ and $\mathbb{E}_{\mathbf{x}\sim P_{\mathbf{X}} } [D^{\frac{1}{4}}_{\rm JS}(P_{Y|\mathbf{X}=\mathbf{x}}\|Q_{Y|\mathbf{X}=\mathbf{x}})]$ which induces Eq. \eqref{eq:emid_bound_simple_appendix}, accordingly.
\end{proof}

\begin{corollary}\label{thm:emid_bound_simple_v2} 
 Given an MLLM $P_{\theta}$ and distributions $P_{\mathbf{X}Y}$, $Q_{\mathbf{X}Y}$ which have consistent conditional distributions over variables $X_{v}|X_{t}$, $X_{t}|X_{v}$, and $Y|\mathbf{X}$, $\text{EMID}(P_{\mathbf{X}Y},Q_{\mathbf{X}Y};P_\theta)$ is upper bounded by
\begin{equation} \label{eq:emid_bound_simple_v2}
    \begin{split}
     &\widehat{H}\big( { D^{\frac{1}{2}}_{\rm JS}(P_{X_{v}}\|Q_{X_{v}})} + {D^{\frac{1}{2}}_{\rm JS}(P_{X_{t}}\|Q_{X_{t}})} \big) \\
     &+ 8(\mathbb{E}_{\mathbf{x}\sim P_{\mathbf{X}}} D_{\rm TV}(P_{Y|\mathbf{X}=\mathbf{x}},P_{\theta}(\cdot|\mathbf{x})) + \mathbb{E}_{\mathbf{x}\sim Q_{\mathbf{X}}} D_{\rm TV}(Q_{Y|\mathbf{X}=\mathbf{x}},P_{\theta}(\cdot|\mathbf{x})))^{\frac{1}{4}},
    \end{split}
\end{equation}
where $D_{\rm TV}(\cdot,\cdot)$ is the total variation distance, and $ \widehat{H}=\max_{\mathbf{x}\in\mathcal{X}} [H(Q_{Y|\mathbf{X}=\mathbf{x}})+H(P_{\theta}(\cdot|\mathbf{x}))]$.
\end{corollary}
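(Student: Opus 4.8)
The plan is to derive Corollary \ref{thm:emid_bound_simple_v2} directly from Theorem \ref{thm:emid_bound_simple_appendix} by re-expressing the output-discrepancy term $8\Delta^{\frac{1}{4}}$ in a more interpretable form. The first summand of the target bound, $\widehat{H}\big(D^{\frac{1}{2}}_{\rm JS}(P_{X_v}\|Q_{X_v}) + D^{\frac{1}{2}}_{\rm JS}(P_{X_t}\|Q_{X_t})\big)$, is already identical to the corresponding term in Theorem \ref{thm:emid_bound_simple_appendix}, so no work is needed there; all the effort goes into controlling $\Delta$.

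First I would apply Theorem \ref{thm:emid_bound_simple_appendix} with the tightest admissible constants, namely $\delta_P := D_{\rm JS}(P_{Y_\theta}\|P_Y)$ and $\delta_Q := D_{\rm JS}(Q_{Y_\theta}\|Q_Y)$. These satisfy the theorem's hypotheses with equality and hence require no extra assumption, which is what makes the corollary unconditional. With this choice $\Delta = D_{\rm JS}(P_{Y_\theta}\|P_Y) + D_{\rm JS}(Q_{Y_\theta}\|Q_Y)$, so the remaining task is simply to bound these two Jensen--Shannon divergences by the expected conditional total-variation distances appearing in \eqref{eq:emid_bound_simple_v2}.

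Next, for each distribution I would pass from JS divergence to total variation using $D_{\rm JS}\le D_{\rm TV}$ (the same inequality invoked inside the proof of Corollary \ref{thm:cond_entropy_diff_bound-2.0}), and then collapse the aggregated TV distance into an expectation of conditional TV distances exactly as in the first lines of the proof of Lemma \ref{TV_KL}. Concretely, since $P_{Y_\theta}=\mathbb{E}_{\mathbf{x}\sim P_{\mathbf{X}}}P_\theta(\cdot|\mathbf{x})$ and $P_Y=\mathbb{E}_{\mathbf{x}\sim P_{\mathbf{X}}}P_{Y|\mathbf{X}=\mathbf{x}}$, pulling the absolute value inside the expectation over $\mathbf{x}$ gives $D_{\rm TV}(P_{Y_\theta},P_Y)\le\mathbb{E}_{\mathbf{x}\sim P_{\mathbf{X}}}D_{\rm TV}(P_{Y|\mathbf{X}=\mathbf{x}},P_\theta(\cdot|\mathbf{x}))$, and the analogous bound holds for $Q$. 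Summing the two yields $\Delta \le \mathbb{E}_{\mathbf{x}\sim P_{\mathbf{X}}}D_{\rm TV}(P_{Y|\mathbf{X}=\mathbf{x}},P_\theta(\cdot|\mathbf{x})) + \mathbb{E}_{\mathbf{x}\sim Q_{\mathbf{X}}}D_{\rm TV}(Q_{Y|\mathbf{X}=\mathbf{x}},P_\theta(\cdot|\mathbf{x}))$.

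Finally, because $t\mapsto 8t^{\frac{1}{4}}$ is monotone increasing on $[0,\infty)$, I would substitute this upper bound on $\Delta$ into the term $8\Delta^{\frac{1}{4}}$ inherited from Theorem \ref{thm:emid_bound_simple_appendix}, which immediately produces \eqref{eq:emid_bound_simple_v2}. There is no genuine difficulty here, as the corollary is a reformulation rather than a new estimate; the only points demanding care are the bookkeeping ones, namely verifying that setting $\delta_P,\delta_Q$ equal to the JS divergences is legitimate so that the statement needs no hypothesis, and confirming that the monotonicity of $t^{\frac{1}{4}}$ justifies replacing $\Delta$ by its upper bound inside the quarter-power.
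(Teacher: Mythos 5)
Your proposal is correct and follows essentially the same route as the paper: instantiate Theorem \ref{thm:emid_bound_simple_appendix} with $\delta_P, \delta_Q$ equal to the actual Jensen--Shannon divergences, bound each by the corresponding expected conditional total-variation distance, and absorb the result into the $8\Delta^{\frac{1}{4}}$ term. The only cosmetic difference is the order of operations---the paper first passes to conditional JS divergences via mixture convexity (which it overstates as an equality) and then applies $D_{\rm JS}\leq D_{\rm TV}$ conditionally, whereas you apply $D_{\rm JS}\leq D_{\rm TV}$ at the marginal level and then use convexity of $D_{\rm TV}$ over mixtures, which is if anything slightly cleaner; both yield the identical bound $\Delta \leq \mathbb{E}_{\mathbf{x}\sim P_{\mathbf{X}}} D_{\rm TV}(P_{Y|\mathbf{X}=\mathbf{x}},P_{\theta}(\cdot|\mathbf{x})) + \mathbb{E}_{\mathbf{x}\sim Q_{\mathbf{X}}} D_{\rm TV}(Q_{Y|\mathbf{X}=\mathbf{x}},P_{\theta}(\cdot|\mathbf{x}))$.
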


\begin{proof}
Let $P_{Y_{\theta}}=\mathbb{E}_{\mathbf{x}\sim P_{\mathbf{X}}} P_{{\theta}}(\cdot|\mathbf{x})$ and $Q_{Y_{\theta}}=\mathbb{E}_{\mathbf{x}\sim Q_{\mathbf{X}}} P_{{\theta}}(\cdot|\mathbf{x})$, then $D_{\rm JS}(\cdot|\cdot)\leq D_{\rm TV}(\cdot,\cdot)$ allow us to induce below,
\begin{equation}\label{ineq::delta}
\begin{split}
   &  D_{\rm JS}(P_{Y_{{\theta}}}\|P_{Y}) =  \mathbb{E}_{\mathbf{x}\sim P_{\mathbf{X}}}[D_{\rm JS}(P_{\theta}(\cdot|\mathbf{x})\|P_{Y|\mathbf{X}=\mathbf{x}})] \leq \mathbb{E}_{\mathbf{x}\sim P_{\mathbf{X}}} [D_{\rm TV}(P_{Y|\mathbf{X}=\mathbf{x}},P_{\theta}(\cdot|\mathbf{x}))],
   \\ & D_{\rm JS}(Q_{Y_{{\theta}}}\|Q_{Y}) = \mathbb{E}_{\mathbf{x}\sim Q_{\mathbf{X}}}[D_{\rm JS}(P_{\theta}(\cdot|\mathbf{x})\|Q_{Y|\mathbf{X}=\mathbf{x}})]\leq \mathbb{E}_{\mathbf{x}\sim Q_{\mathbf{X}}} [D_{\rm TV}(Q_{Y|\mathbf{X}=\mathbf{x}},P_{\theta}(\cdot|\mathbf{x}))].
   \end{split}
\end{equation}
Noting that $a^{\frac{1}{4}}+b^{\frac{1}{4}}\leq 2(a+b)^{\frac{1}{4}}$ for $a,b \geq 0$, plugging the above inequality into the Theorem \ref{thm:emid_bound_simple_appendix} complete the proof.
\end{proof}
Although this alternative upper bound is looser than Theorem \ref{thm:emid_bound_simple_appendix}, Corollary \ref{thm:emid_bound_simple_v2} is more interpretable in the sense that it directly represents the model-specific discrepancy terms via distance between model output distribution and true conditional distributions, rather than expresses it through marginal distribution terms in $D_{\rm JS}(P_{Y_{\theta}}||P_{Y})$ and $D_{\rm JS}(Q_{Y_{\theta}}||Q_{Y})$. Therefore, as our model becomes more accurate at modeling the ground truth conditional distribution of $Y$ given $\mathbf{X}$, the EMID mainly depends on the divergence between the marginal distributions of visual and text inputs.

\subsection{Discussion}\label{appendix:thm:discussion}
We made the $\epsilon$-representation capacity assumption to derive Lemma~\ref{Main-thm1-lemma} and Theorem~\ref{Main-thm1-thm} that claim the theoretical justification for EMI by showing its connection to a classic preference model. The assumption captures a minimum achievable discrepancy between the truth distribution $P_{Y|\mathbf{X}}$ and the model's distribution $P_{\theta}(\cdot|\mathbf{x})$. As the models become more expressive--e.g., by increasing model size~\cite{kaplan2020scaling} and leveraging advanced positional encoding~\cite{luo2022your}--MLLM approaches the universal approximator of sequence-to-sequence mapping~\cite{luo2022your,furuya2024transformers}, as a result, the minimum expected discrepancy tends to decrease, leading to a smaller $\epsilon$.

Meanwhile, EMI and EMID trigger flexible potential use cases given their generality. For example, they can be used as metrics to evaluate a pure LLM as well as a multimodal LLM, depending on the types of targeted problems. Moreover, although we confined our analysis on $I(P_{\mathbf{X}Y})$, the chain rule of the mutual information~\cite{cover1999elements} allows us to conduct partial modality EMI analysis through $I(P_{\mathbf{X}Y})=0.5I(P_{X_vY})+0.5I(P_{X_{t}Y})+0.5I(P_{X_tY|X_v})+0.5I(P_{X_vY|X_{t}})$, where we can decompose the aggregated EMI into the modality-specific EMI terms and modality interaction (conditional) EMI terms.

\end{document}